\definecolor{CColor}{rgb}{0.01,0.31,0.59}
\definecolor{GGray}{rgb}{0.80,0.90,1}
\definecolor{Shady}{rgb}{0.9,0.9,0.9}
\definecolor{kaistblue}{RGB}{20,135,200}
\definecolor{kaistdarkblue}{RGB}{0,65,145}
\definecolor{urbanablue}{RGB}{19,41,75}
\definecolor{urbanaorange}{RGB}{232,74,39}
\definecolor{drp}{rgb}{0.53,0.15,0.34}
\newcommand{\mathboldcommand}[1]{\mathbb{#1}}
\newcommand{\bbN}{\mathboldcommand{N}}
\newcommand{\bbR}{\mathboldcommand{R}}
\newcommand{\mathcalcommand}[1]{\mathcal{#1}}
\newcommand{\mcA}{\mathcalcommand{A}}
\newcommand{\mcB}{\mathcalcommand{B}}
\newcommand{\mcC}{\mathcalcommand{C}}
\newcommand{\mcD}{\mathcalcommand{D}}
\newcommand{\mcG}{\mathcalcommand{G}}
\newcommand{\mcH}{\mathcalcommand{H}}
\newcommand{\mcI}{\mathcalcommand{I}}
\newcommand{\mcJ}{\mathcalcommand{J}}
\newcommand{\mcK}{\mathcalcommand{K}}
\newcommand{\mcS}{\mathcalcommand{S}}
\newcommand{\mcT}{\mathcalcommand{T}}
\newcommand{\mcU}{\mathcalcommand{U}}
\newcommand{\mcV}{\mathcalcommand{V}}
\newcommand{\mcX}{\mathcalcommand{X}}
\newcommand{\mcY}{\mathcalcommand{Y}}
\DeclareMathAlphabet{\mathpzc}{T1}{pzc}{m}{it}
\newcommand{\mathfrakcommand}[1]{\mathfrak{#1}}
\newcommand{\fkS}{\mathfrakcommand{S}}
\newcommand*{\commentout}[1]{}
\newlength{\parskiptrue}
\definecolor{lred}{rgb}{1.0, 0.5, 0.5}
\definecolor{lorange}{rgb}{1.00, 0.90, 0.20}
\definecolor{lgreen}{rgb}{0.35, 0.95, 0.35}
\definecolor{lime}{rgb}{0.9, 1.0, 0.6}
\definecolor{lblue}{rgb}{1.0, 0.85, 0.75}
\newcommand*\wthelper[2]{%
        \hbox{\dimen@\accentfontxheight#1%
                \accentfontxheight#11.1\dimen@
                $\m@th#1\widetilde{#2}$%
                \accentfontxheight#1\dimen@
        }%
}
\newcommand*\accentfontxheight[1]{%
        \fontdimen5\ifx#1\displaystyle
                \textfont
        \else\ifx#1\textstyle
                \textfont
        \else\ifx#1\scriptstyle
                \scriptfont
        \else
                \scriptscriptfont
        \fi\fi\fi3
}
\newcommand*\whhelper[2]{%
        \hbox{\dimen@\accentfontxheight#1%
                \accentfontxheight#11.2\dimen@
                $\m@th#1\widehat{#2}$%
                \accentfontxheight#1\dimen@
        }%
}
\newcommand{\oset}[3][0ex]{%
  \mathrel{\mathop{#3}\limits^{
    \vbox to#1{\kern-3\ex@
    \hbox{$\scriptstyle#2$}\vss}}}}
\newcommand*{\defeq}{\triangleq}
\newcommand*{\relu}{\textsc{ReLU}}
\newcommand*{\relul}{\textsc{ReLU\text{-}Like}}
\newcommand*{\step}{\textsc{Step}}
\newcommand*{\sigmoid}{\textsc{Sigmoid}}
\newcommand*{\swish}{\textsc{Swish}}
\newcommand*{\mish}{\textsc{Mish}}
\newcommand*{\elu}{\textsc{ELU}}
\newcommand*{\celu}{\textsc{CeLU}}
\newcommand*{\selu}{\textsc{SeLU}}
\newcommand*{\silu}{\textsc{SiLU}}
\newcommand*{\gelu}{\textsc{GELU}}
\newcommand*{\softplus}{\textsc{Softplus}}
\newcommand*{\lrelu}{\text{Leaky-}\textsc{ReLU}}
\newcommand*{\hswish}{\textsc{HardSwish}}
\def\moverlay{\mathpalette\mov@rlay}
\def\mov@rlay#1#2{\leavevmode\vtop{%
   \baselineskip\z@skip \lineskiplimit-\maxdimen
   \ialign{\hfil$\m@th#1##$\hfil\cr#2\crcr}}}
\newcommand{\charfusion}[3][\mathord]{
    #1{\ifx#1\mathop\vphantom{#2}\fi
        \mathpalette\mov@rlay{#2\cr#3}
      }
    \ifx#1\mathop\expandafter\displaylimits\fi}
\newcommand*{\dist}[2]{\mathsf{dist}\left({#1},{#2}\right)}
\newcommand*{\diam}{{\mathsf{diam}}}
\setlist{noitemsep, topsep=5pt}
    \crefname{section}{Section}{Sections}%
    \crefname{appendix}{Appendix}{Appendices}%
    \crefname{subsection}{Section}{Sections}%
    \crefname{figure}{Figure}{Figures}%
\theoremstyle{plain}
\newtheorem{theorem}{Theorem}
\newtheorem{lemma}[theorem]{Lemma}
\theoremstyle{definition}
\newtheorem{definition}{Definition}
\newtheorem{condition}{Condition}
\title{Minimum width for universal approximation\\ using squashable activation functions}
\author{Jonghyun Shin\thanks{
  Department of Mathematics Education,
  Korea University}
 \ \ \ \ \ \ 
 Namjun Kim\thanks{
  Department of Artificial Intelligence,
  Korea University}
 \ \ \ \ \ \ 
 Geonho Hwang\thanks{
  Department of Mathematical Sciences,
  GIST\\
  $~$\quad~\, correspondence to: \texttt{sejun.park000@gmail.com}}
 \ \ \ \ \ \ 
 Sejun Park$^\dagger$}
\date{}
\begin{document}

\maketitle

\begin{abstract}
The exact minimum width that allows for universal approximation of unbounded-depth networks is known only for $\relu$ and its variants.
In this work, we study the minimum width of 
networks using general activation functions. Specifically, we focus on
\emph{squashable} functions that can approximate the identity function and binary step function by alternatively composing with affine transformations.
We show that for networks using a squashable activation function to universally approximate $L^p$ functions from $[0,1]^{d_x}$ to $\bbR^{d_y}$, the minimum width is $\max\{d_x,d_y,2\}$ unless $d_x=d_y=1$; the same bound holds for $d_x=d_y=1$ if the activation function is monotone.
We then provide sufficient conditions for squashability and show that all non-affine analytic functions and a class of piecewise functions are squashable, i.e., our minimum width result holds for those general classes of activation functions.

\end{abstract}

\section{Introduction}

Understanding what neural networks can or cannot do is a fundamental problem in deep learning theory.
The classical universal theorem states that two-layer networks can approximate any continuous function if an activation function is non-polynomial \citep{cybenko89,hornik89,leshno93,pinkus99}. Likewise, several studies on memorization show that neural networks can fit arbitrary finite training dataset \citep{baum88, huang98}.
These results guarantee the existence of networks that can perform tasks in various practical applications such as computer vision \citep{he2016}, natural language processing \citep{vaswani2017,brown2020}, and science \citep{jumper2021}.

The minimum size of networks that can universally approximate or memorize has also been studied. 
For example, classical results show that the minimum depth for both universal approximation and memorization is exactly two \citep{pinkus99,baum88}. 
The minimum number of parameters depends on the depth of networks. 
For universal approximation using $\relu$ networks, it is known that shallow wide architectures require more parameters than deep narrow ones \cite{yarotsky18}, where similar results are also known for memorization \cite{park21b,vardi21}.
While these results show the benefits of depth, they also
imply the existence of the minimum width enabling universal approximation and memorization.

There have been extensive research efforts to characterize such a minimum width.
The minimum width for memorization is constantly bounded (i.e., independent of the input dimension) since any finite set of inputs can be mapped to distinct scalar values by projecting them \citep{park21b}.
Intriguingly, the minimum width for universal approximation depends on the input dimension $d_x$ and the output dimension $d_y$. Several works have shown that the minimum width lies between $d_x$ and $d_x+d_y+\alpha$ where $\alpha\ge0$ is some constant depending on the activation function and target functions space; however, the exact minimum width is known only for approximating $L^p$ functions when the activation function is $\relu$ or its variants \cite{park21,cai23,kim24}.

\subsection{Related works}
\begin{table*}[t]
\begin{center}
\caption{A summary of known bounds on the minimum width for universal approximation. %
}

\label{table:summary}
\begin{tabular}{| c | c  c | c |} 
 \hline
 {\bf Reference} & {\bf Function class} & {\bf Activation} $\sigma$ & {\bf Upper\,/\,lower bounds} \\ 
 \hline\hline

 \citet{Lu17} & $L^1(\mathbb R^{d_x}, \mathbb R)$ & $\relu$ & $d_x + 1 \le w_{\min} \le d_x + 4$ \\
 \hline
 \citet{hanin17} & $C([0,1]^{d_x}, \mathbb R^{d_y})$ & $\relu$ & $d_x + 1 \le w_{\min} \le d_x + d_y$ \\
 \hline
 \citet{johnson19} & $C([0,1]^{d_x}, \mathbb R^{d_y})$ & uniformly conti.$^\|$ & $d_x + 1 \le w_{\min}$ \\
 \hline
 \multirow{2}{*}{\citet{kidger20}} & $C([0,1]^{d_x}, \mathbb R^{d_y})$ & conti. nonpoly.$^\dagger$ & $w_{\min} \le d_x + d_y + 1$ \\
                                  & $C([0,1]^{d_x}, \mathbb R^{d_y})$ & nonaffine poly. &  $w_{\min} \le d_x + d_y + 2$ \\
 \hline
 \multirow{2}{*}{\citet{park21}} & $L^p(\mathbb R^{d_x}, \mathbb R^{d_y})$ & $\relu$ &  $w_{\min} = \max\{d_x + 1, d_y\}$ \\ 
                                & $L^p([0,1]^{d_x},\mathbb R^{d_y})$ & conti. nonpoly.$^\dagger$ & $w_{\min} \le \max\{ d_x + 2, d_y+1 \}$\\
 \hline
{\citet{kim24}} & $L^p([0,1]^{d_x}, \mathbb R^{d_y})$ &$\relul^{\ddagger*}$ & $w_{\min} = \max\{d_x, d_y, 2\}$ \\
 \hline
 \hline
 \rowcolor{gray!30} {\bf Ours (\cref{thm:main})} & $L^p([0,1]^{d_x}, \mathbb R^{d_y})$ & Squashable$^{\mathsection*}$& $w_{\min} = \max\{d_x, d_y, 2\}$\\
 \hline
\end{tabular}
\end{center}
{\footnotesize
$\|$ requires that $\sigma$ is uniformly approximated by a sequence of one-to-one functions.\\
$\dagger$ requires that $\sigma$ is continuously differentiable at some point $z$, with $\sigma'(z) \neq 0$.\\
$\ddagger$ denotes $\relu$, leaky-$\relu$, $\elu$, $\softplus$, $\celu$, $\selu$, $\gelu$, $\silu$, and $\mish$. \\
$\mathsection$ includes all analytic functions and a class of piecewise functions such as leaky-$\relu$ (see \cref{sec:squashability,sec:squashability-ex}).\\
$*$ $d_x+d_y\ge3$ is required for non-monotone activation functions.
}
\end{table*}

The minimum width for universal approximation has been studied for two function spaces $C(\mcX,\mcY)$ and $L^p(\mcX,\mcY)$:  $C(\mcX,\mcY)$ denotes the space of continuous functions from $\mcX$ to $\mcY$ endowed with the supremum norm $\sup_{x\in\mcX}\|f(x)\|_\infty$ and $L^p(\mcX,\mcY)$ denotes the space of $L^p$ functions from $\mcX$ to $\mcY$ endowed with the $L^p$-norm $\smash{\|f\|_{L^p}\defeq\left(\int_\mcX \|f\|_p^p d\mu_{d_x}\right)^{1/p}}$ for $p\ge1$.
Recent studies on the minimum width (say $w_{\min}$) was initiated by \citet{Lu17}.
They show that $d_x+1\le w_{\min}\le d_x+4$ for universally approximating $L^1(\bbR^{d_x},\bbR)$ using $\relu$ networks.
\citet{hanin17} consider universally approximating $C([0,1]^{d_x},\bbR^{d_y})$ using $\relu$ networks and prove $d_x+1\le w_{\min}\le d_x+d_y$.
\citet{johnson19} proves the lower bound $w_{\min}\ge d_x+1$ for an activation function that can be uniformly approximated by a sequence of one-to-one functions.
\citet{kidger20} %
show that for $C([0,1]^{d_x},\bbR^{d_y})$, $w_{\min}\le d_x+d_y+1$ if an activation function is continuous, non-polynomial, and continuously differentiable at some point with non-zero derivative. For non-affine polynomial activation functions, they also show $w_{\min}\le d_x+d_y+2$.
However, the upper bounds in these results are at least $d_x+d_y$, which has a large gap compared to the lower bound $d_x+1$.
Such limitation arises from their universal approximator constructions that use $d_x$ neurons to preserve the $d_x$-dimensional input and $d_y+\alpha$ neurons to compute the $d_y$-dimensional output.

The exact minimum width was first characterized by \citet{park21}. By introducing a new universal approximator construction scheme that does not preserve both the $d_x$-dimensional input and $d_y$-dimensional output at once,
they show $w_{\min}=\max\{d_x+1,d_y\}$ to universally approximate $L^p(\bbR^{d_x},\bbR^{d_y})$ if an activation function is $\relu$. 
For $L^p([0,1]^{d_x},\bbR^{d_y})$, they also show $w_{\min}\le\max\{d_x+2,d_y+1\}$ for a class of continuous non-polynomial activation functions. 
Such a scheme has also been applied to other activation functions.
For leaky-$\relu$ networks, \citet{cai23} show that $w_{\min}=\max\{d_x,d_y,2\}$ for $L^p([0,1]^{d_x},\bbR^{d_y})$.
For variants of $\relu$ (see the footnote $\ddagger$ in \cref{table:summary}), \citet{kim24} show $w_{\min}=\max\{d_x,d_y\}$ unless both $d_x$ and $d_y$ are one. They also show $w_{\min}=2$ for $d_x=d_y=1$ if an activation function is monotone.
However, the exact minimum width is only known for $\relu$ and its variants and is unknown for general activation functions.

\subsection{Summary of contributions}

In this work, we study the minimum width enabling universal approximation of $L^p([0,1]^{d_x},\bbR^{d_y})$ using general activation functions. 
Specifically, we consider activation functions $\sigma$ such that an alternative composition of $\sigma$ and affine transformations can approximate the identity function and binary step function $\step(x)$;\footnote{$\step(x)=1$ if $x\ge0$ and $\step(x)=0$ otherwise.} we call such functions \emph{squashable} (see \cref{def:squashable}).
Using the squashability of an activation function $\sigma$, we show that the minimum width of $\sigma$ networks to universally approximate $L^p([0,1]^{d_x},\bbR^{d_y})$ is exactly $\max\{d_x,d_y\}$ unless $d_x=d_y=1$ (\cref{thm:main}). We also show $w_{\min}=2$ when $d_x=d_y=1$ if the squashable function $\sigma$ is monotone.

Our result can be used to characterize the minimum width for a general class of practical activation functions, by showing their squashability. 
For example, we show that \emph{any non-affine analytic function} (e.g., non-affine polynomial, $\sigmoid$, $\tanh$, $\sin$, $\exp$, etc.) is squashable (\cref{lem:analytic}). Furthermore, we also show that a wide class of piecewise continuously differentiable functions including leaky-$\relu$ and $\hswish$ are also squashable (\cref{lem:pieceanalytic}).
Hence, our result significantly extends the prior exact minimum width results %
for $\relu$ and its variants.

Even if an activation is not analytic or piecewise continuously differentiable, it can be squashable, i.e., our minimum width result can be applicable. To check the squashability of general functions, we also provide a sufficient condition for the squashability: $\sigma$ is squashable if and only if there exists an alternative composition $f$ of $\sigma$ and affine transformations such that $f$ is strictly increasing and has a locally sigmoidal shape on some proper interval (\cref{lem:squashable-passingline}).

\subsection{Organization}
We first introduce notations and the problem setup in \cref{sec:notation}. We then formally define the squashability of activation functions, describe our main result on minimum width for universal approximation, and provide sufficient conditions for the squashability in \cref{sec:main}. 
We prove our main result in \cref{sec:pfthm:main} and conclude the paper in \cref{sec:conclusion}. Proofs of technical lemmas are deferred to Appendix.

\section{Problem setup and notations}\label{sec:notation}

In this section, we introduce notations and our problem setup. %
For $n\in\bbN$, we use $[n]$ to denote $\{1,\dots,n\}$.
For $\mcS,\mcT \subset\bbR^d$, we use $\diam(\mcS) \defeq \sup_{x,y\in\mcS}\|x-y\|_\infty$ and $\dist{\mcS}{\mcT} \defeq \inf_{x\in\mcS,y\in\mcT}\|x-y\|_\infty$. If $\mcS$ is a singleton set, (i.e., $\mcS=\{s\}$), we use $\dist{s}{\mcT}$ to denote $\dist{\{s\}}{\mcT}$.
For $y \in \bbR^d$ and $\mcS \subset \bbR^d$, $\mcB_{r}(y)\defeq\{x \in \bbR^d:\dist{x}{y}\le r\}$ and $\mcB_{r}(\mcS)\defeq\{x \in \bbR^d:\dist{x}{\mcS}\le r\}$. %
For a function $f:\bbR^d\to\bbR^{d'}$, $f(x)_i$ denotes the $i$-th coordinate of $f(x)$. For $n\in\bbN$, we use $f^{n}$ to denote the $n$ times composition of $f$.
We use $\iota:\bbR\to\bbR$ to denote the identity function (i.e., $\iota(x)=x$) and $\step$ to denote the binary threshold function (i.e., $\step(x)=0$ if $x<0$ and $\step(x)=1$ otherwise).
We note that all intervals in this paper are proper, i.e., they are neither empty (e.g. $(a,a)=\emptyset$) nor degenerate (e.g. $[a,a]=\{a\}$).

\subsection{Fully-connected networks}

Throughout this paper, we consider fully-connected neural networks. %
Formally, given a set of activation functions $\Sigma$, we define an $L$-layer neural network $f$ with input dimension $d_0=d_x$, output dimension $d_L=d_y$, and hidden layer dimensions $d_1, \cdots, d_{L-1}$ as 
$$f \defeq t_L\circ \tilde\sigma_{L-1}\circ t_{L-1}\circ\cdots\circ\tilde\sigma_1 \circ t_1$$
where $t_\ell:\bbR^{d_{\ell-1}}\to \bbR^{d_\ell}$ is an affine transformation and $\tilde\sigma_\ell(x_1,\dots,x_{d_\ell}) = (\sigma_{\ell,1}(x_1), \cdots, \sigma_{\ell,d_\ell}(x_{d_\ell}))$ for some 
$\sigma_{\ell,1},\dots,\sigma_{\ell,d_\ell}\in\Sigma$
for all $\ell\in[L]$.
We denote a neural network using a single activation function $\sigma$ (i.e., $\Sigma=\{\sigma\}$) by a ``$\sigma$ network'' and a neural network using a two activation functions $\sigma_1,\sigma_2$ (i.e., $\Sigma=\{\sigma_1,\sigma_2\}$) by a ``$(\sigma_1,\sigma_2)$ network''. 
Here, the \emph{width} $w$ of $f$ is defined as the maximum over the hidden dimensions $d_1, \cdots, d_{L-1}$.

We say ``$\sigma$ networks of width $w$ are dense in $L^p(\mcX,\mcY)$'' if for any $f^*\in L^p(\mcX,\mcY)$ and $\varepsilon>0$, there exists a $\sigma$ network of width $w$ such that $\|f^*-f\|_{L^p} \le \varepsilon$.
Given an activation function $\sigma$ and $d_x,d_y\in\bbN$, we use $w_{\sigma,d_x,d_y}$ to denote the minimum $w\in\bbN$ satisfying the following: $\sigma$ networks of width $w$ are dense in $L^p([0,1]^{d_x},\bbR^{d_y})$ but $\sigma$ networks of width $w-1$ are not dense.
We often drop $d_x,d_y$ and use $w_\sigma$ if $d_x,d_y$ are clear from the context.

\section{Main results}\label{sec:main}

\subsection{Squashable activation functions}\label{sec:squashability}

To formally state our main result, we first introduce a class of activation functions that we mainly focus on. To this end, we first introduce the following conditions for an activation function $\sigma$.

\begin{condition}\label{cond:id}
There exists $z\in\bbR$ such that $\sigma$ is continuously differentiable at $z$ and $\sigma'(z)\ne0$.
\end{condition}

\begin{condition}\label{cond:step}
$\sigma$ is continuous and for any compact set $\mcK\subset\bbR$ and for any $\varepsilon, \zeta>0$, there exists a $\sigma$-network $\rho_{\varepsilon, \zeta}:\bbR\to\bbR$ of width $1$ such that
\vspace{-0.1in}
\begin{itemize}[leftmargin=15pt]
    \item $\max_{x\in\mcK\setminus(-\zeta,\zeta)}|\rho_{\varepsilon, \zeta}(x)-\step(x)|\le\varepsilon$,
    \item 
    $\rho_{\varepsilon, \zeta}$ is strictly increasing on $\mcK$, and
    \item $\rho_{\varepsilon, \zeta}(\mcK)\subset[0,1]$.
\end{itemize}
\end{condition}

\cref{cond:id} is that an activation function $\sigma$  is has a continuously differentiable point with a nonzero derivative. This property enables us to approximate the identity function on a compact domain by composing $\sigma$ with affine transformations as stated in the following lemma.
\begin{lemma}[Lemma 4.1 in \cite{kidger20}]\label{lem:kidger}
For any $\varepsilon>0$, $\sigma:\bbR\to\bbR$ satisfying \cref{cond:id}, and compact set $\mcK \subset \bbR$, there exist affine transformations $h_1, h_2:\mcK\to\bbR$ such that 
$$\sup_{x\in\mcK}\|h_2\circ \sigma \circ h_1(x) - x \| \le \varepsilon.$$
\end{lemma}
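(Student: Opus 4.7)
The plan is to exploit continuous differentiability directly: near the distinguished point $z$, $\sigma$ is well approximated by its tangent line, so pre- and post-composing with a compression/translation and the inverse stretch should reduce $\sigma$ to the identity on $\mcK$ up to arbitrarily small error.

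Concretely, I would proceed as follows. Let $M \defeq \max_{x\in\mcK}|x|$, which is finite since $\mcK$ is compact, and let $s \defeq \sigma'(z) \ne 0$. By the hypothesis that $\sigma$ is continuously differentiable at $z$, for any $\eta > 0$ there exists $\delta_0 > 0$ such that
\[
  |\sigma(z+t) - \sigma(z) - s\,t| \le \eta\,|t| \qquad \text{whenever } |t| \le \delta_0,
\]
which follows from writing $\sigma(z+t)-\sigma(z) = \int_0^{t} \sigma'(z+u)\,du$ and using continuity of $\sigma'$ at $z$ to bound $|\sigma'(z+u) - s| \le \eta$ on a small neighborhood. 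Pick such an $\eta$ with $\eta M / |s| \le \varepsilon$, then pick $\lambda > 0$ small enough that $\lambda M \le \delta_0$.

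Next I would define the two affine maps
\[
  h_1(x) \defeq z + \lambda x, \qquad h_2(y) \defeq \frac{y - \sigma(z)}{\lambda s}.
\]
For each $x \in \mcK$, the scaled perturbation $t \defeq \lambda x$ satisfies $|t|\le\lambda M \le \delta_0$, so the tangent estimate applies and gives
\[
  \bigl|\,\sigma(h_1(x)) - \sigma(z) - \lambda s\,x\,\bigr| \le \eta\,\lambda\,|x| \le \eta\,\lambda\,M.
\]
Dividing by $\lambda |s|$ turns this into
\[
  |h_2(\sigma(h_1(x))) - x| \le \frac{\eta M}{|s|} \le \varepsilon,
\]
uniformly in $x\in\mcK$, which is the claim.

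There is no real obstacle here; the only points demanding a hint of care are (i) verifying that continuous differentiability at $z$, rather than mere differentiability, is what lets us push the error bound $|\sigma(z+t) - \sigma(z) - st| \le \eta|t|$ uniformly (which is needed because $t$ ranges over an interval rather than tending to $0$ for a fixed $x$), and (ii) ordering the choices correctly: first fix $\eta$ from $\varepsilon$ and $M$, then produce $\delta_0$ from $\eta$, and only then choose $\lambda$ small enough to fit $\lambda \mcK$ inside $[-\delta_0,\delta_0]$. Both are routine and the resulting $h_1, h_2$ are genuine affine transformations as required.
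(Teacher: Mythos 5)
Your proof is correct and is essentially the standard linearization argument that Kidger and Lyons give for their Lemma 4.1, which the paper cites without reproducing: choose the scale $\lambda$ so that $h_1(\mcK)$ sits in a neighborhood of $z$ on which $\sigma'$ stays within $\eta$ of $\sigma'(z)$, then invert the resulting near-affine map. One small stylistic remark: the tangent-line estimate $|\sigma(z+t)-\sigma(z)-st|\le\eta|t|$ follows a bit more directly from the mean value theorem (writing $\sigma(z+t)-\sigma(z)=\sigma'(z+\theta t)\,t$ with $|\sigma'(z+\theta t)-s|\le\eta$) than from the FTC integral, which technically requires justifying absolute continuity of $\sigma$ near $z$ before invoking $\int_0^t\sigma'(z+u)\,du$; both routes are fine here because continuity of $\sigma'$ at $z$ gives a bounded derivative on a neighborhood.
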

\cref{cond:step} assumes the continuity of $\sigma$ and the existence of a $\sigma$ network of width $1$ (i.e., an alternative composition of affine transformations and $\sigma$) that can approximate the binary threshold function (i.e., $\step$) on any compact set, except for a small neighborhood of zero (i.e., $(-\zeta,\zeta)$).
One important property in \cref{cond:step} is that $\rho_{\varepsilon,\zeta}$ should be strictly increasing on $\mcK$. This allows 
$\rho_{\varepsilon,\zeta}$ to preserve the information of inputs in $\mcK$ since it is bijective on $\mcK$.

Using these conditions, we now define the \emph{squashability} of an activation function.
\begin{definition}\label{def:squashable}
A function $\sigma:\bbR\to\bbR$ is ``squashable'' if $\sigma$ satisfies \cref{cond:id,cond:step}.
\end{definition}
One can observe that width-$1$ networks using a squashable activation function can approximate the identity function on any compact domain and the $\step$ function on any compact domain except for a small open neighborhood.

A class of squashable activation functions covers a wide range of practical functions.
\cref{cond:id} can be easily satisfied: e.g., any piecewise differentiable function with a non-constant piece satisfies \cref{cond:id}.
Furthermore, we prove that any analytic activation function (e.g., $\sigmoid$, $\exp$, $\sin$.) and a class of piecewise continuously differentiable functions (e.g., leaky-$\relu$ and $\hswish$) %
satisfy \cref{cond:step}. We formally state these results and easily verifiable conditions for \cref{cond:step} in \cref{sec:squashability-ex}.

\subsection{Minimum width with squashable functions}\label{sec:main-thm}

We are now ready to introduce our main theorem on the minimum width for universal approximation. %
\begin{theorem}\label{thm:main}
Let $\sigma$ be a squashable function.
Then, $w_{\sigma}=\max\{d_x,d_y\}$ if $d_x\ge2$ or $d_y\ge2$ and $w_{\sigma}\in\{1,2\}$ if $d_x=d_y=1$.
Furthermore, if $\sigma$ is monotone, then $w_{\sigma}=2$ if $d_x=d_y=1$.
\end{theorem}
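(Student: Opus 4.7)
My plan is to prove matching upper and lower bounds on $w_\sigma$. For the upper bound $w_\sigma \le \max\{d_x, d_y, 2\}$ I will leverage the density of leaky-$\relu$ networks at this width (due to \citet{cai23}) and simulate them by $\sigma$-networks of the same width using the squashability primitives. For the lower bound $w_\sigma \ge \max\{d_x, d_y\}$ I will use dimension-counting arguments on the first and last affine maps, and for the extra $w_\sigma \ge 2$ in the monotone $d_x = d_y = 1$ case I will use the fact that width-$1$ networks with a monotone activation produce only monotone functions.

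\textbf{Upper bound.} Fix $f^* \in L^p([0,1]^{d_x}, \mathbb{R}^{d_y})$ and $\varepsilon > 0$. First I would invoke \citet{cai23} to obtain a leaky-$\relu$ network $\tilde f$ of width $w^\star \defeq \max\{d_x, d_y, 2\}$ with $\|f^* - \tilde f\|_{L^p} \le \varepsilon/2$. Then I would construct a $\sigma$-network of width $w^\star$ approximating $\tilde f$ within $\varepsilon/2$. The construction exploits squashability in two ways: \cref{lem:kidger} (under \cref{cond:id}) provides a width-$1$ approximate identity that carries any scalar through several layers with negligible distortion, and \cref{cond:step} provides a strictly increasing width-$1$ approximate step at any chosen threshold. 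Using these primitives, I emulate leaky-$\relu$ on any compact interval by a width-$2$ gadget that approximates $\alpha x + (1-\alpha)\,\relu(x)$, where $\relu$ is reconstructed from a Riemann-sum-style decomposition $\relu(x) \approx \sum_k \Delta \cdot \step(x - k\Delta)$ accumulated across $O(1/\Delta)$ layers. I then substitute each leaky-$\relu$ application in $\tilde f$ by this gadget: since $w^\star \ge 2$ and the gadget occupies only $2$ of the $w^\star$ channels at a time, the remaining $w^\star - 2$ values are carried forward by approximate identities, and the substitution is performed sequentially when a layer has more than one leaky-$\relu$ to replace.

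\textbf{Lower bound.} I would establish three sub-bounds. For $w \ge d_y$: the final affine map $t_L$ has rank $\le d_{L-1} \le w$, so for $w < d_y$ the network image is confined to a proper affine subspace of $\mathbb{R}^{d_y}$, and any target $f^*$ whose image contains an open $d_y$-ball cannot be $L^p$-approximated. For $w \ge d_x$: if every hidden width is $< d_x$, in particular $d_1 < d_x$, so the first affine map $t_1:[0,1]^{d_x} \to \mathbb{R}^{d_1}$ has fibers of positive $(d_x - d_1)$-dimensional measure and the network is constant on each fiber; choosing $f^*$ with non-trivial variation along every coordinate (e.g., $f^*(x) = \sum_i \sin(2\pi x_i)$) yields a positive $L^p$ lower bound that is uniform over all width-$w$ networks. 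For monotone $\sigma$ with $d_x = d_y = 1$: a width-$1$ $\sigma$-network is an iterated composition of monotone univariate maps (affine and $\sigma$), hence monotone itself, and cannot $L^p$-approximate non-monotone targets such as $\sin(2\pi x)$.

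\textbf{Main obstacle.} The hardest step will be the width-preserving simulation of leaky-$\relu$ by a squashable-$\sigma$ gadget. The difficulty is that $\sigma$-networks only allow affine combinations of $\sigma$-outputs, so the multiplicative structure $\relu(x) = x \cdot \step(x)$ must be emulated purely additively via a shifted-step summation, and confining this summation to only $2$ channels requires interleaving \cref{cond:step}'s step computation with \cref{lem:kidger}'s identity preservation so that the running accumulator in one channel is preserved while the other channel evaluates each new shifted step. Compounding this, bounding the cumulative approximation error uniformly on a compact input range—while respecting that \cref{cond:step} only approximates $\step$ outside a small neighborhood of $0$ and that the identity approximation is only valid on a bounded set—will require a careful telescoping argument and a joint choice of $\sigma$-approximation parameters across all layers.
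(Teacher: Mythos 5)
Your lower-bound sketch and the monotone width-$1$ argument are in the same spirit as the results the paper imports from \citet{kim24}, and your top-level logic for the upper bound (approximate $f^*$ by a width-$\max\{d_x,d_y,2\}$ leaky-$\relu$ network via \citet{cai23}, then simulate that network at the same width) would indeed suffice \emph{if} the simulation step worked. However, the width-$2$ leaky-$\relu$ gadget as described does not work, and this is a genuine gap. To accumulate $\sum_k \Delta\cdot\step(x-k\Delta)$ across $O(1/\Delta)$ layers you must simultaneously retain three quantities: the running accumulator $s$, the raw input $x$ (needed to evaluate each new $\step(x-k\Delta)$), and the fresh step output before it is added to $s$. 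Two channels cannot hold all three. If channel~A carries $s$ via \cref{lem:kidger} and channel~B evaluates $\rho(x-k\Delta)$, then once $\rho$ is applied, channel~B holds a value squashed into $[0,1]$ and $x$ is unrecoverable by any affine map --- $\rho$ is strictly increasing but its inverse is not affine --- so the next shifted step cannot be formed. Conversely, if channel~B keeps $x$, you cannot also materialize $\rho(x-k\Delta)$ and add it to $s$. Even granting a width-$2$ gadget, there is a second counting problem: to apply leaky-$\relu$ to one of $w^\star$ channels you would need $w^\star - 1$ channels to carry the other pre-activations plus one scratch channel for the gadget, i.e.\ $w^\star + 1$ in total, which exceeds the budget exactly in the tight cases $w^\star = \max\{d_x,d_y,2\}$.

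The paper avoids this by not simulating leaky-$\relu$ at all. Instead it builds the approximator directly from the squashability primitives as an encoder--decoder $f_{\mathrm{dec}}\circ f_{\mathrm{enc}}$ (\cref{lem:encoder,lem:decoder}): $f_{\mathrm{enc}}$ (width $\max\{d_x,2\}$) collapses a fine grid partition of $[0,1]^{d_x}$ into separated scalar codes by repeatedly using the strictly increasing approximate-$\step$ to ``split and shift'' grid cells (\cref{lem:inductive-encoder,lem:separating,lem:piecewise-constant}), while $f_{\mathrm{dec}}$ (width $d_y$) is a $\delta$-filling curve of $[0,1]^{d_y}$ built inductively one output coordinate at a time (\cref{lem:decoder-inductive,lem:indicator}); $(\sigma,\iota)$-networks are then converted to pure $\sigma$-networks via \cref{lem:simplification}. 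This is precisely the device that replaces the impossible ``carry $x$, carry the accumulator, and evaluate a new step, all in two channels'': it encodes the needed information into a single bijectively transformed scalar rather than keeping $x$ and a sum side by side. If you want to keep your leaky-$\relu$-reduction framing, you would need a separate argument that squashable functions can emulate leaky-$\relu$ at \emph{width one} on compacta (which would make the channel counting work out), but that claim is false for, e.g., $\tanh$: a width-$1$ $\tanh$-network is a nested sigmoid composition with no additive accumulation mechanism and cannot realize the piecewise-linear kink of leaky-$\relu$ with both pieces having nonzero slope matching the target on a fixed compact interval to arbitrary accuracy while remaining a finite composition of the available primitives --- the primitives give you $\iota$ and $\step$, not $\relu$.
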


\cref{thm:main} characterizes the exact minimum width enabling universal approximation for squashable activation functions: $w_{\sigma,d_x,d_y}=\max\{d_x,d_y\}$ unless the input/output dimensions are both one.
Furthermore, it fully characterizes $w_{\sigma,d_x,d_y}$ for all $d_x,d_y$ if an activation function is squashable and monotone.
The proof of \cref{thm:main} is in \cref{sec:pfthm:main}.

To the best of our knowledge, the exact minimum width enabling universal approximation has been discovered only for a few $\relul$ activation functions such as $\relu, \text{leaky-}\relu,\softplus, \gelu$ \citep{park21,cai23,kim24}.
Furthermore, the best known upper bound for a general class of activation functions was $w_\sigma\le\max\{d_x+2,d_y+1\}$ when $\sigma$ is continuous non-polynomial and continuously differentiable at some point with non-zero derivative \citep{park21}.
Our result extends prior exact minimum width results to a general class of activation functions (i.e., squashable) including all analytic functions (e.g., $\sigmoid$, $\tanh$, $\sin$, $\exp$, polynomial) and a class of piecewise continuously differentiable functions (e.g., $\hswish$). See \cref{lem:analytic,lem:pieceanalytic} in \cref{sec:squashability-ex} for more details on squashable functions.

\subsection{Easily verifiable conditions for \cref{cond:step}}\label{sec:squashability-ex}

In \cref{thm:main}, we have observed that $w_{\sigma,d_x,d_y}$ can be characterized if $\sigma$ is squashable.
However, checking whether a given activation function is squashable, especially whether it satisfies \cref{cond:step}, can be non-trivial. 
In this section, we provide easily verifiable conditions for \cref{cond:step} based on the following lemma.

\begin{figure*}
    \centering    
        \begin{subfigure}[b]{0.28\linewidth}
         \centering
        \includegraphics[width=1.0\linewidth]{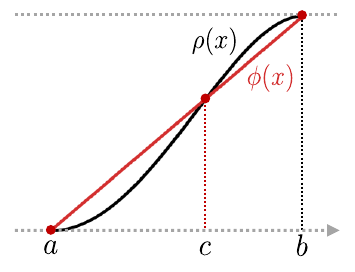}
         \caption{}
         \label{fig:fixedpoint1}
         \end{subfigure}
         \qquad
         \begin{subfigure}[b]{0.28\linewidth}
         \centering
         \includegraphics[width=1.0\linewidth]{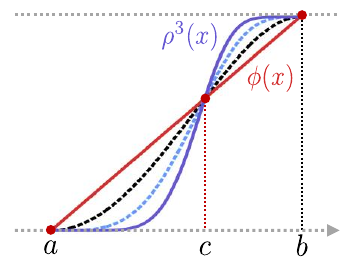}  
         \caption{}
         \label{fig:fixedpoint2}
         \end{subfigure}
         \qquad
          \begin{subfigure}[b]{0.28\linewidth}
        \centering
        \includegraphics[width=1.0\linewidth]{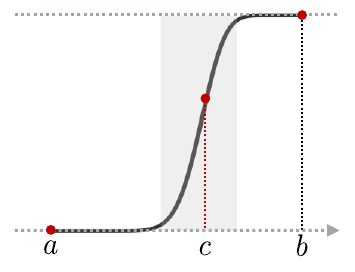}
         \caption{}
         \label{fig:fixedpoint3}
         \end{subfigure}

    \caption{Illustration of construction of squashable function using a $\sigma$ network $\rho$ of width $1$ that has a sigmoidal shape when $\phi(x)=x$. %
    The intersections of $\rho(x)$ and $\phi(x)$ %
    serve as \emph{fixed points}. Thus, $\sigma$ can achieve the squashability by iteratively composing $\rho$: $\rho^n(x)\to a$ for $x\in(a,c)$ and $\rho^n(x)\to b$ for $x\in(c,b)$ as $n\to\infty$ while $\rho^n$ is strictly monotone. %
    }

    \label{fig:fixedpoint}
\end{figure*}

\begin{lemma}\label{lem:squashable-passingline}
A continuous function $\sigma:\bbR\to\bbR$ satisfies \cref{cond:step} if there exist a $\sigma$ network $\rho$ of width $1$ and $a,b\in\bbR$ with $a<b$ satisfying the following:
\vspace{-0.1in}
\begin{itemize}[leftmargin = 15pt]
\item $\rho$ is strictly increasing on $[a,b]$ and
\item there exists $c\in(a,b)$ such that
\begin{align*}
\rho(x)<\phi(x) \ \forall x\in(a,c),\quad \rho(x)>\phi(x)\ \forall x\in(c,b)
\end{align*}
where $\phi(x)$ is a line passing $(a,\rho(a))$ and $(b,\rho(b))$.
\end{itemize}
\end{lemma}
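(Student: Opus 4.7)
The plan is to realize the geometric picture of \cref{fig:fixedpoint} rigorously: after an affine change of coordinates that turns the line $\phi$ into the identity, the width-$1$ network $\rho$ becomes a map $g$ on $[a,b]$ having $a$ and $b$ as attracting fixed points and $c$ as a repelling fixed point. Iterating $g$ sufficiently many times pushes every point of $[a,b]$ outside a small neighborhood of $c$ arbitrarily close to $\{a,b\}$, and one further affine rescaling yields a width-$1$ $\sigma$ network satisfying the three bullets of \cref{cond:step}.

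\emph{Straightening $\phi$.} Since $\rho$ is strictly increasing on $[a,b]$, the slope $(\rho(b)-\rho(a))/(b-a)$ of $\phi$ is positive, so $\phi$ is an increasing affine bijection of $\bbR$. I would let $L=\phi^{-1}$ and define $g\defeq L\circ\rho$, then verify that $g$ is strictly increasing on $[a,b]$; that $g(a)=a$ and $g(b)=b$ (since $L\circ\phi=\iota$); that $g(c)=c$ by continuity, as $g-\iota$ changes sign at $c$; and that $g(x)<x$ on $(a,c)$ while $g(x)>x$ on $(c,b)$. In particular $g([a,b])=[a,b]$, so iteration stays in $[a,b]$.

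\emph{Iterating $g$.} For $x\in(a,c)$ the orbit $(g^n(x))_{n\ge0}$ is strictly decreasing---by induction from $g(x)<x$ and monotonicity of $g$---and trapped in $(a,x)$, so it converges to the unique fixed point $a$ of $g$ in $[a,c)$; symmetrically $g^n(x)\to b$ for $x\in(c,b)$. Fixing $\eta>0$, on the compact set $[a,c-\eta]$ the functions $g^n$ are continuous, monotone decreasing in $n$, and converge pointwise to the continuous constant $a$; by Dini's theorem the convergence is uniform. The analogous argument yields uniform convergence $g^n\to b$ on $[c+\eta,b]$.

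\emph{Assembling the step approximator.} Given a compact $\mcK\subset\bbR$ and $\varepsilon,\zeta>0$, I would pick an increasing affine $A(x)=c+\delta x$ with $\delta>0$ small enough that $A(\mcK)\subset[a,b]$, so that $A(\mcK\setminus(-\zeta,\zeta))\subset[a,c-\delta\zeta]\cup[c+\delta\zeta,b]$. Using the uniform convergence, I would choose $n$ large enough that $g^n$ is within $(b-a)\varepsilon$ of $a$ on $[a,c-\delta\zeta]$ and of $b$ on $[c+\delta\zeta,b]$. With $B$ the increasing affine map sending $a\mapsto0$ and $b\mapsto1$, I set $\rho_{\varepsilon,\zeta}\defeq B\circ(L\circ\rho)^n\circ A$, which is an alternating composition of affine maps with $n$ copies of $\rho$, hence a $\sigma$ network of width $1$. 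Strict monotonicity on $\mcK$ follows from strict monotonicity of each factor on the relevant interval; $\rho_{\varepsilon,\zeta}(\mcK)\subset[0,1]$ follows from $g^n([a,b])\subset[a,b]$ together with $B([a,b])=[0,1]$; and the $\varepsilon$-approximation of $\step$ on $\mcK\setminus(-\zeta,\zeta)$ follows from the Lipschitz constant $1/(b-a)$ of $B$ combined with the uniform convergence. The one delicate point is upgrading pointwise convergence to uniform convergence on $[a,c-\eta]$ and $[c+\eta,b]$: this is precisely where Dini's theorem---and hence the monotonicity of $(g^n)_n$ guaranteed by $g\le\iota$ on $[a,c]$ and $g\ge\iota$ on $[c,b]$---is essential.
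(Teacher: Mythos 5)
Your proof is correct and follows essentially the same strategy as the paper's: conjugate $\rho$ by $L=\phi^{-1}$ so that $a$ and $b$ become attracting fixed points and $c$ a repelling one, iterate to push $[a,c-\eta]$ and $[c+\eta,b]$ arbitrarily close to $a$ and $b$ respectively, and then rescale affinely. The one place you diverge is in obtaining the uniform bound: you invoke Dini's theorem on the compact intervals $[a,c-\eta]$ and $[c+\eta,b]$, whereas the paper obtains the same conclusion more economically by proving \emph{pointwise} convergence $\psi_n(\zeta)\to b$ and $\psi_n(-\zeta)\to a$ (via the monotone convergence theorem plus a fixed-point argument) and then exploiting that each iterate $\psi_n$ is strictly increasing in $x$, so e.g.\ $\psi_n(x)\le\psi_n(-\zeta)$ for all $x\le-\zeta$. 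Both routes are valid, but this means your closing remark that Dini's theorem is ``essential'' slightly overstates matters: what is actually essential is the monotonicity of $g$ on $[a,b]$, which can be used either to verify Dini's hypotheses (your route) or to propagate a bound at the two endpoints $c\pm\delta\zeta$ to the whole of $A(\mcK)\setminus(c-\delta\zeta,c+\delta\zeta)$ (the paper's route). Everything else in your construction -- the affine shrink $A$, the rescaling $B$, the verification of strict monotonicity and range containment in $[0,1]$, and the fact that $B\circ(L\circ\rho)^n\circ A$ is a width-$1$ $\sigma$ network -- matches the paper and is correct.
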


\cref{lem:squashable-passingline} provides a sufficient condition for \cref{cond:step}: if we can make a $\sigma$ network of width $1$ that has a ``sigmoidal shape'' on some compact domain (e.g., see \cref{fig:fixedpoint1}), then $\sigma$ satisfies \cref{cond:step}. 
We can easily approximate the $\step$ function using a function with the sigmoidal shape by composing the function and some affine transformations (see \cref{fig:fixedpoint2,fig:fixedpoint3}).
For a more formal argument, see the proof of \cref{lem:squashable-passingline} in \cref{sec:pflem:passingline}.

Such a sigmoidal shape (or its symmetric variants) exists in various smooth activation functions such as $\gelu$, $\sigmoid$, $\tanh$, and $\sin$. In addition, for any non-affine  analytic function $\sigma$, we can always make a $\sigma$ network of width $1$ that has the sigmoidal shape. Since all non-constant analytic functions are continuously differentiable and have a non-zero derivative at some point, all non-affine analytic functions satisfy \cref{cond:id,cond:step}, i.e., they are squashable.
The proof of \cref{lem:analytic} is presented in \cref{sec:pflem:analytic}.
\begin{lemma}\label{lem:analytic}
All non-affine analytic functions from $\bbR$ to $\bbR$ satisfy \cref{cond:step}. 
\end{lemma}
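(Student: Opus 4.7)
The plan is to apply \cref{lem:squashable-passingline}: it suffices to exhibit, for an arbitrary non-affine analytic $\sigma$, a width-$1$ $\sigma$ network $\rho$ and a proper interval $[a,b]$ on which $\rho$ is strictly increasing and has the described ``sigmoidal'' shape (strictly below the chord joining $(a,\rho(a))$ and $(b,\rho(b))$ on some initial sub-interval and strictly above on the remainder). My construction will be $\rho(x)=-\tilde\sigma(c-\tilde\sigma(x))$ for a small parameter $c$, where $\tilde\sigma$ is itself a width-$1$ $\sigma$ network obtained from $\sigma$ by affine pre- and post-composition so that its Taylor expansion at $0$ is normalized.

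First I would normalize $\sigma$. Because $\sigma$ is non-affine analytic, $\sigma'$ is non-constant and $\sigma''\not\equiv 0$; both are analytic, so their zero sets are discrete, and I can pick $z_0$ with $\sigma'(z_0)\neq 0$ and $\sigma''(z_0)\neq 0$. Replacing $\sigma$ by $-\sigma$ and/or $-\sigma(-\cdot)$ if necessary (both are width-$1$ $\sigma$ networks), I may assume $\sigma'(z_0),\sigma''(z_0)>0$. Then for suitable $\alpha,\beta>0$ the width-$1$ $\sigma$ network $\tilde\sigma(x)\defeq\bigl(\sigma(\alpha x+z_0)-\sigma(z_0)\bigr)/\beta$ satisfies $\tilde\sigma(0)=0$, $\tilde\sigma'(0)=1$, and $\tilde\sigma''(0)=2$, so its Taylor series at $0$ reads $\tilde\sigma(x)=x+x^2+ax^3+O(x^4)$ for some $a\in\bbR$.

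Next I would study $\rho(x)=-\tilde\sigma(c-\tilde\sigma(x))$ locally via Taylor expansion. A routine differentiation gives $\rho'(0)=\tilde\sigma'(c)>0$ for small $c$, and $\rho''(0)=2\tilde\sigma'(c)-\tilde\sigma''(c)=(4-6a)c+O(c^2)$. Choosing $c$ with the appropriate sign (positive if $a<2/3$, negative if $a>2/3$; and in the degenerate case $a=2/3$ first replacing $\tilde\sigma$ by the renormalized $\tilde\sigma\circ\tilde\sigma$, whose third-order coefficient equals $(1+a)/2=5/6\neq 2/3$) makes $\rho''(0)>0$ and small. As $|x|$ grows, the term $-\tilde\sigma''(c-\tilde\sigma(x))(\tilde\sigma'(x))^2$ in $\rho''$ dominates and pushes $\rho''$ negative; so there is an inflection point $m$ slightly to the right of $0$ with $\rho''>0$ on $[0,m)$, $\rho''<0$ on $(m,\delta]$, and $\rho'>0$ throughout $[0,\delta]$ for some $\delta>0$.

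Finally I would pick a small symmetric interval $[a^*,b^*]=[m-\eta,m+\eta]\subseteq[0,\delta]$ and verify the chord condition. With $\phi$ the chord through the two endpoints and $\psi\defeq\rho-\phi$, one has $\psi(a^*)=\psi(b^*)=0$ and $\psi''=\rho''$, so $\psi$ is convex on $[a^*,m]$ and concave on $[m,b^*]$. Taylor expanding $\rho$ at $m$ using $\rho''(m)=0$ and $\rho'''(m)<0$ yields $\phi'=\rho'(m)-|\rho'''(m)|\eta^2/6+O(\eta^3)$ and $\rho'(m\pm\eta)=\rho'(m)-|\rho'''(m)|\eta^2/2+O(\eta^3)$, hence $\phi'-\rho'(a^*)=\phi'-\rho'(b^*)=|\rho'''(m)|\eta^2/3+O(\eta^3)>0$ for small $\eta$, and thus $\psi'(a^*),\psi'(b^*)<0$. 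Combined with the convex-then-concave shape of $\psi$ and the boundary conditions, this forces $\psi$ to have exactly one interior zero $c^*\in(a^*,b^*)$ at which it crosses from negative to positive, verifying the hypothesis of \cref{lem:squashable-passingline}. The hard part will be establishing the non-degeneracy conditions I invoke (in particular $\rho'''(m)\neq 0$ and, more generally, that the freedom to adjust $c$ and to iteratively renormalize $\tilde\sigma$ can always be used to escape the measure-zero ``degenerate'' Taylor-coefficient loci where the leading-order analysis fails); I expect this bookkeeping, rather than the conceptual step, to be the technical core of the proof.
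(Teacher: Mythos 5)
Your approach is genuinely different from the paper's. The paper splits into cases based on qualitative features of $\sigma$: if $\sigma'$ has a zero it invokes a critical-point construction; if $\sigma''$ changes sign from positive to negative it reads off the sigmoidal shape directly; and in the remaining (convex) case it normalizes $\psi(x)=\sigma(bx)/\sigma(b)$, forms $\rho=1-\psi(1-\psi(\cdot))$, and shows $\rho'(0)=\rho'(1)<1$ for some $b$ via a \emph{global} mean-value-theorem argument applied to $g(x)=1/x-\sigma'(0)/\sigma(x)$. You instead normalize the Taylor jet at a point $z_0$ with $\sigma'(z_0),\sigma''(z_0)\neq0$ to get $\tilde\sigma(x)=x+x^2+ax^3+O(x^4)$, form the analogous $\rho(x)=-\tilde\sigma(c-\tilde\sigma(x))$, and run a purely \emph{local} Taylor analysis to locate a sigmoidal piece. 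The unification is appealing, but the local route is exactly what the paper's case split is designed to avoid, and your acknowledged non-degeneracy "bookkeeping" hides genuine obstructions.

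Two concrete gaps. First, your escape mechanism for the degenerate coefficient loci — "iteratively renormalize $\tilde\sigma\circ\tilde\sigma$" — provably fails. As you computed, the renormalization acts on the third coefficient as $a\mapsto(a+1)/2$, which is a contraction with unique fixed point $a=1$; iterating therefore converges \emph{toward} $a=1$ rather than away from it, and if $a=1$ it never moves (indeed $a=1$ corresponds to the self-conjugate formal series of $x/(1-x)$, and every higher coefficient of that series is likewise a fixed point of the induced renormalization map). Escaping $a=1$ requires a different idea, e.g.\ noting that $a=1$ at $z_0$ is equivalent to $2\sigma'(z_0)\sigma'''(z_0)=3\sigma''(z_0)^2$, whose only real-analytic identical solutions are affine or M\"obius (non-entire), so that $z_0$ can be perturbed to avoid it — but your sketch does not do this. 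Second, the unqualified claim that "as $|x|$ grows the term $-\tilde\sigma''(c-\tilde\sigma(x))(\tilde\sigma'(x))^2$ dominates and pushes $\rho''$ negative" is not a general fact: expanding $\rho''$ to first order gives $(4-6a)c-12(1-a)x+\cdots$, so the linear term has the needed sign only when $a<1$; for $a>1$ the candidate inflection lies on the other side of $0$, and for $a=1$ the linear term vanishes and you are again stuck in the resonance. The paper's switch to a global argument in the convex case sidesteps all of these Taylor-coefficient pathologies, which is why its proof closes cleanly while yours, as written, does not.
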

In addition, a class of piecewise functions also satisfies the condition in \cref{lem:squashable-passingline}.
We defer the proof of \cref{lem:pieceanalytic} to \cref{sec:pflem:pieceanalytic}.
\begin{lemma}\label{lem:pieceanalytic}
A continuous function $\sigma:\bbR\to\bbR$ satisfies \cref{cond:step} if there exist $c\in\bbR$ and $\delta>0$ such that 
\begin{itemize}
\item $\sigma$ is continuously differentiable on $(c-\delta, c+\delta)\setminus\{c\}$,
\item $v^+=\lim_{x\to c^-}\sigma'(x)$ and $v^-=\lim_{x\to c^+}\sigma'(x)$ exist, $v^+\ne v^-$, and $v^+v^->0$.
\end{itemize}
\end{lemma}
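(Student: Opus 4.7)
The plan is to verify the hypothesis of \cref{lem:squashable-passingline} by constructing a width-$1$ $\sigma$-network of depth $3$,
\[
\rho(x) \;=\; a_3\,\sigma\bigl(a_2\,\sigma(a_1 x + b_1)+b_2\bigr)+b_3,
\]
on a suitable interval $[a,b]$, so that $\rho$ is strictly increasing on $[a,b]$ and crosses the chord from $(a,\rho(a))$ to $(b,\rho(b))$ from below to above exactly once in $(a,b)$. The main idea is that the hypothesis forces $\sigma$ to be approximately piecewise linear near $c$: by continuity of $\sigma'$ away from $c$ together with the existence of the one-sided limits $v^+,v^-$, for every $\varepsilon>0$ there exists $\delta'\in(0,\delta)$ with $|\sigma'(y)-v^+|<\varepsilon$ on $(c-\delta',c)$ and $|\sigma'(y)-v^-|<\varepsilon$ on $(c,c+\delta')$. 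Choosing $|a_1|,|a_2|$ small enough that the arguments of both copies of $\sigma$ stay inside $(c-\delta',c+\delta')$ for $x\in[a,b]$, and choosing $b_1,b_2$ to place the two kinks $x_1$ (where $a_1x+b_1=c$) and $x_2$ (where $a_2\sigma(a_1x+b_1)+b_2=c$) at prescribed interior points of $(a,b)$, the network $\rho$ becomes approximately piecewise linear on $[a,b]$ with two kinks and three pieces.

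On each piece, the limiting slope of $\rho$ equals $a_1 a_2 a_3\cdot\sigma'(u)\sigma'(w)$ with $u=a_1x+b_1$ and $w=a_2\sigma(u)+b_2$, and hence is one of $a_1 a_2 a_3\,v^+v^-$, $a_1 a_2 a_3(v^+)^2$, or $a_1 a_2 a_3(v^-)^2$, depending on which side of $c$ the intermediate values $u,w$ lie. A short case analysis on the common sign of $v^+,v^-$ and on whether $|v^+|>|v^-|$ shows that the signs of $a_2,a_3$ (always taking $a_1>0$) and the kink ordering ($x_1<x_2$ or $x_2<x_1$) can be chosen so that the three effective slopes, in order from left to right, are $v^+v^-$, $(v^\pm)^2$, $v^+v^-$ with the middle entry equal to $\max\{(v^+)^2,(v^-)^2\}$ and the common factor $a_1 a_2 a_3$ having the sign that makes all three slopes strictly positive. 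Since $v^+v^-=|v^+||v^-|$ (using $v^+v^->0$) is the geometric mean of $(v^+)^2$ and $(v^-)^2$ and $|v^+|\ne|v^-|$ (since $v^+\ne v^-$ and $v^+v^->0$), the middle slope strictly exceeds the outer ones, producing a ``small--large--small'' pattern of positive slopes. Taking $\varepsilon$ small enough that the actual $\rho'$ differs from each limiting piece-slope by less than the gap between the outer and middle slopes, $\rho$ preserves both this pattern and its positivity, so $\rho$ is strictly increasing on $[a,b]$.

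The chord slope $s$ of $\rho$ on $[a,b]$ is (up to an $O(\varepsilon)$ error) a length-weighted average of the three piece slopes, so $s$ lies strictly between the outer and middle slopes. Consequently $\rho'-s$ is negative on the outer pieces and positive on the middle piece, so $\rho-\phi$ decreases on the first piece, increases on the middle piece, and decreases on the third piece. Combined with $(\rho-\phi)(a)=(\rho-\phi)(b)=0$, this forces $\rho-\phi$ to attain a negative minimum at the left kink, a positive maximum at the right kink, and cross zero exactly once between them---precisely the sigmoidal condition required by \cref{lem:squashable-passingline}.

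The main obstacle is the bookkeeping in the sign case analysis: the four sub-cases (positive vs.\ negative $v^\pm$, and $|v^+|>|v^-|$ vs.\ $|v^+|<|v^-|$) each require a specific combination of $\operatorname{sign}(a_2)$, $\operatorname{sign}(a_3)$, and the order of the two kinks to make the effective slope pattern come out small--large--small rather than large--small--large or monotone. Once the signs are fixed, the quantitative approximation---choosing $|a_1|,|a_2|,\varepsilon$ small enough that the slope errors are dominated by the gap $\max\{(v^+)^2,(v^-)^2\}-v^+v^-=\max\{|v^+|,|v^-|\}\cdot|v^+-v^-|$---is routine.
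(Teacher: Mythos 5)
Your proposal is correct and matches the paper's strategy: compose $\sigma$ twice with affine maps so that, after zooming in near $c$, the resulting width-$1$ network is approximately piecewise linear with three pieces whose slopes follow a small--large--small pattern, then invoke \cref{lem:squashable-passingline}. The paper packages the same idea slightly more modularly---it first normalizes WLOG to $0<\sigma'(c^-)<\sigma'(c^+)$, treats the exact two-piece-linear activation $\sigma_1$ explicitly via $\rho=1-\sigma_1(1-\sigma_1(x))$ (giving slopes $a,1,a$ with $a<1$), and then approximates $\sigma_1$ by $\sigma(rx)/r$ as $r\to0$---which absorbs into the WLOG step the four-way sign/magnitude case analysis you carry out by hand, but the underlying construction and the appeal to the sigmoidal-shape lemma are the same.
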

\cref{lem:pieceanalytic} states that if an activation function $\sigma$ contains a point such that the left limit of the derivative and the right limit of derivative at that point are different but have the same sign, then $\sigma$ satisfies \cref{cond:step}.
We note that piecewise functions such as leaky-$\relu$ and $\hswish$ satisfy the condition in \cref{lem:pieceanalytic}; for those functions, one can choose the point $c$ in \cref{lem:pieceanalytic} as some break point between consecutive pieces.

While we provide easily verifiable sufficient conditions (\cref{lem:analytic,lem:pieceanalytic}) for \cref{cond:step}, we note that \cref{thm:main} covers any activation function satisfying \cref{cond:id,cond:step}, even if that activation function does not satisfy conditions in \cref{lem:analytic,lem:pieceanalytic}.
We also present additional sufficient conditions for \cref{cond:step} in \cref{sec:additionalcondition}.

\section{Proof of \cref{thm:main}}\label{sec:pfthm:main}

We now present the proof of \cref{thm:main}.
\cref{thm:main} is a direct corollary of the following lemmas.

\begin{lemma}\label{lem:ub}
Let $\sigma$ be a squashable function, $\varepsilon>0$, $f^*\in C([0,1]^{d_x},[0,1]^{d_y})$, and $p\ge 1$.
Then, there exists a $\sigma$ network $f:[0,1]^{d_x}\to\bbR^{d_y}$ of width $\max\{d_x, d_y, 2\}$ such that $$\|f-f^*\|_{L^p}\le \varepsilon.$$
\end{lemma}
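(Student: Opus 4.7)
The plan is to approximate $f^*$ in $L^p$ by a piecewise constant function on a fine cubical grid and then realize that function by a $\sigma$ network of width $w := \max\{d_x,d_y,2\}$, using an encoder--decoder construction in the style of \citet{park21,cai23,kim24} where the step and identity primitives are supplied by the squashability of $\sigma$.

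By uniform continuity of $f^*$ on $[0,1]^{d_x}$, I would choose $N\in\bbN$ large enough that subdividing $[0,1]^{d_x}$ into $N^{d_x}$ cubes of side $1/N$ makes the piecewise-constant approximation $\tilde f$ (assigning each cube the value of $f^*$ at its center) satisfy $\|f^*-\tilde f\|_\infty \le \varepsilon/3$. I linearly order the cubes as $C_1,\dots,C_M$ with values $c_1,\dots,c_M\in [0,1]^{d_y}$. Because the union of cube faces has measure zero and all functions are uniformly bounded, it suffices to produce a $\sigma$ network $f$ agreeing with $\tilde f$ within $\varepsilon/3$ uniformly on $\bigcup_i C_i'$, where $C_i' \subset C_i^\circ$ is $C_i$ with thin boundary strips removed whose total $L^p$-contribution is at most $\varepsilon/3$.

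The construction splits into two phases. In the \emph{encoding phase} of width $\max\{d_x,2\}$, I compute into a single register neuron a scalar $s$ such that, whenever $x\in C_i'$, $s$ lies within a tiny tolerance of a prescribed distinct value $s_i$. Concretely, for each coordinate $j\in[d_x]$ and each threshold $k/N$ with $k\in[N-1]$, \cref{cond:step} furnishes a width-$1$ $\sigma$ subnetwork uniformly approximating $\step(\cdot-k/N)$ on the compact domain up to a small $\zeta$-neighborhood of $k/N$, with strict monotonicity throughout; meanwhile \cref{lem:kidger} (enabled by \cref{cond:id}) shuttles the other coordinates via identity approximations without corrupting them. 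Summing $N-1$ thresholded outputs gives the bin index $b_j\in\{0,\ldots,N-1\}$, and combining affine-linearly as $s = \sum_{j=1}^{d_x} N^{j-1} b_j$---implemented by an interleaving of step and identity subnetworks that frees a neuron as soon as its coordinate is consumed---yields a layered network whose width never exceeds $\max\{d_x,2\}$. In the \emph{decoding phase} of width $\max\{d_y,2\}$, I iterate over $i=1,\ldots,M$: each round applies a step subnetwork to the register to detect $s \approx s_i$ and adds $c_i$ componentwise into $d_y$ parallel output neurons, preserving the register between rounds via identity approximation. Concatenating the two phases yields total width $\max\{\max\{d_x,2\},\max\{d_y,2\}\}=w$.

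The main obstacle is the tight width accounting: each invocation of \cref{cond:step} temporarily needs one working neuron in addition to whatever is being stored, which is precisely where the ``$+2$'' appears in the bound when $d_x = 1$ or $d_y = 1$. Verifying that a correct interleaving of step and identity subnetworks exists---mirroring the $\relul$ constructions of \citet{park21,cai23,kim24} but with the step role now played by Condition~\ref{cond:step} approximators rather than exact $\relu$ gates---is the crux of the proof. Once in place, the error analysis is straightforward: the subnetwork tolerances from \cref{cond:id,cond:step} can be driven arbitrarily small, and propagating them through the fixed finite composition of affine maps (whose operator norms are determined by the fixed grid and the fixed encoding weights) yields uniform pointwise error at most $\varepsilon/3$ on $\bigcup_i C_i'$; summing the three contributions (piecewise-constant error, boundary strips, step/identity error) bounds $\|f-f^*\|_{L^p}\le\varepsilon$.
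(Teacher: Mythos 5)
Your high-level plan—approximate $f^*$ by a piecewise-constant function on a fine grid and realize it via an encoder that collapses $[0,1]^{d_x}$ to a scalar register followed by a decoder that expands the register to $\bbR^{d_y}$—matches the paper's strategy, and the step/identity primitives you invoke via \cref{cond:id,cond:step} and \cref{lem:kidger} are the right ones. However, the proposal has a genuine gap precisely at the point you flag as ``the crux'': the tight width accounting. Your decoding phase keeps a scalar register $s$ alive across rounds (``preserving the register between rounds via identity approximation''), accumulates into $d_y$ parallel output neurons, and at each round must evaluate a step detector $\rho(s-s_i)$ somewhere before fanning it into the accumulators. That is at least $d_y+2$ neurons in a hidden layer (register $+$ detector $+$ $d_y$ accumulators), which exceeds $\max\{d_y,2\}$ whenever $d_y\ge 2$. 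The lookup-table decoder you describe cannot, as written, reach the claimed width. What the paper does instead is build a $\delta$-filling curve $f_{\mathrm{dec}}:[0,1]\to[0,1]^{d_y}$ of width exactly $d_y$ (\cref{lem:decoder,lem:decoder-inductive}): the curve is constructed inductively so that its first coordinate satisfies $f_{\mathrm{dec}}(x)_1=x$ throughout, i.e.\ the register \emph{is} the first output coordinate and never needs its own neuron. This space-filling-curve trick is the missing idea; without it the width bound does not close.

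A similar worry applies to your encoding phase. Computing bin indices $b_j = \sum_{k=1}^{N-1}\step(x_j - k/N)$ and accumulating $s=\sum_j N^{j-1} b_j$ requires, during the processing of $x_j$, a neuron for $x_j$ itself (needed for each of the $N-1$ step evaluations), a neuron for the running accumulator, and neurons for the still-unconsumed coordinates $x_{j+1},\dots,x_{d_x}$: that is $d_x+1$ neurons at the start, not $\max\{d_x,2\}$. Your appeal to ``an interleaving of step and identity subnetworks that frees a neuron as soon as its coordinate is consumed'' does not explain where the accumulator lives while the first coordinate is still being thresholded. The paper sidesteps this with a structurally different encoder (\cref{lem:encoder,lem:inductive-encoder}): a width-$2$ $(\sigma,\iota)$ network that maps a $2$-grid to a $1$-grid by iteratively shifting cells in the second coordinate based on thresholds of the first, composed inductively to collapse a $d_x$-grid to a $1$-grid in width $d_x$, and then a width-$2$ piecewise-constant approximator (\cref{lem:piecewise-constant}). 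Both your phases therefore require a genuinely different construction than the one you sketch, and since these constructions are the entire mathematical content of the lemma (the error bookkeeping is indeed routine), the proposal is incomplete rather than merely terse.
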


\begin{lemma}[Lemmas 21 and 22 in \cite{kim24}]\label{lem:lb}
For any $\sigma:\bbR\to\bbR$ and $d_x,d_y\in\bbN$, $w_{\sigma}\ge\max\{d_x,d_y\}$.
Furthermore, if $\sigma$ is monotone, then $w_{\sigma}\ge2$.
\end{lemma}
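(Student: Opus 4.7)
The plan is to establish each lower bound by exhibiting a target function in $L^p([0,1]^{d_x},\bbR^{d_y})$ that any network of insufficient width provably fails to approximate, together with a separate monotonicity argument for the second clause.

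\textbf{Bound $w_\sigma\ge d_y$.} For any $\sigma$ network $f$ of width $w$, the last hidden layer has dimension at most $w$, so the image of $f$ is contained in an affine subspace $V_f\subset\bbR^{d_y}$ with $\dim V_f\le w$. I would construct a target $f^*$ that takes $d_y+1$ affinely independent values $v_0,\dots,v_{d_y}\in\bbR^{d_y}$ on pairwise disjoint measurable subsets $A_0,\dots,A_{d_y}$ of $[0,1]^{d_x}$, each of positive measure. Since $d_y+1$ affinely independent points cannot all lie arbitrarily close to a common affine subspace of dimension at most $d_y-1$, there is a constant $c>0$ such that every such subspace $V$ satisfies $\max_i\dist{v_i}{V}\ge c$. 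On the set $A_{i^*}$ realizing this maximum we have $\|f(x)-f^*(x)\|\ge c$, so $\|f-f^*\|_{L^p}\ge c\cdot\bigl(\min_i\mu(A_i)\bigr)^{1/p}$ uniformly in $f$, yielding $w_\sigma\ge d_y$.

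\textbf{Bound $w_\sigma\ge d_x$.} For any network $f$ of width $w<d_x$, the first affine map $t_1(x)=Wx+b$ has $W\in\bbR^{d_1\times d_x}$ with $d_1\le w<d_x$, so $\ker W$ contains a unit vector $v_f$ and $f(x+tv_f)=f(x)$ whenever both points lie in $[0,1]^{d_x}$. I would take the target $f^*(x)=\prod_{i=1}^{d_x}\sin(\pi x_i)$, which is strictly non-constant along every direction, and show
\begin{align*}
m\defeq\inf_{\|v\|=1}\ \|f^*-g^*_v\|_{L^p([0,1]^{d_x})}>0,
\end{align*}
where $g^*_v$ denotes the conditional average of $f^*$ along lines parallel to $v$, i.e., the best $L^p$ approximation of $f^*$ among functions constant along $v$. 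The pointwise positivity $\|f^*-g^*_v\|_{L^p}>0$ for each $v$ follows from non-constancy of $f^*$ along $v$ via an analyticity / zero-set argument, and compactness of the unit sphere together with continuity of $g^*_v$ in $v$ then upgrades this to the uniform bound $m>0$.

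\textbf{Bound $w_\sigma\ge 2$ for monotone $\sigma$ when $d_x=d_y=1$.} A width-$1$ network has the form $t_L\circ\sigma\circ t_{L-1}\circ\cdots\circ\sigma\circ t_1$, with each $t_\ell:\bbR\to\bbR$ affine (hence monotone) and $\sigma$ monotone by hypothesis, so the composition is monotone on $\bbR$. Consequently, width-$1$ $\sigma$ networks cannot $L^p$-approximate a non-monotone target such as $f^*(x)=x(1-x)$ on $[0,1]$, since the best $L^p$ approximation to $f^*$ by monotone functions on $[0,1]$ has a strictly positive error, giving $w_\sigma\ge 2$.

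\textbf{Main obstacle.} The technically delicate step is the bound $w_\sigma\ge d_x$: since the forbidden direction $v_f$ depends on the network, a single target must be quantitatively non-constant along every direction at once. The pointwise-to-uniform passage via compactness of the sphere and continuity in $v$ of the best ``constant-along-$v$'' approximation is the heart of the argument; the other two bounds are comparatively routine geometric and monotonicity observations.
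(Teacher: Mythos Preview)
The paper does not give its own proof of this lemma; it is simply quoted from \cite{kim24} (their Lemmas 21 and 22), so there is no in-paper argument to compare against. Your three arguments are the standard ones and are correct in outline: the image of a width-$w$ network lies in a $w$-dimensional affine subspace (giving $w_\sigma\ge d_y$), a width-$w<d_x$ network is constant along some direction in $\ker W_1$ (giving $w_\sigma\ge d_x$), and a width-$1$ network with monotone $\sigma$ is a composition of monotone scalar maps, hence monotone.

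Two small points. First, the conditional average along $v$ is the best $L^p$ approximation by functions constant along $v$ only when $p=2$; for general $p$ you should work directly with the infimum $\inf_{g}\|f^*-g\|_{L^p}$ over such $g$, which is all your compactness argument actually needs. Second, in the $d_x$ bound you only need lower semicontinuity of $v\mapsto\inf_g\|f^*-g\|_{L^p}$ on the unit sphere, not continuity of a minimizer map $v\mapsto g^*_v$; a cleaner route is to integrate, for each $v$, the one-dimensional quantity $\inf_{c}\int_{\ell}|f^*-c|^p$ over line segments $\ell$ parallel to $v$ in the cube, observe this integrand is jointly continuous in $(v,\ell)$ and strictly positive (your $f^*$ is non-constant on every such segment), and then conclude the uniform positive lower bound by compactness of the sphere.
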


\cref{lem:ub} implies that for any squashable activation function $\sigma$, $w_\sigma\le\max\{d_x,d_y,2\}$. This is because (1) continuous functions on $[0,1]^{d_x}$ are dense in $L^p([0,1]^{d_x},\bbR^{d_y})$ \citep{rudin} and (2) $g([0,1]^{d_x})$ is compact for all $g\in C([0,1]^{d_x},\bbR^{d_y})$, i.e., we can scale the range of $g$ to be in $[0,1]^{d_y}$.
Hence, combining \cref{lem:ub,lem:lb} results in \cref{thm:main}. In the rest of this section, we prove \cref{lem:ub}.

\subsection{Proof of \cref{lem:ub}}
To illustrate our main idea for proving \cref{lem:ub}, we first define a \emph{$\delta$-filling curve}.

\begin{definition}\label{def:filling-curve}
Let $d \in \bbN$ and $\delta>0$. We say a continuous function $f:\bbR\to\bbR^d$ is a ``$\delta$-filling curve'' of $\mcD \subset \bbR^d$ if
$$\sup_{y \in \mcD} \dist{y}{f([0,1])} \le \delta.$$
\end{definition}

A $\delta$-filling curve of $\mcD\subset\bbR^d$ can be considered as a weaker version of a space-filling curve of $\mcD$ \citep{sfc}. While the range of the space-filling curve contains $\mcD$ but the $\delta$-filling curve covers $\mcD$ within $\delta$ distance.

Suppose that we can implement a $\delta$-filling curve $h$ of $[0,1]^{d_y}$ using a $\sigma$ network for some small $\delta>0$, i.e., for each $y\in[0,1]^{d_y}$, there is $z\in[0,1]$ such that $h(z)\approx y$.
Hence, if we can design a $\sigma$ network $g$ that maps each $x\in[0,1]^{d_x}$ to some $z_x$ such that $h(z_x)\approx f^*(x)$, then the $\sigma$ network $h\circ g$ approximates $f^*$.
Here, $g$ and $h$ can be considered as an \emph{encoder} and $\emph{decoder}$: $g$ encodes a $d_x$-dimensional vector $x$ to a scalar value $z_x$ that contains the information of $f^*(x)$ and $h$ decodes $z_x$ to a $d_y$-dimensional vector $h(z_x)$ that approximates $f^*(x)$.

We explicitly construct networks that approximate the encoder and decoder. To this end, we introduce the following lemma where the proof is deferred to \cref{sec:pflem:simplification}.
\begin{lemma}\label{lem:simplification}
Let $\sigma$ be a squashable function, $d,w\in\bbN$, and $\mcK\subset\bbR^d$ be a compact set. Then, for any $\varepsilon>0$ and $(\sigma,\iota)$ network $f$ of width $w$, there exists a $\sigma$ network $g$ of width $w$ such that $$\sup_{x\in\mcK}\|f(x)-g(x)\|_\infty<\varepsilon.$$
\end{lemma}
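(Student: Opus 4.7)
The plan is to induct on the total number $n$ of $\iota$-activated neurons in $f$, converting one such neuron into a $\sigma$ neuron at each step while keeping the width fixed at $w$ and controlling the approximation error. The base case $n=0$ is trivial since $f$ is already a $\sigma$ network. For the inductive step, it suffices to construct a $(\sigma,\iota)$ network $f'$ of width $w$ with exactly $n-1$ $\iota$ neurons and $\sup_{x\in\mcK}\|f(x)-f'(x)\|_\infty < \varepsilon/n$; iterating this replacement $n$ times and applying the triangle inequality then yields the desired pure $\sigma$ network.

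For one replacement step, I fix an $\iota$-activated neuron at some layer $\ell$ and coordinate $i$ of $f$, and decompose $f = \tilde h\circ\tilde\sigma_\ell\circ t_\ell\circ g_0$, where $g_0$ is the sub-network preceding $t_\ell$ and $\tilde h$ is everything after $\tilde\sigma_\ell$; both are continuous, so $\mcK' \defeq \{t_\ell(g_0(x))_i : x\in\mcK\}$ is compact in $\bbR$. Since $\sigma$ is squashable, it satisfies \cref{cond:id}, so \cref{lem:kidger} supplies affine maps $h_1,h_2:\bbR\to\bbR$ with $|h_2(\sigma(h_1(z))) - z| \le \delta$ for all $z\in\mcK'$, where $\delta>0$ is to be chosen. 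I then construct $f'$ by (i) changing the activation at layer $\ell$, coordinate $i$, from $\iota$ to $\sigma$; (ii) replacing $t_\ell$ by the affine map that agrees with $t_\ell$ on coordinates $k\neq i$ and outputs $h_1(t_\ell(\cdot)_i)$ on coordinate $i$ (still a single affine map of the same dimensions); and (iii) replacing the immediately following affine transformation (either $t_{\ell+1}$ or, if $\ell=L-1$, the final $t_L$) by its composition with the coordinate-wise map that applies $h_2$ in position $i$ and the identity elsewhere (again still affine of the same dimensions). These modifications preserve width $w$ and decrease the $\iota$ count by one, and by construction the vector fed into the original $\tilde h$ in the modified network differs from the one in $f$ only in coordinate $i$, by at most $\delta$, uniformly over $x\in\mcK$.

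The main obstacle is converting this coordinate-wise $\delta$-perturbation into a controlled error at the output of $\tilde h$, which may itself be a deep $(\sigma,\iota)$ sub-network. However, at this stage of the induction $\tilde h$ is a fixed continuous function and the relevant inputs lie in a fixed compact set $\tilde\sigma_\ell(t_\ell(g_0(\mcK)))$; enlarging this set by a bounded slack in coordinate $i$ yields a compact neighborhood on which $\tilde h$ is uniformly continuous. Hence, given the target accuracy $\varepsilon/n$, I can pick $\delta>0$ small enough — depending only on $\tilde h$, $\mcK$, and $\varepsilon/n$ — so that $\sup_{x\in\mcK}\|f(x)-f'(x)\|_\infty<\varepsilon/n$. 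Iterating the construction $n$ times produces a pure $\sigma$ network $g$ of width $w$ satisfying $\sup_{x\in\mcK}\|f(x)-g(x)\|_\infty<\varepsilon$, as required.
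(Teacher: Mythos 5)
Your proposal is correct and rests on the same two ingredients as the paper's proof: \cref{lem:kidger} to replace each identity activation by $h_2\circ\sigma\circ h_1$, and continuity on compact sets to shrink the resulting error. Where you differ is in the bookkeeping. The paper replaces \emph{all} $\iota$ neurons at once with a single $\tilde\sigma = h_2\circ\sigma\circ h_1$ governed by one parameter $\delta$, and then propagates the resulting perturbation through the network via a nested chain of modulus-of-continuity bounds $\omega_{t_\ell}(\omega_{\phi_{\ell-1}}(\cdot)+\delta)$. You instead replace one $\iota$ neuron at a time, control each step's contribution to $\varepsilon/n$ using uniform continuity of the downstream sub-network $\tilde h$ (on a slightly enlarged compact range), and finish with a triangle inequality over the $n$ intermediate networks. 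Your version has the advantage of being more modular — each step is an isolated, self-contained replacement — and of making explicit the step the paper glosses over: absorbing the scalar affines $h_1,h_2$ into the adjacent layer affines $t_\ell$ and $t_{\ell+1}$ so that the end result is \emph{literally} a $\sigma$ network of width $w$ rather than a $(\sigma,\tilde\sigma)$ network. The paper's simultaneous replacement avoids having to choose $n$ separate approximation parameters sequentially, but needs to be slightly more careful about the compact set on which \cref{lem:kidger} is applied, since all ranges are tracked at once. Both are sound; the tradeoff is purely expository.
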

Here, $\iota:\bbR\to\bbR$ denotes the identity function (see \cref{sec:notation}).
\cref{lem:simplification} implies that constructing a $(\sigma,\iota)$ network of width $\max\{d_x,d_y,2\}$ that approximates $f^*$ is sufficient to prove \cref{lem:ub}. Hence, we focus on approximating the encoder and decoder using $(\sigma,\iota)$ networks.

We first show that the decoder can be implemented using a $(\sigma,\iota)$ network of width $d_y$.
The proof of \cref{lem:decoder} is in \cref{sec:pflem:decoder}.

\begin{lemma}\label{lem:decoder}
Let $\sigma$ be a squashable function and $\delta>0$. Then, there exists a $(\sigma,\iota)$ network $f_\mathrm{dec}:[0,1]\to[0,1]^{d_y}$ of width $d_y$ that is a $\delta$-filling curve of $[0,1]^{d_y}$.
\end{lemma}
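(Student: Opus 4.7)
The plan is to realize $f_{\mathrm{dec}}$ as a width-$d_y$ $(\sigma,\iota)$ network implementing a continuous smoothing of a \emph{snake curve} through the $(1/N)$-grid of $[0,1]^{d_y}$ for $N\in\bbN$ with $1/N<\delta$. I enumerate the $N^{d_y}$ grid cells in reflected-Gray-code order $\alpha(0),\ldots,\alpha(N^{d_y}-1)$ so that consecutive indices differ in exactly one coordinate by $\pm 1$, let $p_j$ denote the center of cell $\alpha(j)$, and aim for the piecewise-constant map sending $t\in[j/N^{d_y},(j+1)/N^{d_y}]$ to $p_j$. Its image is $(1/N)$-dense and hence $\delta$-dense in $[0,1]^{d_y}$, and each output coordinate can be written as a telescoping sum $p_{0,i}+\sum_j\pm(1/N)\,\step(t-j/N^{d_y})$ of shifted signed step functions.

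I would build the network by stacking $N^{d_y}-1$ iteration blocks, one per breakpoint, each invoking as subcircuits the width-$1$ approximate step $\rho_{\varepsilon,\zeta}$ from \cref{cond:step} and the approximate-identity chain from \cref{lem:kidger}. The $d_y$ output coordinates are carried in $d_y$ accumulator neurons throughout; since the snake ordering changes exactly one coordinate per breakpoint, that coordinate's neuron is reused as scratch for its block's step subnetwork. The remaining $d_y-1$ neurons carry $t$ (supplying the subnetwork with $t-j/N^{d_y}$) and the unchanged accumulators, and after the subnetwork a single affine layer uses the subnetwork output together with these carried values to update the scratched accumulator by $\pm(1/N)$ times the subnetwork output and re-emit the state for the next block. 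The base case $d_y=1$ is immediate from $f_{\mathrm{dec}}(t)=t$.

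The hardest step I expect is the tight case $d_y=2$, where only one carrier neuron remains after scratch, so $t$ (needed for all future breakpoints) and the unchanging accumulator (needed for the post-block update) compete for a single scalar. I would handle this by arranging the snake ordering and the per-block affine encodings so that the carrier scalar is a well-chosen invertible affine combination of $t$ and the unchanging accumulator, and I would recover both from the carrier together with the subnetwork output via a post-block affine read-out whose $2\times 2$ coefficient matrix can be chosen invertible uniformly in $t$ away from the transition zones of $\rho_{\varepsilon,\zeta}$. Finally, by taking $\varepsilon$ and $\zeta$ in \cref{cond:step} small enough relative to $1/N^{d_y}$ and to the bounded range of intermediate values, per-block errors stay uniformly controlled on a fixed compact set; summing over the $N^{d_y}-1$ blocks gives an $L^\infty$-approximation of the idealized snake curve within any prescribed tolerance, which combined with $1/N<\delta$ yields the width-$d_y$ $\delta$-filling network claimed.
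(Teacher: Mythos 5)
Your plan takes a genuinely different route from the paper: you enumerate the grid cells in Gray code order and drive all $d_y$ snake coordinates simultaneously, one step function per breakpoint, while the paper instead builds the curve recursively on the \emph{dimension} (one coordinate of the output at a time) via \cref{lem:decoder-inductive}. Crucially, the paper's curve is of the form $f_{\mathrm{dec}}(t)=(t,\,y_2(t),\dots,y_{d_y}(t))$: the \emph{first output coordinate is the parameter $t$ itself}, and each subsequent coordinate $y_i$ oscillates on the subintervals already selected by $y_2,\dots,y_{i-1}$ (\cref{lem:indicator} produces each new $y_i$ from a width-$2$ sub-network acting on the $t$-neuron and the $i$-th neuron, and the already-computed coordinates $2,\dots,i-1$ are just carried through by identities). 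So the state at any layer is exactly $(t,y_2,\dots,y_d)$ and there are never more than $d_y$ live scalars.

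Your snake approach has a genuine width shortage that your write-up does not resolve, and it is not special to $d_y=2$. Between blocks you must transport $t$ (needed by every remaining step sub-network) together with all $d_y$ accumulators $\operatorname{acc}_1(t),\dots,\operatorname{acc}_{d_y}(t)$: that is $d_y+1$ scalars, but the network has only $d_y$ neurons, a shortage of exactly one for \emph{every} $d_y\ge 2$. Your proposed repair---storing an invertible affine combination $z=\alpha t+\beta\operatorname{acc}_k$ in a carrier neuron and recovering $(t,\operatorname{acc}_k)$ afterward via a $2\times 2$ affine read-out from $(z,\rho_{\varepsilon,\zeta})$---cannot work as stated. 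The decode $(z,\rho)\mapsto(t,\operatorname{acc}_k)$ is supposed to be affine, but $\rho_{\varepsilon,\zeta}$ is (approximately) $\step(t-j/N^{d_y})$ while $\operatorname{acc}_k$ is some \emph{other} telescoping sum of shifted steps in $t$; neither is an affine function of the other, so no $2\times 2$ matrix makes the system invertible on the whole domain. Concretely for $d_y=2$ and $N=2$: after the block at $t=\tfrac12$ (which rewrites $\operatorname{acc}_2$) the state holds $\operatorname{acc}_1=\tfrac14+\tfrac12\rho(t-\tfrac14)$ and $\operatorname{acc}_2=\tfrac14+\tfrac12\rho(t-\tfrac12)$, and the next block at $t=\tfrac34$ needs $t$ explicitly to form $\rho(t-\tfrac34)$; there is no neuron left that carries $t$, and recovering $t$ from $\operatorname{acc}_1$ or $\operatorname{acc}_2$ requires inverting $\rho$, not an affine read-out. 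The fix that would actually close the gap is precisely the paper's structural move: make the first coordinate of the output equal to $t$, so that $t$ never competes for a separate neuron. But adopting that changes the filling curve from a snake to the nested/oscillation curve of the paper, and at that point your construction would essentially reproduce the paper's proof.
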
 
\cref{lem:decoder} states that for any $\delta>0$, we can always implement a $\delta$-filling curve of $[0,1]^{d_y}$ using 
a $(\sigma,\iota)$ network $f_{\mathrm{dec}}$ of width $d_y$.
Further, the implemented network satisfies
$f_{\mathrm{dec}}([0,1])\subset[0,1]^{d_y}$ regardless of $\delta$.

We also show that the encoder can be approximated by a $(\sigma,\iota)$ network of width $\max\{d_x,2\}$. The proof of \cref{lem:encoder} is in \cref{sec:pflem:encoder-sketch}.
\begin{lemma}\label{lem:encoder}
Let $\sigma$ be a squashable function, $N\in\bbN$ and $\gamma\in(0,0.5)$.
For each $\nu\in[N]^{d_x}$, let $\mcT_{\nu}=\prod_{i=1}^{d_x}[\frac{\nu_i-1+\gamma}{N},\frac{\nu_i-\gamma}{N}]$ and $c_\nu\in[0,1]$. Then, there exists a $(\sigma, \iota)$ network $f_{\mathrm{enc}}:[0,1]^{d_x} \to [0,1]$ of width $\max\{d_x,2\}$ such that for each $\nu \in [N]^{d_x}$,
$$f_{\mathrm{enc}}(\mcT_\nu) 
\subset \mcB_{\gamma}(c_\nu).$$
\end{lemma}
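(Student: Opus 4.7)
The plan is to construct $f_{\mathrm{enc}}$ as a composition of three conceptual stages: coordinatewise quantization, affine aggregation into a single scalar index, and a one-dimensional lookup. As a preliminary step, I would invoke Lemma~\ref{lem:simplification} so that it suffices to produce a $(\sigma, \iota)$ network; having $\iota$ available as an activation lets us carry a slot through any depth unchanged while a neighbouring slot runs an actual $\sigma$-subnetwork on it. The two squashability primitives in use are Lemma~\ref{lem:kidger} (identity approximation) and the step approximator $\rho_{\varepsilon,\zeta}$ from Condition~\ref{cond:step}, both of which are width-$1$ constructions and therefore occupy exactly one ``lane'' through their sub-blocks while $\iota$ preserves the other slots.

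Each coordinatewise quantizer is the finite sum $Q(y) := \sum_{k=1}^{N-1}\rho_{\varepsilon,\zeta}(y - k/N)$ with $\zeta$ chosen below $\gamma/N$; on every target cell $[\tfrac{\nu-1+\gamma}{N},\tfrac{\nu-\gamma}{N}]$ the argument of each $\rho$ lies at distance at least $\gamma/N > \zeta$ from $0$, so $Q(y)$ is within $(N-1)\varepsilon$ of $\nu-1$. The aggregator is a single affine layer $I(x) := \sum_{i=1}^{d_x} Q(x_i)\,N^{i-1}$, mapping each cell to a distinct near-integer in $\{0,\ldots,N^{d_x}-1\}$. The lookup $L$ has the same shape: writing the target values as successive differences $\Delta_m = c_{\nu(m)} - c_{\nu(m-1)}$ along the natural enumeration of cells, $L(z) := c_{\nu(0)} + \sum_{m=1}^{N^{d_x}-1}\Delta_m\,\rho_{\varepsilon,\zeta}(z - m + 1/2)$ lands in $\mcB_\gamma(c_\nu)$ on the near-integer interval associated with $\nu$, as long as the cumulative error $O(N^{d_x}\varepsilon)$ stays below $\gamma$.

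The delicate part is fitting the whole pipeline inside width $\max\{d_x,2\}$. The plan is a sequential reduction that consumes coordinates one at a time: in round $i$, a pair of slots acts as a width-$2$ ``engine'' that runs the $\rho_{\varepsilon,\zeta}$-subnetwork on the coordinate $x_i$, merges the resulting $Q(x_i)$ into a running partial index, and then scales that partial index by $N$ in preparation for the next round, while the remaining slots carry the as-yet-unconsumed coordinates $x_{i+1},\ldots,x_{d_x}$ under $\iota$. Because the slot consumed by the just-processed coordinate is recycled as scratch for the next round and only two active slots are needed per round, the active width stays at $\max\{d_x,2\}$ throughout; after all $d_x$ rounds have been executed, one further width-$2$ block of the same shape evaluates $L$ on the final index, producing the single output.

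The main obstacle I anticipate is the low-width regime $d_x \leq 2$, and in particular $d_x = 1$ (width~$2$), where only two slots are available throughout and the ``coordinate slot'' must coincide with the work slots; naive assignment overwrites either $x$ or the running partial sum during every $\rho$ sub-block. The plan there is to maintain the tight invariant that between sub-blocks slot~$1$ holds $x$ and slot~$2$ holds the running partial sum; a sub-block then (i) overwrites slot~$2$ with $x - k/N$ via an affine reading from slot~$1$, (ii) runs the width-$1$ $\rho$-subnetwork on slot~$2$ with $\iota$ on slot~$1$ carrying $x$ through unchanged, and (iii) uses a single final affine layer to restore slot~$2$ to old-partial-sum $+\,\Delta_k\,\rho_{\varepsilon,\zeta}(x - k/N)$. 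To make step~(iii) feasible, I would pre-encode the old partial sum into slot~$1$ as a small linear perturbation of $x$ before step~(i); choosing the perturbation coefficient small enough that the induced shift of the $\rho$-argument stays within the $\zeta$-slack of $\rho_{\varepsilon,\zeta}$ keeps step~(ii) accurate, while choosing it not too small keeps the partial sum linearly recoverable at step~(iii). Showing that a single such perturbation scale works uniformly across the $N-1$ threshold rounds without compounding the error of $L$ beyond $\gamma$ is the technical crux of the width-$2$ argument and the step I expect to require the most care; an analogous slot-reuse argument covers the remaining $d_x = 2$ boundary case.
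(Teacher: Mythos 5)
The decomposition you propose---coordinatewise quantization $Q$, base-$N$ arithmetic aggregation, then a $1$-D lookup $L$---is conceptually clean, but the width bookkeeping does not close, and you have correctly located exactly where it breaks: the low-width accumulation step. Concretely, to advance the running index from $P_{k-1}$ to $P_k = P_{k-1} + \Delta_k\,\rho(x - k/N)$ in a width-$2$ block, the two slots entering the block must affinely encode $(x, P_{k-1})$, and the two slots leaving the activation layer are $(\rho(\text{affine of }x,P_{k-1}),\ \text{affine of }x,P_{k-1})$. If you then demand that the post-block state again affinely encode $(x, P_k)$, a short computation shows the required reparameterization is singular. Explicitly: write the entering state as slot $1 = x + \epsilon P_{k-1}$ and slot $2 = \alpha x + \beta P_{k-1} + \gamma$ (invertibility forces $\beta \neq \epsilon\alpha$). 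After the $\rho$ fires on $x - k/N$, the surviving $\iota$-slot carries $\mu x + \nu P_{k-1} + \lambda$, and forcing the exit affine to reproduce both coordinates yields $\nu/\mu = \epsilon$ from the first coordinate and $\nu/\mu = \beta/\alpha$ from the second, i.e.\ $\beta = \epsilon\alpha$---contradiction. Your ``perturbation'' fix is exactly the attempt to set $\epsilon$ small enough to protect the $\rho$-argument, but the final affine still produces an unwanted $x/\epsilon$ term when it tries to pull $P_{k-1}$ out of the drifted slot, and no choice of $\epsilon$ cancels it. The same three-quantity obstruction ($x_i$, running index, $\rho$-scratch) shows up at every $d_x$, not just $d_x \le 2$: during round $i$ your pipeline must carry $d_x - i$ unconsumed coordinates plus these three, giving $d_x + 2$ at $i = 1$.

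The paper sidesteps this by never maintaining a separate accumulator alongside the variable it is thresholding. Its width-$2$ lookup (Lemma~\ref{lem:piecewise-constant}) uses a self-modifying chain $g_i(x) = x + a(c_i + b)\,\rho(x^{(i)} - x)$: slot~$1$ keeps $x$ under $\iota$, slot~$2$ computes $\rho(x^{(i)}-x)$, and the final affine overwrites slot~$1$ with $x + a(c_i+b)\rho(\cdot)$. The shift constant $a(c_i+b)$ is so large that once a threshold fires the value leaves the active range, so later $g_j$'s are inert---no counting, hence no accumulator. Similarly its dimension-reduction map $f_1$ (Lemmas~\ref{lem:inductive-encoder} and \ref{lem:separating}) separates the grid cells along one axis by repeated large shifts rather than by base-$N$ arithmetic, so it fits in width $d_x$. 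If you want to rescue your arithmetic-coding plan, you would need to redesign $Q$ and $L$ as self-modifying chains in this style rather than as additive sums; as currently written, the construction needs roughly width $d_x + 2$.
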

\begin{figure*}
    \centering    \includegraphics[width=\linewidth]{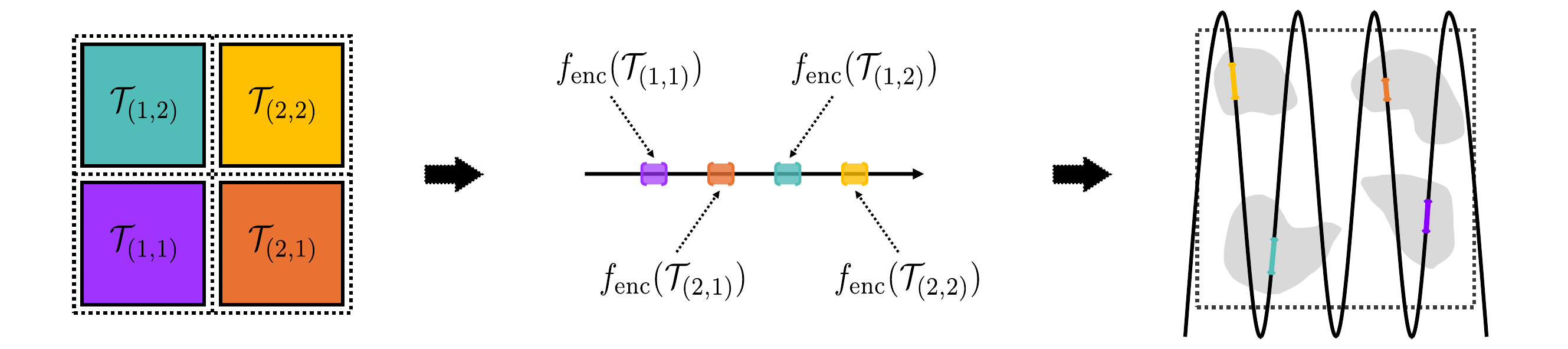}
    \caption{
    Illustration of $f_\text{dec}\circ f_\text{enc}$ when $d_x=2$, $d_y=2$, and $N=2$.
    $f_\mathrm{enc}$ first encodes each $\mcT_\nu$ to a bounded interval $f_\mathrm{enc}(\mcT_\nu)$. 
    Then, $f_\mathrm{dec}$ implements \emph{$\delta$-filling curve} of $[0,1]^2$,
    represented by the black curve, to decode each  $f_\mathrm{enc}(\mcT_\nu)$ (colored) that approximates $f^*(\mcT_\nu)$ (represented by the light gray area).
    }
    \label{fig:coding}
\end{figure*}

The collection of $\mcT_\nu$ in \cref{lem:encoder} 
can be regarded as an approximate partition of $[0,1]^{d_x}$:
its elements are disjoint and it covers almost all parts of the domain with a small enough $\gamma>0$.
By choosing a large enough $N$, the diameter of $f^*(\mcT_\nu)$ can be arbitrarily small, i.e., $f^*(x)\approx f^*(x')$ for all $x,x'\in\mcT_\nu$. 
Under this setup, choose $c_\nu$ for each $\nu$ so that $f_\text{dec}(c_\nu)\approx f^*(\mcT_\nu)$. %
Then,
$f_{\mathrm{enc}}$ in \cref{lem:encoder} maps each element $\mcT_\nu$ in the approximate partition to some small ball centered at $c_\nu$, with diameter $\gamma$.
Since $f_\text{dec}$ is continuous, this implies that for each $x\in\mcT_\nu$, $f_\text{dec}\circ f_\text{enc}(x)\approx f^*(x)$ with small enough $\delta$ for $f_\text{dec}$ and small enough $\gamma$, large enough $N$ for $f_\text{enc}$.
See \cref{fig:coding} for the illustration.
Here, we note that $f_\text{dec}\circ f_\text{enc}$ is a $(\sigma,\iota)$ network of width $\max\{d_x,d_y,2\}$.

For $x\notin\bigcup_\nu\mcT_\nu$, we have $f_\text{dec}\circ f_\text{enc}(x)\in[0,1]^{d_y}$ (i.e., bounded) by \cref{lem:decoder,lem:encoder}.
Since $\mu_{d_x}([0,1]^{d_x}\setminus(\bigcup_\nu\mcT_\nu))\to0$ as $\gamma\to0$, one can observe that for any $\varepsilon>0$, there exist small enough $\gamma,\delta$ and large enough $N$ such that $\|f_\text{dec}\circ f_\text{enc}-f^*\|_{L^p}\le\varepsilon$. Namely, a $(\sigma,\iota)$ network $f=f_\text{dec}\circ f_\text{enc}$ has width $\max\{d_x,d_y,2\}$ and completes the proof.
Given $\varepsilon>0$, our explicit choices of $\delta,\gamma,N$ and the detailed derivation of $\|f_\text{dec}\circ f_\text{enc}-f^*\|_{L^p}\le\varepsilon$ %
can be found in \cref{sec:error-analysis}.

\begin{figure*}
     \centering
         \begin{subfigure}[b]{0.45\linewidth}
             \centering
             \includegraphics[width=\linewidth]{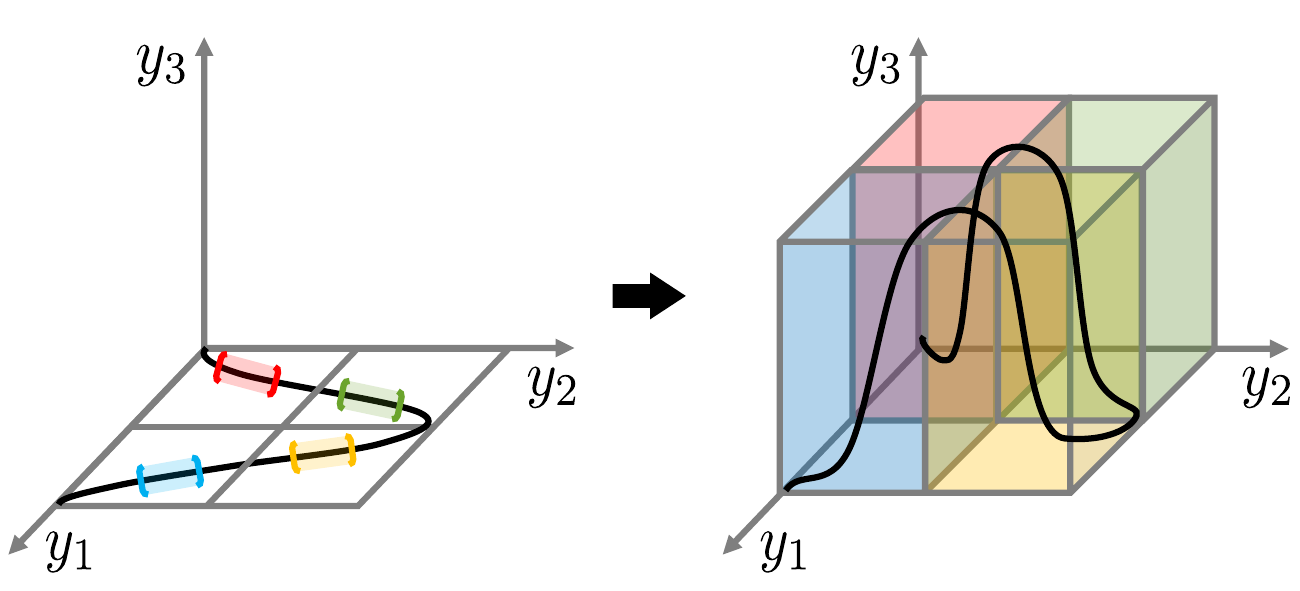}
             \caption{}
             \label{fig:2to3}
         \end{subfigure}
         \qquad
         \begin{subfigure}[b]{0.45\linewidth}
         \centering
         \includegraphics[width=\linewidth]{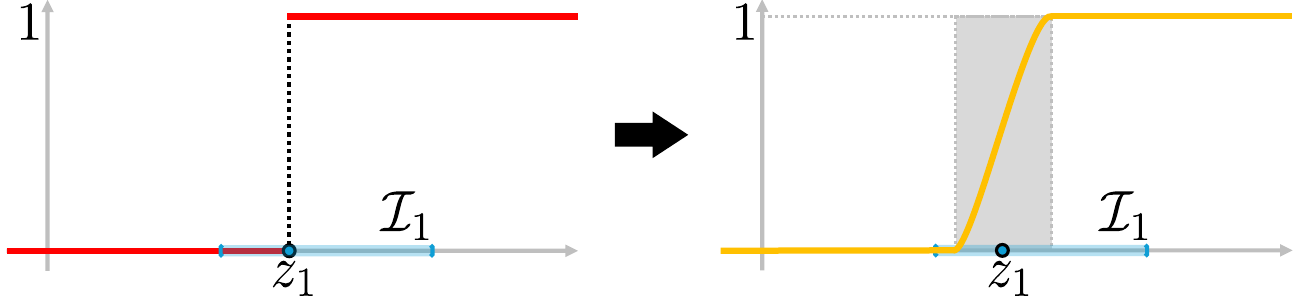}
         \caption{}
         \label{fig:step1}
         \vfill
         \includegraphics[width=\linewidth]{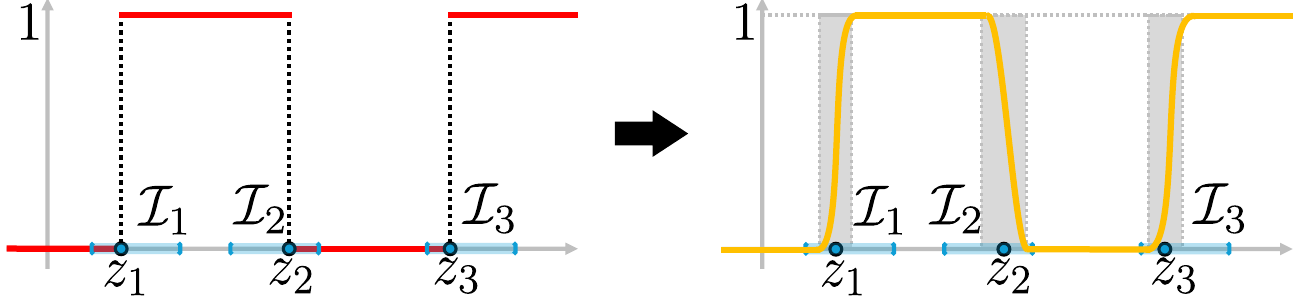}
         \caption{}
         \label{fig:step2}
         \end{subfigure}
    \caption{(a) Illustration of a $(1/N)$-filling curve $\tilde f$ of $[0,1]^3$. $\tilde f$ maps each open interval $\mcI_\nu$, represented by the colored brackets (left), to be intersected with the corresponding cube of the same color (right).
    (b) and (c) illustrates our network $\rho$ satisfying the properties of $\phi$ when $N=1$ and $N=3$, respectively.
            }
    \label{fig:decoder}
\end{figure*}

\subsection{Proof of \cref{lem:decoder}}\label{sec:pflem:decoder}

In this section, we prove \cref{lem:decoder} by showing the following: for each $N,d\in\bbN$, there exists a $(\sigma,\iota)$ network of width $d_y$ that is a $(1/N)$-filling curve of $[0,1]^d$. %
In particular, we inductively construct a $(1/N)$-filling curve of $[0,1]^d$ from $d=1$. %
Here, the base case $d=1$ is trivial: a $(\sigma,\iota)$ network $f(x)=\iota(x)$ is a $(1/N)$-filling curve of $[0,1]$ for all $N\in\bbN$.

We prove the general case ($d\ge 2$) using the inductive step described in the following lemma, whose formal proof is in \cref{sec:pflem:decoder-inductive}. Here, for $N,d\in\bbN$, we use $\mcC_{N,d,\nu}\defeq\prod_{i=1}^d[\frac{\nu_i-1}{N},\frac{\nu_i}{N}]$ and $\nu=(\nu_1,\dots,\nu_d)\in[N]^d$.

\begin{lemma}\label{lem:decoder-inductive}
Let $N,d\in\bbN$ and $\sigma$ be a squashable function.
Suppose that there exist disjoint open intervals $\mcI_{\nu}\subset[0,1]$ for all $\nu\in[N]^d$ and a $(\sigma,\iota)$ network $f:[0,1]\to[0,1]^{d}$ of width $d$ such that for each $x\in[0,1]$ and $\nu\in[N]^d$,
\begin{align*}
f(x)_1=x~~\text{and}~~f(\mcI_\nu)\subset\mcC_{N,d,\nu}.
\end{align*}
Then, there exist disjoint open intervals $\mcJ_{\tilde \nu}\subset[0,1]$ for all $\tilde \nu\in[N]^{d+1}$ and a $(\sigma,\iota)$ network $\tilde f:[0,1]\to[0,1]^{d+1}$ of width $d+1$ such that for each $x\in[0,1]$ and $\tilde \nu\in[N]^{d+1}$,
\begin{align*}
\tilde f(x)_1=x~~\text{and}~~\tilde f(\mcJ_{\tilde \nu})\subset\mcC_{N,d+1,\tilde \nu}.
\end{align*}
\end{lemma}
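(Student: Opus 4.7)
The plan is to build $\tilde f$ by appending one new scalar output $g(x)$ to $f(x)$, setting $\tilde f(x)=(f(x),g(x))$, and to define the new sub-intervals by partitioning each $\mcI_\nu=(p_\nu,q_\nu)$ into $N$ equal pieces with a small margin $\eta\in(0,\tfrac12)$:
\[
\mcJ_{(\nu,k)} \;=\; \left(p_\nu+\tfrac{k-1+\eta}{N}(q_\nu-p_\nu),\; p_\nu+\tfrac{k-\eta}{N}(q_\nu-p_\nu)\right),\quad k\in[N].
\]
These are disjoint open sub-intervals of $[0,1]$ contained in $\mcI_\nu$. Because $f(\mcJ_{(\nu,k)})\subset f(\mcI_\nu)\subset\mcC_{N,d,\nu}$ and $\tilde f(x)_1=f(x)_1=x$, the first $d$ coordinates are automatically correct; it remains to construct $g:[0,1]\to[0,1]$ with $g(\mcJ_{(\nu,k)})\subset[\tfrac{k-1}{N},\tfrac{k}{N}]$ and to realize $(f(x),g(x))$ inside a width-$(d+1)$ $(\sigma,\iota)$ network.

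To fit $\tilde f=(f,g)$ inside width $d+1$, I would proceed in two phases. Phase~1 (of the same depth as $f$) runs the layers of $f$ inside the first $d$ neurons while the $(d{+}1)$-th neuron merely carries the input $x$ through identity activations; using $f(x)_1=x$, the state after Phase~1 is $(x,f(x)_2,\dots,f(x)_d,x)$. Phase~2 preserves the first $d$ neurons via $\iota$ and updates only the last neuron, using layers of the form $g_{t+1}=\sigma(\alpha_t x+\beta_t g_t+\gamma_t)$ or $g_{t+1}=\iota(\alpha_t x+\beta_t g_t+\gamma_t)$ that read $x$ from neuron~$1$ and the running state $g_t$ from neuron~$d{+}1$ (and, when helpful for $d\ge2$, the preserved $f(x)_i$'s). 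Every hidden layer stays within $d+1$ neurons.

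The heart of the argument is designing a suitable $g$ through the Phase~2 recurrence. Since $g$ must take values in each $[\tfrac{k-1}{N},\tfrac{k}{N}]$ once per each of the $N^d$ intervals $\mcI_\nu$, $g$ is inherently non-monotone, which cannot be achieved by a pure chain $\sigma\circ\cdots\circ\sigma$ applied to $x$ when $\sigma$ is monotone. The two-neuron recurrence above does enable non-monotonicity: approximating $\step$ via $\rho_{\varepsilon,\zeta}$ from \cref{cond:step} and choosing $\beta_t$ with opposite sign to $\alpha_t$ makes the effective threshold of the next approximate step in $g_{t+1}$ shift between two different values depending on the current $\{0,1\}$-valued $g_t$, which roughly doubles the number of sign-changes of $g$ per $\sigma$-iteration. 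After $O((d{+}1)\log N)$ such $\sigma$-layers, $g$ realizes the correct jump pattern at the boundaries of the $\mcJ_{(\nu,k)}$'s; a further short sequence of $\iota$-and-$\sigma$ layers, using the $x$-signal still sitting in neuron~$1$ to form affine combinations with small coefficients, rescales and rearranges the plateau values so that each $\mcJ_{(\nu,k)}$ lands near $\tfrac{k-1/2}{N}$.

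The main obstacle I anticipate is cumulative error through the iterated step approximations. The key leverage is that \cref{cond:step} grants independent and arbitrarily small choice of the accuracy $\varepsilon$ and the excluded-neighborhood width $\zeta$ at every layer. I would place every approximate-step discontinuity inside the positive gaps between consecutive $\mcJ_{(\nu,k)}$'s (of width at least $\tfrac{2\eta}{N}\min_\nu(q_\nu-p_\nu)$), so that each excluded neighborhood $(-\zeta,\zeta)$ of \cref{cond:step} is disjoint from every $\mcJ_{(\nu,k)}$. Consequently the per-layer $\varepsilon$-error on $g|_{\mcJ_{(\nu,k)}}$ accumulates only additively up to a geometric factor across the $O(\log N)$ iterations; choosing the layerwise tolerances small enough keeps the total error below the $\tfrac{1}{2N}$-margin around each target value $\tfrac{k-1/2}{N}$, yielding $g(\mcJ_{(\nu,k)})\subset[\tfrac{k-1}{N},\tfrac{k}{N}]$ and hence $\tilde f(\mcJ_{(\nu,k)})\subset\mcC_{N,d+1,(\nu,k)}$ as required.
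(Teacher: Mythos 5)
Your route is genuinely different from the paper's: you pre-select the sub-intervals $\mcJ_{(\nu,k)}$ as equal subdivisions of $\mcI_\nu$ and then try to build $g$ that is approximately constant (equal to $\tfrac{k-1/2}{N}$) on each of the $N^{d+1}$ pieces. The paper instead builds a \emph{binary} function on the last coordinate (an approximate $\{0,1\}$-valued indicator with $O(N^d)$ transitions) whose range on each $\mcI_\nu$ covers $[\tfrac{1}{2N},1-\tfrac{1}{2N}]$, and then obtains the $\mcJ_{\tilde\nu}$'s \emph{a posteriori} via the intermediate value theorem. The paper's choice is cheaper: it only needs two plateau values and fewer transitions, and it never has to worry about which specific sub-intervals of $\mcI_\nu$ will work — continuity hands them to you. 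Your scheme can also be made to work (it is essentially the paper's separate piecewise-constant approximation lemma applied to $N^{d+1}$ intervals), but you should also check $g([0,1])\subset[0,1]$ over the gap regions and off $\bigcup\mcI_\nu$, which your sketch does not address and which requires some nontrivial clamping machinery in the paper.

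The concrete gap is the ``doubling'' claim and the resulting $O((d{+}1)\log N)$ depth. A width-$2$ recurrence $g_{t+1}=\rho(\alpha_t x+\beta_t g_t+\gamma_t)$ with $\rho$ monotone (as granted by \cref{cond:step}) and $g_t$ approximately $\{0,1\}$-valued cannot roughly double the number of transitions per layer. The argument $\alpha_t x+\beta_t g_t+\gamma_t$ is piecewise linear in $x$ with slope $\alpha_t$ on every constant piece of $g_t$ and only two possible intercepts (one for each value of $g_t$), so it can cross the $\rho$-threshold on at most \emph{two} of those pieces; in addition, it inherits a jump at each existing breakpoint of $g_t$ falling inside a single fixed window of width $|\beta_t/\alpha_t|$. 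Hence one layer adds at most two new transitions, so the breakpoint count grows \emph{linearly} in depth, and you need $\Theta(N^{d+1})$ such layers, not $O((d{+}1)\log N)$. This matches what the paper's own recursion $h_\ell(x)=\step\bigl(x-z_{m-\ell+2}+(z_{m-\ell+2}-z_{m+\ell})h_{\ell-1}(x)\bigr)$ does: exactly two new breakpoints per step. Since width-bounded networks here may be arbitrarily deep, this error does not kill the proof — but the stated mechanism is wrong, and if you had used the claimed $O(\log N)$ depth as a budget constraint elsewhere the argument would break. Replacing ``doubles'' by ``adds two'' and the depth bound by $O(N^{d+1})$ restores correctness.
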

One can observe that $(\sigma,\iota)$ networks $f$ and $\tilde f$ in \cref{lem:decoder-inductive} are $(1/N)$-filling curves of $[0,1]^{d}$ and $[0,1]^{d+1}$, respectively. Furthermore, our filling curve construction $f(x)=\iota(x)$ for the base case satisfies the assumption in \cref{lem:decoder-inductive} with $\mcI_\nu=(\frac{\nu-1}{N},\frac{\nu}{N})$ for all $N\in\bbN$ and $\nu\in[N]$.
Hence, by \cref{lem:decoder-inductive}, we can conclude that for each $N,d\in\bbN$, there exists a $(\sigma,\iota)$ network that is a $(1/N)$-filling curve of $[0,1]^d$; this proves \cref{lem:decoder}.

We now briefly illustrate our main idea for constructing $\tilde f$ in \cref{lem:decoder-inductive} given $f$. %
Suppose that disjoint open intervals $\mcI_\nu$ for all $\nu\in[N]^d$ and corresponding $(\sigma,\iota)$ network $f$ of width $d$ in \cref{lem:decoder-inductive} are given.
Then, to prove \cref{lem:decoder-inductive}, it suffices to construct a $(\sigma,\iota)$ network $\tilde f$ of width $d+1$ such that for each $i\in[d]$ and $\nu\in[N]^d$,
\begin{align*}
\tilde f(x)_i=f(x)_i~~\text{and}~~[\tfrac1{2N},1-\tfrac1{2N}]\subset \tilde f(\mcI_\nu)_{d+1}.
\end{align*}
This implies that if we can construct a $(\sigma,\iota)$ network $\phi:[0,1]\to\bbR^2$ of width $2$ such that for each $\nu\in[N]^d$,    
\begin{align}
\phi(x)_1=x~\text{and}~[\tfrac1{2N},1-\tfrac1{2N}]\subset \phi(\mcI_\nu)_2,\label{eq:phi}
\end{align}
then we can construct $\tilde f$ in \cref{lem:decoder-inductive} by choosing
\begin{align*}
&\tilde f(x)_1=\phi(f(x)_1)_1,~~\tilde f(x)_{d+1}=\phi(f(x)_1)_2,~\text{and}\\
&\tilde f(x)_i=\iota\circ\cdots\circ\iota(f(x)_i)~\text{for all}~i\in\{2,\dots,d\}.
\end{align*}
See \cref{fig:2to3} for the illustration.
We can construct such $\phi$ using the squashability of $\sigma$.
For example, suppose that $N=1$ and $d=1$ (i.e., there is exactly one $\mcI_\nu$). By \cref{def:squashable}, for any $\varepsilon,\zeta>0$ and compact $\mcK\subset\bbR$ with $[-\zeta,\zeta]\subset\mcK$, there is a width-$1$ $\sigma$ network $\rho$  such that
\begin{align*}
\max_{x\in\mcK\setminus(-\zeta,\zeta)}|\rho(x)-\step(x)|\le\varepsilon.
\end{align*}
Then, by the intermediate value theorem, we have 
$$[\varepsilon,1-\varepsilon]\subset\rho([-\zeta,\zeta]).$$
This implies that by choosing $\tilde \rho(x)=\rho(x-z_\nu)$ for some $z_\nu\in\mcI_\nu$ and $\mcK$ containing $\mcI_\nu$ with small enough $\varepsilon,\zeta>0$, 
it holds that $[\frac1{2N},1-\frac1{2N}]\subset\tilde \rho(\mcI_\nu)$
(see \cref{fig:step1}). In this case, we can choose a width-2 $(\sigma,\iota)$ network $\phi$ satisfying \cref{eq:phi} as $\phi(x)_1=x$ and $\phi(x)_2=\tilde \rho(x)$.

Such a construction also extends to an arbitrary number of $\mcI_\nu$ by composing $\rho$ (i.e., an approximation of $\step$). For example, let $\mcI_1,\mcI_2,\mcI_3\subset[0,1]$ be disjoint open intervals and let $z_i\in\mcI_i$. Then, we have
\begin{align*}
\psi(x) 
&= \step(x-z_1 + (z_1 - z_3) \times \step(x - z_2))\\
&=\begin{cases}
0 & \text{if} ~~ x \le z_1 ~~\text{or}~~ z_2 \le x < z_3\\
1 & \text{otherwise}
\end{cases}.
\end{align*}
Namely, by replacing $\step$ by $\rho$ in $\psi$ with small enough $\varepsilon,\zeta>0$ (and denoting that function by $\tilde \psi$), we have $[\frac1{2N},1-\frac1{2N}] \subset \tilde \psi(\mcI_i)$ by the intermediate value theorem (see \cref{fig:step2}).
We present a more detailed argument for general $N,d$ in the proof of \cref{lem:indicator} in \cref{sec:pflem:indicator}.

\subsection{Proof of \cref{lem:encoder}}\label{sec:pflem:encoder-sketch}

We now prove \cref{lem:encoder}. Our construction of $f_\text{enc}$ consists of two $(\sigma,\iota)$ networks: $f_1:[0,1]^{d_x}\to\bbR$ of width $d_x$ and $f_2:\bbR\to\bbR$ of width $2$. Here, $f_1$ maps each $\mcT_{\nu}$ to a disjoint compact interval $f_1(\mcT_{\nu})$ and $f_2$ is designed to satisfy $f_2(f_1(\mcT_{\nu}))\subset\mcB_\gamma(c_\nu)$ for each $\nu$.
Namely, $f_\text{enc}=f_2\circ f_1$ satisfies $f(\mcT_\nu)\subset\mcB_{\gamma}(c_\nu)$.

{\bf Construction of $f_2$.}
The following lemma shows the existence of $f_2$ such that $f_2(f_1(\mcT_{\nu}))\subset\mcB_\gamma(c_\nu)$ for each $\nu\in[N]^{d_x}$. 
\begin{lemma}\label{lem:piecewise-constant}
Let $\mcK\subset\bbR$ be a compact interval and $\mcI_1,\dots,\mcI_N\subset\mcK$ be disjoint closed subintervals.
Then, for any $\varepsilon>0$, squashable $\sigma$, and $c_1, \dots, c_N \in \bbR$, there exists a $(\sigma, \iota)$ network $f:\mcK \to [0,1]$ of width $2$ such that for each $k \in [N]$,
$$\sup_{x \in \mcI_k} |f(x) - c_k| \le \varepsilon.$$
\end{lemma}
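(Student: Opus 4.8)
The plan is to reduce the construction of $f$ to two ingredients that $(\sigma,\iota)$ networks of width $2$ can implement: (i) approximating $\step$-like jumps via \cref{cond:step}, and (ii) approximating the identity via \cref{lem:kidger}. The key observation is that a piecewise-constant function that takes value $c_k$ on each interval $\mcI_k$ can be written as a telescoping sum of scaled step functions. Concretely, order the intervals so that $\mcI_1<\mcI_2<\dots<\mcI_N$ along $\mcK$ (WLOG, after relabeling), pick separating points $z_1<z_2<\dots<z_{N-1}$ with $z_k$ lying strictly between $\mcI_k$ and $\mcI_{k+1}$, and set
$$
g(x) = c_1 + \sum_{k=1}^{N-1} (c_{k+1}-c_k)\,\step(x - z_k).
$$
Then $g(x)=c_k$ for every $x\in\mcI_k$. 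The goal is to realize (an approximation of) $g$ by a width-$2$ $(\sigma,\iota)$ network.

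First I would build $g$ incrementally using one ``running accumulator'' coordinate and one ``working'' coordinate. Maintain a state $(x, s)\in\bbR^2$ where the first coordinate always carries (an approximation of) the original input $x$ and the second carries the partial sum $s_k = c_1 + \sum_{j<k}(c_{j+1}-c_j)\step(x-z_j)$. One layer advances the state from $(x,s_k)$ to $(x,s_{k+1})$: using \cref{cond:step}, compose with a width-$1$ $\sigma$-subnetwork $\rho_{\varepsilon',\zeta'}$ applied to $x-z_k$ to get a value $\approx\step(x-z_k)$ on $\mcK$ outside a $\zeta'$-neighborhood of $z_k$, which we arrange to avoid all the $\mcI_j$'s; affinely scale it by $(c_{k+1}-c_k)$ and add it into the accumulator, while passing $x$ through (approximately) with \cref{lem:kidger}. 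Since each $\rho_{\varepsilon',\zeta'}$ is a width-$1$ $\sigma$ network, one such ``step layer'' uses width $2$ in total. Stacking $N-1$ of these layers and then reading off the second coordinate yields a width-$2$ $(\sigma,\iota)$ network whose output is uniformly within an error that can be driven below $\varepsilon$ on $\bigcup_k\mcI_k$ by choosing $\varepsilon'$ small enough relative to $\varepsilon/(N\max_k|c_{k+1}-c_k|)$ and $\zeta'$ small enough that $(z_k-\zeta',z_k+\zeta')$ misses every $\mcI_j$.

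Two technical points need care. One is error propagation: each layer both introduces a fresh approximation error (from $\rho$ and from the $\iota$-approximation of the pass-through) and is then fed into the next layer's affine map, so I need the step map to be Lipschitz in its input on the relevant compact region — which it is, since affine maps are Lipschitz and we only ever compose with $\sigma$ through the fixed $\rho_{\varepsilon',\zeta'}$, whose range on a compact set is compact — and then a standard telescoping/triangle-inequality argument controls the accumulated error by $O(N)$ times the per-layer error. The other, and I expect this to be the main obstacle, is the simultaneous width and approximation bookkeeping: I must keep the accumulator coordinate from ever leaving a fixed compact interval (so that \cref{lem:kidger} and \cref{cond:step} apply with uniform constants at every layer), which requires an a~priori bound on $|s_k|\le |c_1|+\sum|c_{j+1}-c_j|$ plus a slack for the running errors, and I must ensure the chosen $\zeta'$-neighborhoods of the $z_k$ are disjoint from all $\mcI_j$ simultaneously — both are arrangeable since there are finitely many intervals and finitely many layers, but the quantifier order ($\varepsilon$ given first, then choose $N-1$ layers, then choose $\varepsilon',\zeta'$ uniformly) must be stated carefully. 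Finally, composing with one more affine map (a clamp is not available, but it is unnecessary) is not needed for the $[0,1]$ range claim: we instead note $g$ itself need not land in $[0,1]$, so I would additionally precompose the target values $c_k$ with an affine rescaling into a subinterval of $[0,1]$ and absorb it, or simply observe that the statement allows us to first shift and scale all $c_k$ into $[0,1]$ and handle the general case by the affine invariance already used throughout; the cleanest route is to prove it for $c_k\in[0,1]$, keep every partial sum in $[0,1]$ by construction, and note the error slack can be taken small enough that the network output stays in $[0,1]$ as well (or, if one prefers, post-compose with an affine map followed by a $\sigma$-approximation of $\iota$ restricted to a compact set containing the output, which does not increase width).
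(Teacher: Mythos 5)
Your reduction to a staircase $g(x)=c_1+\sum_{k}(c_{k+1}-c_k)\step(x-z_k)$ is fine, but the claimed width-$2$ implementation of the accumulator scheme does not go through, and this is precisely the crux of the lemma. In your ``step layer'' the subnetwork $\rho_{\varepsilon',\zeta'}$ from \cref{cond:step} is a width-$1$ $\sigma$ network that in general has several internal layers; while it is being evaluated, one of the two channels is occupied by that computation, so the remaining channel can carry exactly one scalar across the block. But your update $s_{k+1}=s_k+(c_{k+1}-c_k)\rho(x-z_k)$ needs \emph{both} the old partial sum $s_k$ (to survive until the end of the block, where it is added to $\rho$'s output) \emph{and} the original input $x$ (to form the pre-activation $x-z_k$ now and $x-z_{k+1}$ in the next block). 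With width $2$ you must drop one of them: if the free channel carries $x$, the accumulator is lost at the block's first affine map; if it carries $s_k$, then $x$ is gone and cannot be reconstructed from $\rho(x-z_k)$ by affine maps, so the next threshold cannot be applied. Carrying an affine mixture $\alpha x+\beta s_k$ does not help either, since affine layers cannot later disentangle it. So the scheme as written needs width $3$, and the error-propagation and range remarks do not address this.

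The paper's proof resolves exactly this obstruction by never maintaining a separate accumulator: it uses maps of the form $g_i(x)=x+a(c_i+b)\,\step(-(x-x^{(i)}))$, i.e.\ the ``value'' is folded into the $x$-coordinate itself by a single huge shift (scale $a$ chosen so that $a(c_i+b)>x^{(N)}$), so each input is shifted exactly once (a shifted point lies beyond every later threshold and is never shifted again), and a final affine map $x\mapsto x/a-b$ recovers $c_i$ up to the negligible residual $x/a$. Each $g_i$ uses only one step-channel plus one pass-through channel, so the whole stage is a width-$2$ $(\sigma,\iota)$ network once $\step$ is replaced by $\rho$ from \cref{cond:step}; a second width-$2$ stage then pushes the output into $[0,1]$ on all of $\mcK$ (your telescoping form is better behaved near the thresholds, but the paper still needs this clamping stage for its construction, since a partially-triggered shift followed by $g_{\mathrm{cut}}$ can produce values outside $[0,1]$). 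To repair your proof you would need either this merge-into-one-coordinate idea (or an equivalent one, e.g.\ re-thresholding in the transformed coordinate using the strict monotonicity of $\rho$), not just more careful bookkeeping of $\varepsilon',\zeta'$.
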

We prove \cref{lem:piecewise-constant} by explicitly constructing a $(\sigma,\iota)$ network that approximates a piecewise constant function which maps each interval $\mcI_k$ to $c_k$. The formal proof of \cref{lem:piecewise-constant} is in \cref{sec:pflem:piecewise-constant}.

{\bf Construction of $f_1$.} In the remainder of this section, we construct a $(\sigma,\iota)$ network $f_1$ of width $d_x$ that maps each $\mcT_\nu$ to a disjoint compact interval $f_1(\mcT_\nu)$. Here, we assume $d_x \ge 2$; if $d_x=1$, we choose $f_1(x)=\iota(x)$. 
To describe our construction we define a \emph{$d$-grid.}
\begin{definition}
A collection of sets $\mcG\subset2^{\bbR^d}$ is a ``$d$-grid'' of size $(n_1,\dots,n_d)\in\bbN^d$ if there exist %
disjoint compact intervals $\mcI_{i,1},\dots,\mcI_{i,n_i}\subset\bbR$ for each $i\in[d]$ such that $$\mcG=\{\mcI_{i,j_1}\times\cdots\times\mcI_{i,j_d}:j_i\in[n_i],~~\forall i\in[d]\}.$$
\end{definition}
One can observe that any finite set of disjoint intervals is a $1$-grid and $\mcT_\nu$ is a $d_x$-grid.
We construct $f_1$ using the following lemma. The proof of \cref{lem:inductive-encoder} is in \cref{sec:pflem:inductive-encoder}. 

\begin{figure*}
    \centering    
         \begin{subfigure}[b]{0.5\linewidth}
         \centering
         \includegraphics[width=1.0\linewidth]{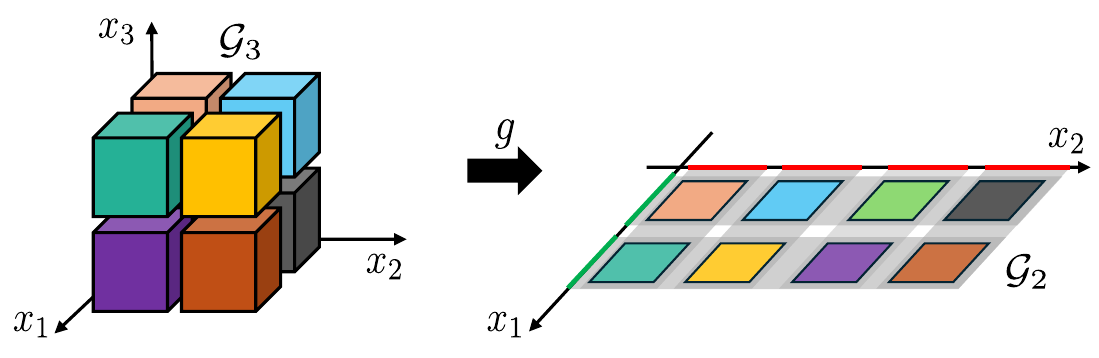} 
         \caption{}
         \label{fig:encoder2}
      \end{subfigure}
      \quad
        \begin{subfigure}[b]{0.4\linewidth}
         \centering
        \includegraphics[width=1.0\linewidth]{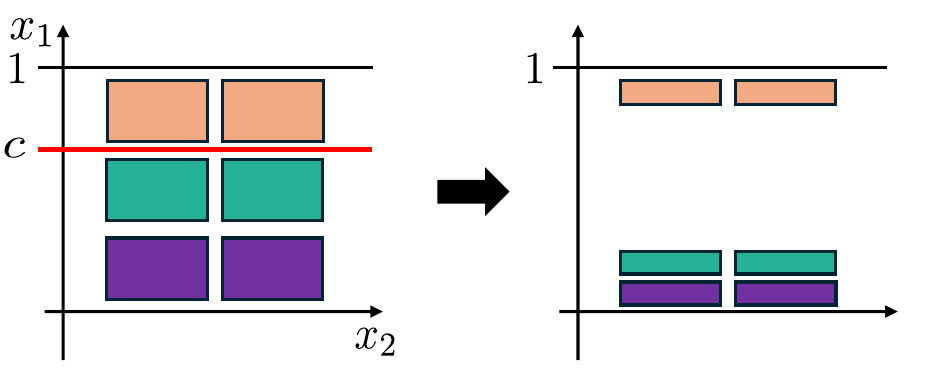}
         \caption{}
         \label{fig:encoder1}

         \end{subfigure}
         \\
         \begin{subfigure}[b]{1.0\linewidth}
         \centering
        \includegraphics[width=1.0\linewidth]{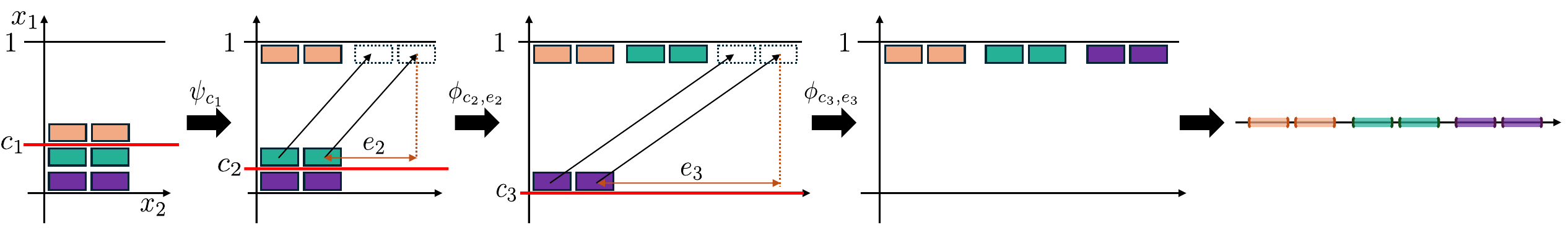}
         \caption{}
         \label{fig:encoder3}
         \end{subfigure}

    \caption{(a) Illustration of a function $g:\bbR^3 \to\bbR^2$ that maps sets in a $3$-grid $\mcG_3$ of size $(2,2,2)$ to distinct sets in $2$-grid $\mcG_2$ of size $(2,4)$. 
    (b) Illustration of $\psi_c:\bbR^2\to\bbR^2$. Here, the first coordinate of $\phi_c(x)$ is approximately $1$ or $0$ depending on whether $x_1$ exceeds $c$ or not while the second coordinate is $x_2$. (c) Illustration of our construction of $f$ when $\mcG$ is a $2$-grid of size $(3,2)$ and $e_2,e_3>0$ are chosen so that all sets in $\mcG$ are disjoint in the second coordinate.}
    \label{fig:encoder}
\end{figure*}

\begin{lemma}\label{lem:inductive-encoder}
Let $\sigma$ be a squashable function and $\mcG$ be a $2$-grid of size $(n_1,n_2)$.
Then, there exist %
a $(\sigma,\iota)$ network $f:\mcK\to\bbR$ of width $2$ such that $\{f(\mcS):\mcS\in\mcG\}$ is an $1$-grid of size $n_1n_2$.
\end{lemma}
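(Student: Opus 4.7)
The plan is to construct $f$ so that for $(x_1,x_2)\in\mcI_{1,j}\times\mcI_{2,k}$ the scalar $f(x_1,x_2)$ lies in a small window centered at $\tilde E_j+\epsilon x_2$, where $\tilde E_1<\tilde E_2<\cdots<\tilde E_{n_1}$ are cumulative shifts chosen so that consecutive $\tilde E_j$'s are separated by more than $\epsilon\cdot\diam(\bigcup_k\mcI_{2,k})$. After a WLOG affine reparametrization that orders the $\mcI_{1,j}$'s along the first coordinate and after picking thresholds $c_k$ in the gaps, within a column the $n_2$ resulting output intervals are disjoint because the $\mcI_{2,k}$'s are, and across columns they are disjoint because the $\tilde E_j$'s are widely spaced; this would yield the desired $1$-grid of size $n_1n_2$.

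To realize this as a width-$2$ $(\sigma,\iota)$ network, I will keep one neuron as a running accumulator $u$ and use the other as a scratch slot that computes the next threshold. Initialize $u_0:=Kx_1+\epsilon x_2$ via an $\iota$-activated affine of the two input coordinates, for a suitable large $K$. Then apply $n_1-1$ ``shift-and-threshold'' blocks, each consisting of two width-$2$ layers: the scratch neuron takes $s_k:=\sigma(u_{k-1}-c_kK-\tilde E_{k-1})$, which by squashability of $\sigma$ approximates $\step(x_1-c_k)$ as long as $K$ dominates the $\epsilon x_2$-contribution to the argument; the next affine then updates $u_k:=u_{k-1}+E_k s_k$ back into the accumulator (again $\iota$-activated). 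After all $n_1-1$ blocks, $u_{n_1-1}\approx Kx_1+\epsilon x_2+\tilde E_{j(x_1)}$, and I read off $f(x_1,x_2):=u_{n_1-1}$. The pattern alternates an $\iota$-activation (carrying the accumulator) with a $\sigma$-activation (producing the next binary bit), so the network is genuinely $(\sigma,\iota)$ of width $2$.

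The hard part will be the joint choice of $K,\epsilon$, and the $E_k$'s: threshold accuracy demands $K\cdot g_{\min}\gg \epsilon M_2+\zeta$, where $g_{\min}=\min_{j,k}\dist{c_k}{\mcI_{1,j}}$, $M_2=\diam(\bigcup_k\mcI_{2,k})$, and $\zeta$ is the neighborhood of $0$ on which the squashability-approximation of $\step$ is loose; simultaneously, within-column disjointness of the $u$-values forces $KL_{1,\max}<\epsilon\gamma_{2,\min}$, where $L_{1,\max}=\max_j\diam(\mcI_{1,j})$ and $\gamma_{2,\min}$ is the minimum gap between consecutive $\mcI_{2,k}$'s. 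Picking $\epsilon$ sufficiently large and $K$ in the resulting interval usually works, but if the raw geometry of $\mcG$ is unfavorable I will first compose with an initial width-$2$ staircase-like sub-network that crushes each $\mcI_{1,j}$ into a small region around a distinct value (an adaptation of Lemma~\ref{lem:piecewise-constant}), so that the effective $L_{1,\max}$ shrinks and the two bounds on $K$ become compatible; the $\epsilon x_2$ perturbation is carried through this pre-processing by the same joint-encoding trick used for $u_0$.

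Once the parameters are consistent, the map $f$ is continuous, so each $f(\mcI_{1,j}\times\mcI_{2,k})$ is automatically a compact interval; the separation estimates above give pairwise disjointness across $(j,k)$, and therefore $\{f(\mcS):\mcS\in\mcG\}$ is the required $1$-grid of size $n_1n_2$.
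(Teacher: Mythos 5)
Your plan — joint-encode $(x_1,x_2)$ into a single scalar $u_0=Kx_1+\epsilon x_2$ and then run a width-$2$ shift-and-threshold staircase — is a genuinely different route from the paper's. The paper never collapses to one dimension before the last step: it keeps a true two-dimensional state and repeatedly applies the shear-conjugated step map of \cref{lem:separating}, which moves a large shift $r$ into the second coordinate of one chosen column while the first coordinate undergoes the strictly increasing $\rho$, all at width $2$, and only projects onto the second coordinate at the end. Your route is slicker in spirit, but the early information bottleneck makes the parameter bookkeeping the whole ballgame, and you correctly identify the two competing constraints: threshold accuracy needs $K g_{\min}\gg \epsilon M_2+\zeta$, within-column disjointness needs $K L_{1,\max}<\epsilon\gamma_{2,\min}$, hence jointly $M_2 L_{1,\max}\lesssim g_{\min}\gamma_{2,\min}$. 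The problem is that this inequality is violated in essentially every instance the lemma is actually called on: for the grid $\{\mcT_\nu\}$ of \cref{lem:encoder} one has $L_{1,\max}\approx(1-2\gamma)/N$, $g_{\min}\approx\gamma/N$, $\gamma_{2,\min}\approx 2\gamma/N$, $M_2\approx 1$, so the condition reads $(1-2\gamma)\lesssim 2\gamma^2/N$, which never holds. So the ``unfavorable geometry'' branch of your argument is the main case, and that branch contains a real gap.

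The proposed fix — precompose with a width-$2$ sub-network that crushes each $\mcI_{1,j}$ to a small neighborhood of a distinct value (adapting \cref{lem:piecewise-constant}) while ``carrying $\epsilon x_2$ through by the same joint-encoding trick'' — does not go through at width $2$ as written. If the crushing acts on $x_1$ with $x_2$ held in a separate slot, you need three slots: the accumulator for the staircase, the scratch slot where the width-$1$ $\sigma$-network $\rho$ approximating $\step$ is evaluated (the construction in \cref{lem:piecewise-constant} genuinely uses two slots), and a third slot to preserve $x_2$ via $\iota$. If instead you crush inside the already joint-encoded scalar, note that the within-column constraint is a hard, permanent one: the map $u_0\mapsto u_{n_1-1}$ is monotone, so any overlap between the $u_0$-images of cells $(j,k)$ and $(j,k')$ (which occurs exactly when $K L_{1,j}\ge\epsilon\gamma_{2,\min}$) cannot be undone by any subsequent monotone one-dimensional processing, affine-per-band or not, because within a band every such map scales the $Kx_1$-spread and the $\epsilon x_2$-spread by the same local factor. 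Moreover, any band-aware preprocessing already needs to separate the bands $K\mcI_{1,j}+\epsilon\bigcup_k\mcI_{2,k}$ in $u_0$, i.e., the very threshold inequality you are trying to relax. In short, no width-$2$ preprocessing on $u_0$, nor on $(x_1,x_2)$ with $x_2$ stored separately, resolves the conflict; the tool that does exactly this job at width $2$ is the shear conjugation of \cref{lem:separating}, which is what your argument implicitly needs and would have to re-derive. A minor secondary point: you write the scratch update as $s_k:=\sigma(\cdots)$, but a single application of $\sigma$ does not approximate $\step$ for a general squashable $\sigma$; you want the width-$1$ $\sigma$-network $\rho_{\varepsilon,\zeta}$ from \cref{cond:step}, which does fit in one slot because the accumulator can be passed through $\rho$'s intermediate layers by $\iota$-activations.
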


\cref{lem:inductive-encoder} implies that there exists a $(\sigma, \iota)$ network $f$ of width $2$ that maps sets in a $2$-grid to sets in an $1$-grid. 
This implies that for any distinct sets $\mcS,\mcS'$ in the $2$-grid, $f(\mcS)\cap f(\mcS')=\emptyset$.
We now construct $f_1$ by using $(\sigma, \iota)$ networks that reduce dimensions one by one while preserving the disjointness of each $\mcT_\nu$. 

We first show that for any $d\ge2$ and $d$-grid $\mcG$ of size $(n_1,\dots,n_d)$, we can construct a $(\sigma,\iota)$ network $g$ of width $d$ that maps sets in the grid to a $(d-1)$-grid of size $(n_1,\dots,n_{d-2},n_{d-1}n_d)$.
Specifically, such $g_d$ can be constructed by using \cref{lem:inductive-encoder}.
Let $\mcG'$ be a $2$-grid defined by considering the last two coordinates of sets in $\mcG$, i.e., $$\mcG'=\big\{\{(x_{d-1},x_d):(x_1,\dots,x_d)\in\mcS\}:\mcS\in\mcG\big\}.$$
Then, $g$ can be constructed as
\begin{align*}
g(x)_i=\begin{cases}
x_i~&\text{if}~i\le d-2\\
\phi(x_{d-1},x_d)_1~&\text{if}~i=d-1\\
\phi(x_{d-1},x_d)_2~&\text{if}~i=d
\end{cases}
\end{align*}
where $\phi$ is a $(\sigma,\iota)$ network of width $2$ in \cref{lem:inductive-encoder} that maps the $2$-grid $\mcG'$ of size $(n_{d-1},n_d)$ to some $1$-grid of size $n_{d-1}n_d$; see \cref{fig:encoder2} for the illustration of $g$ when $d=3$.

Let $\mcG_{d_x}=\{\mcT_\nu:\nu\in[N]^{d_x}\}$ be a $d_x$-grid of size $(N,\dots,N)$.
As in the construction of $g$, we recursively construct $g_{i}$ for $i=d_x,d_x-1,\dots,2$ as a $(\sigma,\iota)$ network of width $i$ that maps an $i$-grid $\mcG_{i}$ of size $(N,\dots,N,N^{d_x-i+1})$ to some $(i-1)$-grid $\mcG_{i-1}$ of size $(N,\dots,N,N^{d_x-i+2})$.
We then construct $f_1$ as $f_1=g_2\circ g_3\circ\cdots\circ g_{d_x}$. One can observe that $f_1$ has width $d_x$ and maps sets in $\mcG_{d_x}$ to distinct sets in some $1$-grid.

{\bf Intuition behind \cref{lem:inductive-encoder}.}
We now briefly describe our main proof idea for \cref{lem:inductive-encoder} where the formal proof is deferred to \cref{sec:pflem:inductive-encoder}. %
Our construction of $f$ is based on the squashability of $\sigma$. 
Observe that by the definition of the squashability (\cref{def:squashable}), for any compact set $\mcK\subset\bbR$, there exists a width-$1$ network $\rho$ that is strictly increasing and approximates $\step$ on $\mcK$ (see \cref{cond:step}).

Consider a width-$2$ network $\psi_c:\bbR^2\to\bbR^2$ defined as $\psi_c(x)=(\rho(x_1-c),x_2)$ for some $c\in\bbR$.
Then, by choosing a proper $c$ and $\mcK$, $\psi$ splits sets in $\mcG$ into two parts depending on whether their first coordinate exceeds $c$ or not. Here, $\psi_c(\mcS)_1$ will be close to one if the first coordinate of $\mcS$ exceeds $c$ and $\psi_c(\mcS)_1$ will be close to zero otherwise.
We note that by the strict monotonicity of $\rho$, the order of the first coordinate of the sets does not change. See \cref{fig:encoder1} for the illustration.

Furthermore, we can also change the second coordinate while splitting the first coordinate. For any $e>0$, by composing $\psi_c$ with some invertible affine transformation $\kappa_e:\bbR^2\to\bbR^2$, we can construct a width-$2$ network $\phi_{c,e}=\kappa^{-1}_e\circ\psi_c\circ\kappa_e$ so that
\begin{align*}
\phi_{c,e}(x)\approx\begin{cases}
x~&\text{if}~x_1\approx 1\\
x~&\text{if}~x_1\approx 0~\text{and}~x_1<c\\
(1,x_2+e)~&\text{if}~x_1\approx 0~\text{and}~x_1>c.\\
\end{cases}
\end{align*}
Using such $\psi_c$ and $\phi_{c,e}$, we construct $f$ by sequentially separating sets in $\mcG$ based on their first coordinate.
First, we apply some invertible affine transformation so that the first coordinate of all sets in $\mcG$ is close to zero (as in the left of \cref{fig:encoder3}).
We then split the sets of the largest first coordinate using $\psi_c$ with some proper choice of $c$.
After that, we sequentially split sets as in \cref{fig:encoder3}. Lastly, we apply a projection onto the second coordinate. 
For a more formal argument, see \cref{sec:pflem:inductive-encoder}.
\vspace{-0.1in}

\section{Conclusion}\label{sec:conclusion}
In this work, we characterize the minimum width enabling universal approximation of $L^p([0,1]^{d_x},\bbR^{d_y})$. 
In particular, we consider a general class of activation functions, called squashable, whose alternative composition with affine transformations can approximate both the identity function and $\step$ on compact domains.
We show that for networks using a squashable activation function, the minimum width is $\max\{d_x,d_y,2\}$ unless $d_x=d_y=1$; the same minimum width holds for $d_x=d_y=1$ if the squashable activation function is monotone. 
Since all non-affine analytic functions and a class of piecewise functions are squashable, our result covers almost all practical activation functions.
We believe that our approach would contribute to a better understanding of the expressive power of deep and narrow networks.

\section*{Impact Statement}
This paper investigates the theoretical properties of neural networks on the minimum width enabling universal approximation.
We could not find notable potential societal consequences of our work.

\bibliography{reference-a1}
\bibliographystyle{plainnat}

\newpage
\appendix
\onecolumn

\section{On activation functions}
\subsection{Definition of activation functions}\label{sec:activations}
\begin{itemize}
    \item $\exp$:
    \begin{align*}
        \exp(x) = e^x.
    \end{align*}
    
    \item $\sigmoid$:
    \begin{align*}
        \sigmoid(x) = \frac{1}{1+\exp(-x)}.
    \end{align*}
    
    \item $\tanh$:
    \begin{align*}
        \tanh (x) = \frac{\exp(x)-\exp(-x)}{\exp(x)+\exp(-x)}.
    \end{align*}
    
    \item $\lrelu$: for $\alpha\in(0,1)$
    \begin{align*}
        \lrelu(x;\alpha) = \begin{cases}
            x& \text{if }x>0\\
            \alpha x & \text{if }x\le 0.
        \end{cases}
    \end{align*}
    
    \item $\elu$: for $\alpha>0$
    \begin{align*}
        \elu(x;\alpha) = \begin{cases}
            x& \text{if }x>0\\
            \alpha (\exp(x)-1) & \text{if }x\le 0.
        \end{cases}
    \end{align*}
    
    \item $\selu$: for $\lambda>1$ and $\alpha>0$,
    \begin{align*}
        \selu(x;\lambda, \alpha) = \lambda \times \begin{cases}
            x& \text{if }x>0\\
            \alpha (\exp(x)-1) & \text{if }x\le 0.
        \end{cases}
    \end{align*}

    \item $\gelu$: 
    \begin{align*}
        \gelu(x) = x \times \Phi(x)
    \end{align*}
    where $\Phi$ is the cumulative distribution function for the standard normal distribution.
    
    \item $\celu$: for $\alpha>0$
    \begin{align*}
        \celu(x;\alpha) = \begin{cases}
            x& \text{if }x>0\\
            \alpha (\exp(x/\alpha)-1) & \text{if }x\le 0.
        \end{cases}
    \end{align*}
    
    \item $\softplus$: for $\alpha>0$,
    \begin{align*}
        \softplus(x;\alpha) = \frac{1}{\alpha}\log(1+\exp(\alpha x)).
    \end{align*}
    \item $\swish$:
    \begin{align*}
        \swish(x) = x\times\sigmoid(x).
    \end{align*}
    
    \item $\mish$:
    \begin{align*}
        \mish(x) = x\times\tanh(\softplus(x;1)).
    \end{align*}

    \item $\hswish$:
    \begin{align*}
        \hswish(x) = \begin{cases}
        0&\text{if }x\le -3\\
        x&\text{if } x\ge 3\\
        x(x+3)/6& \text{ otherwise}.
        \end{cases}
    \end{align*}
    
\end{itemize}

\subsection{Proofs related to squashable activation functions}
In this section, we prove \cref{lem:squashable-passingline,lem:analytic,lem:pieceanalytic} by constructing $\sigma$ network of width $1$ satisfying the conditions listed in \cref{cond:step} where $\sigma$ has the property in each lemma.
\subsubsection{Proof of \cref{lem:squashable-passingline}}\label{sec:pflem:passingline}
In this section, we prove \cref{lem:squashable-passingline}. We first prove that if $\sigma$ satisfies the conditions listed in \cref{lem:squashable-passingline}, then $\sigma$ is squashable by explicitly constructing a network of width $1$ satisfying the \cref{cond:step} using the activation $\sigma$ that satisfies the conditions listed in \cref{lem:squashable-passingline}. Namely, we now show that for any $\varepsilon, \zeta>0$ and compact set $\mcK$, there exists a $\sigma$ network $f:\bbR\to\bbR$ of width $1$ such that $|f(x)-\step(x)|<\varepsilon$ for all $x\in\mcK\setminus(-\zeta, \zeta)$.
To this end, without loss of generality, we assume that $c=0$, $\phi(x) = x$ and $\mcK = [-M, M]$ for some $M>0$ and $[-M, M]\subset[a,b]$.

Then, we have $\rho([a,b])\subset[a,b]$.
For any $n\in\bbN$, define $\psi_n:\bbR\to\bbR$ by
\begin{align*}
\psi_n(x) = \rho^n(x).
\end{align*}

Then, $\psi_n([a,b])\subset[a,b]$ and $\psi_n$ is strictly increasing on $[a,b]$. Furthermore, for any $n\in\bbN$,
$\psi_n(x)<\psi_{n+1}(x)$ for $x\in(0,b)$ and $\psi_n(x)>\psi_{n+1}(x)$ for $x\in(a,0)$. We now show that there exists $N\in\bbN$ such that if $n\ge N$,
\begin{align}
 a<\psi_n(-\zeta)<a+(b-a)\varepsilon, \quad b-(b-a)\varepsilon<\psi_n(\zeta)<b.\label{eq:pflem:squashable3}
\end{align}
Then, since $\psi_n$ is strictly increasing, $\psi(x)\in(a,a+(b-a)\varepsilon)$ for any $[-M, -\zeta]$ and $\psi(x)\in(b-(b-a)\varepsilon, b)$ for any $x\in[\zeta, M]$. 
Then, define a $\sigma$ network $f:\bbR\to\bbR$ of width $1$ by
\begin{align*}
f(x) = \frac{1}{b-a}(\psi_N(x)-a).
\end{align*}

Then, $f([-M, M])\subset f([a,b])\subset[0,1]$
and $f$ is strictly increasing, and $0<f(x)\le f(-\zeta)<\varepsilon$ for $x\in[-M, -\zeta]$ and $1-\varepsilon<f(\zeta)\le f(x)<1$ for $x\in[\zeta, M]$. It implies that $f$ is squashable and this completes the proof.

We now show the existence of $N\in\bbN$ such that $\psi_n$ satisfies \cref{eq:pflem:squashable3} if $n\ge N$. Let $a_n = \psi_n(\zeta)$. Then, $a_n<a_{n+1}< b$ for all $n\in\bbN$. Then, by the monotone convergence theorem, there exists $L\in\bbR$ such that $a<L\le b$ and $\lim_{n\to\infty}a_n = L$. Here, if $L<b$, then 
\begin{align*}
    \lim_{n\to\infty}a_{n+1} = \lim_{n\to\infty}\rho(a_n) = \rho(L)>L
\end{align*}
which is a contradiction. Hence, $L = b$ and this guarantees the existence of $N_1\in\bbN$ such that if $n\ge N_1$, then $b-(b-a)\varepsilon<\psi_n(\zeta)<b$. Likewise, there exists $N_2\in\bbN$ such that if $n\ge N_2$, then $a<\psi_n(-\zeta)<a+(b-a)\varepsilon$. If we choose $N>\max\{N_1, N_2\}$, then our $\sigma$ network $f$ of width $1$ satisfies \cref{cond:step}.

\subsubsection{Proof of \cref{lem:analytic}}\label{sec:pflem:analytic}
In this section, we prove \cref{lem:analytic}. To this end, it suffices to show the existence of the $\sigma$ network $\rho:\bbR\to\bbR$ of width $1$ such that 
\begin{itemize}
    \item $\rho$ is strictly increasing on $[0,1]$,
    \item $\rho(0) = 0$ and $\rho(1)=1$, and
    \item $\rho'(0)<1$ and $\rho'(1)<1$.
\end{itemize}
Then, from the second and third line in the above conditions, one can observe that $\rho(x)<x$ if $x\in(0,\delta)$ and $\rho(x)>x$ if $x\in(1-\delta,1)$ for some $\delta>0$. Then, by the intermediate value theorem, the equation $\rho(x)=x$ has at least one solution in (0,1). Here, since $\rho$ is analytic, there are finitely many solutions $c_1, \cdots, c_k\in(0,1)$ such that $c_1<\cdots<c_k$ and $\rho(c_i) = c_i$ for $i\in[k]$. If $k=1$, then $\rho$ satisfies the conditions of \cref{lem:squashable-passingline} with $[0,1]$ and $\phi(x) = x$. Otherwise, $\rho$ satisfies the conditions of \cref{lem:squashable-passingline} with $[0,c_2]$ and $\phi(x) = x$. It completes the proof.

We now construct such a $\sigma$ network $\rho$ by considering the following cases: (1) there exists $a\in\bbR$ such that $\sigma'(a) = 0$ and (2) $\sigma'(x)\ne 0$ for all $x\in\bbR$. 

We considered the case (1) in \cref{lem:squashable-critical} in \cref{sec:additionalcondition}. 
We now consider the case (2): $\sigma'(x)\ne 0$ for all $x\in\bbR$. Without loss of generality, $\sigma'(x)>0$ for all $x\in\bbR$. To this end, we consider the following cases again: (2-1) there exists $c\in\bbR$ such that $\sigma''(x)>0$ in $(c-\delta, c)$ and $\sigma''(x)<0$ in $(c,c+\delta)$ for some $\delta>0$ and (2-2) otherwise. 

We considered the case (2-1) in \cref{lem:squashable-inflection} in \cref{sec:additionalcondition}.
We now consider the case (2-2). Specifically, it suffices to consider the case that there exists $a\in\bbR$ such that $\sigma''(a)>0$ and $\sigma''(x)\ge 0$ for $x>a$. Otherwise, suppose that $\sigma''(x)\le 0$ for all $x\in\bbR$. Then, we can makes $\sigma$ to convex function by taking an affine transformation: $\sigma_0(x) = -\sigma(-x)$.

Without loss of generality, assume that $a=0$ and $\sigma(0) = 0$. Then, we define a $\sigma$ network $\psi:\bbR\to\bbR$ such that 
\begin{align*}
    \psi(x)= \frac{1}{\sigma(b)}\sigma(bx)
\end{align*}
for $b>0$. We will assign an explicit value of $b$ later. Then, we have $\psi(0)=0$, $\psi(1) = 1$, and $\psi$ is strictly increasing on $[0,1]$. Then, we construct a $\sigma$ network $\rho:\bbR\to\bbR$ of width 1 by
\begin{align*}
    \rho(x) = 1-\psi(1-\psi(x)).
\end{align*}
Then, $\rho(0)=0, \rho(1) = 1$, and $\rho$ is strictly increasing on $[0,1]$. Furthermore, one can observe that 
\begin{align*}
    \rho'(0)=\rho'(1) = \psi'(0)\psi'(1) = \frac{b^2\sigma'(b)\sigma'(0)}{\sigma(b)^2}.
\end{align*}
We now show the existence of $b\in\bbR$ such that 
\begin{align*}
    \frac{b^2\sigma'(b)\sigma'(0)}{\sigma(b)^2}<1.
\end{align*}
To this end, consider a function $g:(0,\infty)\to\bbR$ defined by 
\begin{align*}
    g(x) = \frac{1}{x}-\frac{\sigma'(0)}{\sigma(x)}.
\end{align*}
Then, one can observe that
\begin{align*}
    g'(x) = \frac{1}{x^2}\left(\frac{x^2\sigma'(x)\sigma'(0)}{\sigma(x)^2}-1\right).
\end{align*}
Since $x>0$, it suffices to show the existence of $b>0$ such that $g'(b)<0$.
Since $\sigma''(0)>0$ and $\sigma(x)>0$ for all $x>0$, it can be easily shown that $\sigma(x)>\sigma'(0)x$ for all $x>0$. It implies that $g(x)>0$ for $x>0$. Furthermore, since $\sigma(x)\to\infty$ as $x\to\infty$, it holds that $g(x)\to 0$ as $x\to\infty$. Then, there exists $M>1$ such that $g(1)>g(M)$ since $g(x)\to 0$ as $x\to\infty$ and $g(1)>0$. Then, by the mean value theorem, there exists $b\in(1,M)$ such that
\begin{align*}
    \frac{g(M)-g(1)}{M-1} = g'(b)<0.
\end{align*}
It completes the proof.

\subsubsection{Proof of \cref{lem:pieceanalytic}}\label{sec:pflem:pieceanalytic}
In this section, we prove \cref{lem:pieceanalytic}. To this end, we first consider the case that the given activation is a piecewise linear function. Without loss of generality, we assume that 
\begin{align}
\sigma_1(x) = \begin{cases}
ax&x\in[-1,0)\\
x&x\in [0,2]
\end{cases}\label{eq:pflem:pieceanalytic}
\end{align}
where $0<a<1$. We now construct a $\sigma$ network $\rho$ of width 1 as
\begin{align*}
\rho(x) = 1-\sigma_1(1-\sigma_1(x))= \begin{cases}
ax&x\in[-1,0)\\
x&x\in[0,1)\\
ax+1-a&x\in[1,2].
\end{cases}
\end{align*} 
Since $0<a<1$, it is easy to observe that $\sigma_1$ satisfies \cref{cond:step} by \cref{lem:squashable-passingline}. 

We now consider the general case. Suppose that $\sigma:\bbR\to\bbR$ satisfies the conditions listed in \cref{lem:pieceanalytic}. We show this by constructing a $\sigma$ network $\psi$ of width $1$ that approximates $\sigma_1(x)$ in \cref{eq:pflem:pieceanalytic} with $a = \sigma'(c_-)/\sigma'(c_+)$ within an arbitrary error for any $x\in[-1,2]$. Then, we can easily verify that \cref{lem:squashable-passingline} can be applied to the same construction of $\sigma$ network of width $1$ as above, $1-\psi(1-\psi(x))$, and this completes the proof.

We now show the existence of such $\psi$. To this end, without loss of generality, we assume that $c=0, \sigma(0)=0$, $0<\sigma'(c_{-})<\sigma'(c_+)$, and $\sigma$ is strictly increasing on $(c-\delta, c+\delta)$. For $r>0$, construct a $\sigma$ network $\psi$ of width $1$ as
\begin{align*}
\psi_r(x) = \frac{\sigma(rx)}{r}.
\end{align*}
By the mean value theorem, for $-1\le x<0$, there exists $d_r\in(rx,0)$ such that $\psi_r(x) = x\sigma'(d_r)$ and for $0<x\le 2$, there exists $e_r \in(0,rx)$ such that $\psi_r(x) = x\sigma'(e_r)$. Since $\sigma'(x)$ is continuous on $(c-\delta, c+\delta)$, it holds that $\sigma'(d_r)\to \sigma'(c_{-})$ and $\sigma'(e_r)\to\sigma'(c_{+})$ as $r\to 0$, respectively. It implies that
\begin{align*}
\lim_{r\to 0}\psi_r(x) = \begin{cases}
\sigma'(c_-)x&x\in[-1,0)\\
\sigma'(c_+)x&x\in[0,2].
\end{cases}
\end{align*}
Thus, choosing $\psi(x) = \psi_r(x)/\sigma'(c_+)$ with sufficiently small $r>0$ completes the proof.

\subsection{Additional properties for functions to satisfy \cref{cond:step}}\label{sec:additionalcondition}

In this section, we suggest the additional properties for activation functions to satisfy \cref{cond:step}. \cref{lem:squashable-inflection} implies that an activation $\sigma$ satisfies \cref{cond:step} if there exists a point where the sign of $\sigma''$ converts from positive to negative.
\begin{lemma}\label{lem:squashable-inflection}
Let $c\in\bbR$ and $\delta>0$. Suppose that a function $\sigma:\bbR\to\bbR$ such that $\sigma$ is twice differentiable in $(c-\delta, c+\delta)$, $\sigma''(x)>0$ in $(c-\delta, c)$ and $\sigma''(x)<0$ in $(c, c+\delta)$. Then, $\sigma$ satisfies \cref{cond:step}.
\end{lemma}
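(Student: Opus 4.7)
The approach is to invoke \cref{lem:squashable-passingline}: I exhibit a width-$1$ $\sigma$-network $\rho$, which will simply be $\sigma$ itself (viewed with identity affine layers), together with points $a<c<b$ such that $\rho$ is strictly increasing on $[a,b]$ and $\rho-\phi$ changes sign at $c$, where $\phi$ is the line through $(a,\rho(a))$ and $(b,\rho(b))$. The key is to choose $a$ and $b$ so that $(a,\sigma(a))$, $(c,\sigma(c))$, $(b,\sigma(b))$ are collinear; the convexity pattern of $\sigma$ about $c$ will then deliver the required sigmoidal crossing automatically.

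From twice differentiability of $\sigma$ on $(c-\delta,c+\delta)$ together with the sign pattern of $\sigma''$, the derivative $\sigma'$ is continuous there and attains a strict local maximum at $c$ (with $\sigma''(c)=0$ by Darboux). I treat the main case $\sigma'(c)>0$; by continuity, shrink $\delta$ so that $\sigma'>0$ on $(c-\delta,c+\delta)$, making $\sigma$ strictly increasing on this interval. Introduce the one-sided secant slopes
\[
 s_L(a)=\frac{\sigma(c)-\sigma(a)}{c-a} \text{ on } (c-\delta,c),
 \qquad
 s_R(b)=\frac{\sigma(b)-\sigma(c)}{b-c} \text{ on } (c,c+\delta).
\]
A direct calculation using the mean value theorem gives $s_L'(a)=(\sigma'(\xi_L)-\sigma'(a))/(c-a)>0$ for some $\xi_L\in(a,c)$ (using $\sigma''>0$ on the left), and symmetrically $s_R'(b)<0$. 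Moreover $s_L(a),s_R(b)<\sigma'(c)$ strictly (again by the strict maximum of $\sigma'$ at $c$), while differentiability at $c$ gives $s_L(a)\to\sigma'(c)$ as $a\to c^-$ and $s_R(b)\to\sigma'(c)$ as $b\to c^+$. By the intermediate value theorem applied to the strictly monotone $s_L$ and $s_R$, for any common value $m$ sufficiently close to but less than $\sigma'(c)$ one can select $a\in(c-\delta,c)$ and $b\in(c,c+\delta)$ with $s_L(a)=s_R(b)=m$, which is precisely the desired collinearity.

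With such $a,b$, the chord $\phi$ through $(a,\sigma(a))$ and $(b,\sigma(b))$ also passes through $(c,\sigma(c))$. On $(a,c)$, strict convexity of $\sigma$ places its graph strictly below the chord from $(a,\sigma(a))$ to $(c,\sigma(c))$, which equals $\phi|_{[a,c]}$, so $\sigma<\phi$ there. Symmetrically, on $(c,b)$ strict concavity gives $\sigma>\phi$. Combined with strict monotonicity of $\rho=\sigma$ on $[a,b]$, the hypotheses of \cref{lem:squashable-passingline} are met, and $\sigma$ satisfies \cref{cond:step}.

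The principal technical obstacle is the residual case $\sigma'(c)\le 0$, in which $\sigma$ is (non-strictly) decreasing near $c$. A direct check shows that any single-layer affine composition $\rho(x)=\alpha\sigma(\beta x+\gamma)+\eta$ has $\rho'$-sign equal to $\mathrm{sgn}(\alpha\beta)\cdot\mathrm{sgn}(\sigma'(c))$ and $\rho''$-sign pattern controlled by $\mathrm{sgn}(\alpha)$, so strict increase of $\rho$ is inevitably paired with a concave-then-convex crossing (the wrong direction for \cref{lem:squashable-passingline}). Resolving this case requires a more intricate construction that breaks the slope/convexity alignment, for instance by first applying an affine shift to place a fixed point of $\sigma$ at $c$ and then analyzing the double composition $\sigma\circ\sigma$ (which is strictly increasing) together with an additional outer affine layer chosen so that a sigmoid-shaped window emerges; the collinearity-and-chord mechanism above, however, remains the central ingredient.
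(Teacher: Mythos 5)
Your main construction for the case $\sigma'(c)>0$ is correct and is essentially what the paper does: both arguments invoke \cref{lem:squashable-passingline} with $\rho=\sigma$ and a chord $\phi$ through $(c,\sigma(c))$, using strict convexity on $(a,c)$ to force $\sigma<\phi$ there and strict concavity on $(c,b)$ to force $\sigma>\phi$. The only cosmetic difference is how the collinear triple is located: the paper pins $a=c-\delta/2$, sets the chord slope to $\alpha/(\delta/2)$ with $\alpha$ the larger of the two one-sided increments, and finds $b$ by the intermediate value theorem after the secant–tangent comparison $\alpha/(\delta/2)<\sigma'(c)$; you instead run the intermediate value theorem on both one-sided secant-slope functions $s_L,s_R$ to match slopes. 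Both routes produce the same configuration and the remainder is identical. Your verification that $s_L$ is strictly increasing with $s_L(a)=\sigma'(\xi_L)<\sigma'(c)$ (and symmetrically for $s_R$) is the justification the paper asserts but does not spell out.

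Your flag on $\sigma'(c)\le 0$ is a genuine and accurate observation, and it is in fact a gap in the paper's own proof as written: the paper implicitly takes $\rho=\sigma$, but \cref{lem:squashable-passingline} demands that $\rho$ be strictly increasing on $[a,b]$, which need not hold without first shrinking $\delta$, and which is impossible for any affine wrap $\alpha\sigma(\beta\cdot+\gamma)+\eta$ when $\sigma'(c)<0$, exactly as your sign bookkeeping shows. The paper avoids this in practice only because \cref{lem:squashable-inflection} is invoked solely inside the proof of \cref{lem:analytic}, where a preceding WLOG reduces to $\sigma'>0$ everywhere and where $\sigma'(c)=0$ is routed instead to \cref{lem:squashable-critical} with its double-composition $1-\psi(1-\psi(x))$. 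So as a standalone proof of the lemma as literally stated, both yours and the paper's cover only $\sigma'(c)>0$; your sketch for the remaining case (iterating $\sigma$ after an affine recentering) points in the right direction — it is essentially the mechanism of \cref{lem:squashable-critical} — but is not carried out, so your submission is incomplete to the same extent the paper's own proof is.
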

\begin{proof}
To prove \cref{lem:squashable-inflection}, we now choose appropriate $a,b\in\bbR$ and $\phi:\bbR\to\bbR$ and apply \cref{lem:squashable-passingline} with our $a, b, c$ and $\phi$. We consider a line passing $(c,\sigma(c))$ as $\phi$. Since $\rho''(x)>0$ if $x<c$ and $\rho''(x)<0$ if $x>c$, we can choose a slope of $\phi$ so that $\phi$ and $\rho$ meet once in $(c-\delta, c)$ and $(c, c+\delta)$, respectively.
Let $\alpha = \max\{\sigma(c)-\sigma(c-\delta/2), \sigma(c+\delta/2)-\sigma(c)\}$ and $\phi(x) = \frac{\alpha}{\delta/2}(x-c)+\sigma(c)$. Here, one can easily observe that $\frac{\alpha}{\delta/2}<\sigma'(c)$. Without loss of generality, suppose that $\alpha = \sigma(c)-\sigma(c-\delta/2)$. Then, it holds that
\begin{align*}
\phi(c+\delta/2) = \sigma(c) - \sigma(c-\delta/2)+\sigma(c)\ge \sigma(c+\delta/2) - \sigma(c) = \sigma(c+\delta/2).
\end{align*}
Then, by the intermediate value theorem, there exists $b\in(c, c+\delta/2]$ such that $\phi(b) = \sigma(b)$. Furthermore, since $\phi(c-\delta/2) = \sigma(c-\delta/2)$, choosing $a = c-\delta/2$ and applying \cref{lem:squashable-passingline} with our $a, b, c$ and $\phi$ completes the proof.
\end{proof}

\cref{lem:squashable-critical,lem:squashable-quadratic} imply that if $\sigma$ satisfies a condition stronger than the analytic condition in a compact interval, then $\sigma$ satisfies \cref{cond:step}.

\begin{lemma}\label{lem:squashable-critical}
Consider $a_1, a_2\in\bbR$ such that $\sigma(x)$ is nonaffine analytic on $x\in[a_1, a_2]$. Suppose that there exists $c\in[a_1, a_2]$ such that $\sigma'(c)=0$. Then, $\sigma$ satisfies \cref{cond:step}.
\end{lemma}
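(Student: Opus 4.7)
The plan is to apply \cref{lem:squashable-passingline} by constructing a width-$1$ $\sigma$-network $\rho$ that is strictly increasing with a convex-then-concave (sigmoidal) shape on a short interval containing $c$. First, an affine change of coordinates lets me assume $c=0$ and $\sigma(0)=0$. Since $\sigma$ is nonaffine analytic on $[a_1,a_2]$ and $\sigma'(0)=0$, there is a smallest integer $k\ge 2$ with $\sigma^{(k)}(0)\ne 0$, so
\begin{equation*}
\sigma(x)=Ax^k+O(x^{k+1})\text{ near }0,\qquad A\defeq\sigma^{(k)}(0)/k!\ne 0.
\end{equation*}
By absorbing sign flips into the affine layers of the network I will build (equivalently, applying the construction to $-\sigma(-\cdot)$), I may further assume $A>0$. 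Analyticity makes zeros of $\sigma'$ isolated, so there exist $\delta_1,\delta_2>0$ on which $\sigma$ is strictly monotone on each of $[0,\delta_1]$ and $[-\delta_2,0]$.

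Next, I define the width-$1$ $\sigma$-network
\begin{equation*}
f(x)\defeq\sigma\bigl(\alpha\sigma(x)+\beta\bigr),\qquad \beta\defeq-\delta_2,\quad \alpha\defeq\delta_2/\sigma(\delta_1)>0,
\end{equation*}
so that $u(x)\defeq\alpha\sigma(x)+\beta$ bijects $[0,\delta_1]$ onto $[-\delta_2,0]$. For $k$ odd, $\sigma$ is increasing on $[-\delta_2,0]$, so $\rho\defeq f$ is strictly increasing on $[0,\delta_1]$. For $k$ even, $\sigma$ is decreasing on $[-\delta_2,0]$, making $f$ decreasing; I then set $\rho(x)\defeq\sigma(-\delta_2)-f(x)$, which remains a width-$1$ $\sigma$-network (a final affine layer) and is strictly increasing on $[0,\delta_1]$, with $\rho(0)=0$ and $\rho(\delta_1)=\sigma(-\delta_2)$.

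The central step is to verify the sigmoidal shape of $\rho$. Replacing $\sigma$ by its leading monomial $Ax^k$, a direct computation yields
\begin{equation*}
f''(x)=k^2\alpha A^2(k-1)\,x^{k-2}(\alpha Ax^k+\beta)^{k-2}\bigl[(k+1)\alpha Ax^k+\beta\bigr].
\end{equation*}
The bracketed factor runs from $-\delta_2<0$ at $x=0$ to $k\delta_2>0$ at $x=\delta_1$ and vanishes uniquely at $x^*\defeq\delta_1/(k+1)^{1/k}$. Separate parity checks for $k$ combine the signs of the remaining factors with the sign flip in $\rho$ (for $k$ even) to show $\rho''>0$ on $(0,x^*)$ and $\rho''<0$ on $(x^*,\delta_1)$. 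For the general analytic $\sigma$, the $O(x^{k+1})$ Taylor correction perturbs $f''$ by a relative factor $1+O(\delta_1+\delta_2)$, and taking $\delta_1,\delta_2$ small enough preserves the single sign change.

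Once $\rho$ is convex-then-concave and strictly increasing on $[0,\delta_1]$, the chord $\phi$ through $(0,\rho(0))$ and $(\delta_1,\rho(\delta_1))$ lies above $\rho$ on $(0,x^*)$ and below $\rho$ on $(x^*,\delta_1)$, so \cref{lem:squashable-passingline} applies with $a=0$, $b=\delta_1$, $c=x^*$, yielding \cref{cond:step} for $\sigma$. The main obstacle is the last verification: making the perturbative claim rigorous requires uniformly controlling how the $O(x^{k+1})$ Taylor terms affect $f''$ as $\delta_1,\delta_2\to 0$, so that the sign of $f''$ is governed by the leading $[(k+1)\alpha Ax^k+\beta]$ factor throughout $(0,\delta_1)$; once this is carried out the conclusion follows immediately from \cref{lem:squashable-passingline}.
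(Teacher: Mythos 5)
Your construction is genuinely different from the paper's, and while the underlying ideas are sound, the proof as written has a couple of gaps worth flagging. The paper normalizes to $[0,1]$ by setting $\psi(x)=\sigma(bx)/\sigma(b)$ (with $c=0$, $\sigma(0)=0$, $\sigma$ strictly increasing on $[0,b]$), forms $\rho(x)=1-\psi(1-\psi(x))$, and simply observes $\rho'(0)=\rho'(1)=\psi'(0)\psi'(1)=0$ because $\psi'(0)$ is a constant multiple of $\sigma'(0)=0$. It then avoids any second-derivative computation entirely by invoking analyticity: $\rho$ fixes $0$ and $1$ with slopes $<1$, so $\rho(x)<x$ near $0$ and $\rho(x)>x$ near $1$, there are finitely many fixed points in $(0,1)$, and restricting to $[0,1]$ or $[0,c_2]$ with $\phi(x)=x$ satisfies \cref{lem:squashable-passingline}. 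You instead prove a sigmoidal shape directly by computing $f''$ for the leading monomial of $\sigma$.

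Two issues. First, ``convex-then-concave on $[0,\delta_1]$ with endpoints on the chord'' does \emph{not} by itself give the sign pattern required by \cref{lem:squashable-passingline}: a function like $x^2(1-x)$ on $[0,1]$ is convex then concave with both endpoints at zero yet stays \emph{above} its chord throughout. What saves your construction is that $\rho'(0)=\rho'(\delta_1)=0$ \emph{exactly} (from $\sigma'(0)=0$ and your choice $\alpha\sigma(\delta_1)+\beta=0$), so $g\defeq\rho-\phi$ satisfies $g'(0)<0$ and $g'(\delta_1)<0$; combined with $g'$ being unimodal with a positive maximum, this forces a unique sign change from $-$ to $+$. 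You never state this, and moreover the crossing point need not be the inflection point $x^*$, so ``applies with $c=x^*$'' is imprecise (any crossing point works). Second, you explicitly acknowledge that the perturbation of $f''$ by the $O(x^{k+1})$ Taylor tail is not controlled; this estimate is delicate near both $0$ and $\delta_1$, where the leading factors of $f''$ vanish, and it is not immediate that the single sign change survives. Notably, since $\rho$ is already analytic, strictly increasing, and satisfies $\rho'(0)=\rho'(\delta_1)=0$, you could bypass both gaps by normalizing to $[0,1]$ and running the paper's finitely-many-fixed-points argument, dropping the entire $f''$ computation.
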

\begin{proof}
It suffices to show the existence of the $\sigma$ network $\rho:\bbR\to\bbR$ of width $1$ such that $\rho$ is strictly increasing on $[0,1]$, $\rho(0)=0, \rho(1)=1,\rho'(0)<1$ and $\rho'(1)<1$ (see \cref{sec:pflem:analytic}).
Since $\sigma$ is a nonaffine analytic function that has a zero derivative at some point, $b\in(c, a_2]$ such that 
$\sigma$ is strictly monotone on $[c,b]$ with nonlinearity.
Without loss of generality, assume that $c=0$, $\sigma(0)=0$ and $\sigma(x)$ is strictly increasing on $[0,b]$. Then, we define a $\sigma$ network $\psi:\bbR\to\bbR$ such that 
\begin{align*}
\psi(x) = \frac{1}{\sigma(b)}\sigma(bx).
\end{align*}
Then, $\psi(0) = 0$, $\psi(1) = 1$, and $\psi$ is strictly increasing on $[0,1]$. We now construct a $\sigma$ network $\rho$ by
\begin{align*}
\rho(x) = 1-\psi(1-\psi(x)).
\end{align*}
Then, $\rho(0) = 0$, $\rho(1) = 1$, and $\rho$ is strictly increasing on $[0,1]$. Furthermore, one can observe that 
\begin{align*}
    \rho'(x) = \psi'(1-\psi(x))\psi'(x).
\end{align*}
Then, we have $\rho'(0)=\rho'(1)=0$ since $\psi'(0)=0$. It completes the proof.
\end{proof}

\begin{lemma}\label{lem:squashable-quadratic}
Consider $a_1,a_2\in \mathbb{R}$ such that $\sigma(x)$ is analytic on $x\in [a_1, a_2]$.
Assume that there exists $x\in [a_1, a_2]$ such that
\begin{align*}
a_2  \ge \frac{2\sigma'(x)}{\sigma''(x)} + x.
\end{align*}
Then, $\sigma$ satisfies \cref{cond:step}.
\end{lemma}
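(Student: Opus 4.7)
The plan is to mirror the proof of \cref{lem:analytic} case (2-2) and \cref{lem:squashable-critical}, replacing the global growth hypothesis used there with the localized condition $a_2 \ge x + 2\sigma'(x)/\sigma''(x)$ given here. The core object is the same auxiliary function
$$g(s) \;:=\; \frac{1}{s} \;-\; \frac{\sigma'(x)}{\sigma(x+s) - \sigma(x)},$$
whose derivative sign captures exactly the inequality $\psi'(0)\psi'(1) < 1$ needed for the double-composition construction $\rho(t) = 1 - \psi(1-\psi(t))$ that was used to satisfy the hypotheses of \cref{lem:squashable-passingline}.

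First, I would reduce to the case $\sigma'(x), \sigma''(x) > 0$, with $\sigma',\sigma'' > 0$ throughout $[x, x+s_0]$, where $s_0 := 2\sigma'(x)/\sigma''(x)$. This reduction uses analyticity (zeros of $\sigma'$ and $\sigma''$ on $[a_1,a_2]$ are isolated), the reflections $\sigma(\cdot)\mapsto -\sigma(\cdot)$ and $\sigma(\cdot)\mapsto \sigma(-\cdot)$, and the fact that any zero of $\sigma'$ inside $[a_1,a_2]$ already delivers \cref{cond:step} via \cref{lem:squashable-critical}. Under this reduction, $\sigma$ is strictly increasing on $[x, x+s_0]\subset[a_1,a_2]$.

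Next, I would analyze $g$ on $[0,s_0]$ with $g(0):=1/s_0$. A second-order Taylor expansion of $\sigma(x+s)-\sigma(x)$ at $s=0$ shows $\lim_{s\to 0^+}g(s)=\sigma''(x)/(2\sigma'(x))=1/s_0$, so $g$ is continuous on $[0,s_0]$. Direct substitution gives $g(s_0) = 1/s_0 - \sigma'(x)/(\sigma(x+s_0)-\sigma(x)) < 1/s_0 = g(0)$, since $\sigma(x+s_0)>\sigma(x)$ by strict monotonicity. The Mean Value Theorem therefore yields $s^*\in(0,s_0)$ with $g'(s^*)<0$. A direct differentiation of $g$ gives
$$g'(s) \;=\; -\frac{1}{s^2} \;+\; \frac{\sigma'(x)\sigma'(x+s)}{(\sigma(x+s)-\sigma(x))^2},$$
so $g'(s^*)<0$ reads $(\sigma(x+s^*)-\sigma(x))^2 > (s^*)^2\sigma'(x)\sigma'(x+s^*)$.

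Finally, set $\psi(t) := (\sigma(x+s^*t)-\sigma(x))/(\sigma(x+s^*)-\sigma(x))$, a $\sigma$-network of width $1$ with $\psi(0)=0$, $\psi(1)=1$, $\psi$ strictly increasing, and $\psi'(0)\psi'(1)<1$ by the previous step. Define $\rho(t):=1-\psi(1-\psi(t))$, another width-$1$ $\sigma$-network; it satisfies $\rho(0)=0$, $\rho(1)=1$, is strictly increasing on $[0,1]$, and has $\rho'(0)=\rho'(1)=\psi'(0)\psi'(1)<1$. The endpoint slopes being $<1$ force $\rho(t)-t<0$ just to the right of $0$ and $\rho(t)-t>0$ just to the left of $1$; taking $c\in(0,1)$ to be the first interior zero of $\rho(t)-t$ and $b'\in(c,1]$ to be the next interior zero (or $b'=1$ if none), the sign-change pattern of $\rho$ versus the line through $(0,\rho(0))$ and $(b',\rho(b'))$ — which is precisely the identity line, since both endpoints lie on it — matches the hypotheses of \cref{lem:squashable-passingline} with $a=0$, $b=b'$, and $c$ as above. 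Applying that lemma concludes that $\sigma$ satisfies \cref{cond:step}. The main obstacle is the reduction to $\sigma',\sigma''>0$ throughout the relevant subinterval in the presence of unusual sign configurations; this is handled using analyticity (isolated zeros of $\sigma',\sigma''$) together with \cref{lem:squashable-critical}.
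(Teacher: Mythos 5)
Your proposal is correct, and it takes a genuinely different — and cleaner — route than the paper's proof at the crucial step.

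Both arguments hinge on the auxiliary function $g(s) = \tfrac{1}{s} - \tfrac{\sigma'(x)}{\sigma(x+s)-\sigma(x)}$ and the observation that $g'(s^*)<0$ for some $s^*$ yields $\psi'(0)\psi'(1)<1$, hence $\rho'(0)=\rho'(1)<1$ for $\rho = 1 - \psi(1-\psi(\cdot))$, which feeds into \cref{lem:squashable-passingline}. Where you diverge is in how $g'<0$ is established. The paper's argument Taylor-expands $g$ around $0$ to reduce $g'(0^+)<0$ to the third-derivative criterion $\sigma''(x_0)/(2\sigma'(x_0)) > \sigma'''(x_0)/(3\sigma''(x_0))$, then assumes this fails for all $x_0$, integrates twice to get a super-geometric lower bound on $\sigma'$, and (implicitly) contradicts boundedness of $\sigma'$ on $[a_1,a_2]$ as $z \to x + 2\sigma'(x)/\sigma''(x) \le a_2$. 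Your argument instead evaluates $g$ at the two endpoints of $[0, s_0]$: the limit $g(0^+) = \sigma''(x)/(2\sigma'(x)) = 1/s_0$ by second-order Taylor expansion, while $g(s_0) = 1/s_0 - \sigma'(x)/(\sigma(x+s_0)-\sigma(x)) < 1/s_0$ because the subtracted term is strictly positive by monotonicity. The Mean Value Theorem then immediately produces an $s^*\in(0,s_0)$ with $g'(s^*) < 0$, directly exploiting the hypothesis $x + s_0 \le a_2$ (which is exactly what makes $g(s_0)$ a legitimate evaluation). This sidesteps the third-derivative expansion, the integral manipulation, and the contradiction entirely; it is shorter, self-contained, and avoids a typo in the paper (which writes $\sigma'''/(2\sigma'')$ in one line and $\sigma'''/(3\sigma'')$ in the next) and the unwritten final contradiction step. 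The one thing to state with more care in your write-up is the reduction: you only actually need $\sigma' > 0$ on $[x, x+s_0]$ (secured by \cref{lem:squashable-critical} plus the reflection $\sigma\mapsto -\sigma$) and $\sigma''(x) > 0$ at the single point $x$ — asserting $\sigma'' > 0$ throughout the subinterval is not used and would require an extra argument. In the degenerate case $\sigma'(x)\sigma''(x) < 0$, the hypothesis is vacuously true and gives no interval to work with; both you and the paper defer this to the reductions in \cref{sec:pflem:analytic}, so this is not a new gap introduced by your approach.
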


\begin{proof}
In this proof, $\sigma^{(n)}(x)$ is defined as $n$-times derivative: $\sigma^{(n)}(x) = \frac{d^n\sigma(x)}{dx^n}$.
We only need to consider the case $\sigma'(x) >0$ and $\sigma^{(2)}(x) > 0$; see the case (1) and (2-1) in \cref{sec:pflem:analytic}.

Consider an arbitrary $x_0\in (a_1, a_2)$.
For $b\in (a_1 - x_0, a_2 - x_0)$, define $\psi: (0-\epsilon,1+\epsilon)\rightarrow \mathbb{R}$ as 
\begin{align*}
    \psi(x) \coloneqq \frac{1}{\sigma(b + x_0) - \sigma(x_0)} (\sigma(bx+ x_0) - \sigma(x_0)).
\end{align*}
Then, $\psi(0) = \psi(1)  =1$.
Define $\rho$ as 
\begin{align*}
    \rho(x) \coloneqq 1-\psi(1-\psi(x)).
\end{align*}
Then, 
\begin{align*}
    \rho'(0)=\rho'(1) = \psi'(0)\psi'(1) = \frac{b^2\sigma'(b+x_0)\sigma'(x_0)}{(\sigma(b+x_0)-\sigma(x_0))^2}.
\end{align*}
It is sufficient to find a value $b$ such that $\rho'(0)=\rho'(1)<1$.
Define $g$ as
\begin{align*}
    g(x) \coloneqq \frac{1}{x}-\frac{\sigma'(x_0)}{\sigma(x + x_0) - \sigma(x_0)}.
\end{align*}
Then, as 
\begin{align*}
    g'(x) = -\frac{1}{x ^2} +\left(\frac{\sigma'(x_0)\sigma'(x+x_0)}{(\sigma(x + x_0) - \sigma(x_0))^2}\right)
    = \frac{1}{x^2}    \left(\frac{x^2\sigma'(x+x_0)\sigma'(x_0)}{(\sigma(x + x_0) - \sigma(x_0))^2}-1\right),
\end{align*}
it is sufficient to find a number $x$ such that $g'(x)<0$. 
Then, there exist smooth functions $h,h_1,h_2$ such that
\begin{align*}
    g(x) &= \frac{1}{x} -\frac{\sigma'(x_0)}{\sigma(x + x_0)-\sigma(x_0)}
    = \frac{1}{x} -\frac{\sigma'(x_0)}{\sigma'(x_0)x + \sigma^{(2)}(x_0)\frac{x^2}{2} + \sigma^{(3)}(x_0)\frac{x^3}{6} + x^4h(x)}
    \\ &= \frac{1}{x} -\frac{1}{x + \frac{\sigma^{(2)}(x_0)}{\sigma'(x_0)}\frac{x^2}{2} + \frac{\sigma^{(3)}(x_0)}{\sigma'(x_0)}\frac{x^3}{6} + \frac{h(x)}{\sigma'(x_0)}x^4}
    \\ &= \frac{ \frac{\sigma^{(2)}(x_0)}{2\sigma'(x_0)}\left(1  + \frac{\sigma^{(3)}(x_0)}{\sigma^{(2)}(x_0)}\frac{x}{3} + \frac{h(x)}{\sigma^{(2)}(x_0)}x^2 \right)}{1 + \frac{\sigma^{(2)}(x_0)}{\sigma'(x_0)}\frac{x}{2} + \frac{\sigma^{(3)}(x_0)}{\sigma'(x_0)}\frac{x^2}{6} + \frac{h(x)}{\sigma'(x_0)}x^3}
     = \frac{\sigma^{(2)}(x_0)}{2\sigma'(x_0)} \frac{1  + \frac{\sigma^{(3)}(x_0)}{\sigma^{(2)}(x_0)}\frac{x}{3} + h_2(x)x^2}{1 + \frac{\sigma^{(2)}(x_0)}{\sigma'(x_0)}\frac{x}{2} + h_1(x)x^2}.
\end{align*}
Then, $g'(x)<0$ if 
\begin{equation}
    \frac{\sigma^{(2)}(x_0)}{2\sigma'(x_0)} >  \frac{\sigma^{(3)}(x_0)}{2\sigma^{(2)}(x_0)}.
\end{equation}
Assume that the above inequality is not satisfied for any $x_0\in (a_1, a_2)$; that is, for any $x\in (a_1, a_2)$
\begin{align*}
    \frac{\sigma^{(2)}(x)}{2\sigma'(x)} \le  \frac{\sigma^{(3)}(x)}{3\sigma^{(2)}(x)}.
\end{align*}
Then, for any $a_1< x_1<y<a_2$,
\begin{align*}
    &\int_{x_1}^y\frac{\sigma^{(2)}(x)}{2\sigma'(x)}dx\le  \int_{x_1}^y\frac{\sigma^{(3)}(x)}{3\sigma^{(2)}(x)}dx \iff \frac{3}{2}\log\left(\frac{\sigma'(y)}{\sigma'(x_1)}\right)\le \log\left(\frac{\sigma^{(2)}(y)}{\sigma^{(2)}(x_1)}\right)
     \\ &\iff \frac{\sigma^{(2)}(x_1)}{\sigma'(x_1)^{\frac{3}{2}}} \le \left(\frac{\sigma^{(2)}(y)}{\sigma'(y)^{\frac{3}{2}}}\right)
    ,
\end{align*}
which leads to
\begin{align*}
    \frac{\sigma^{(2)}(x_1)}{\sigma'(x_1)^{\frac{3}{2}}} (z-x_1)\le 2\left(\frac{1}{\sigma'(x_1)^{\frac{1}{2}}} -\frac{1}{\sigma'(z)^{\frac{1}{2}}}\right),
\end{align*}
for any $a_1<x_1<z<a_2$.
Thus,
\begin{align*}
    \frac{1}{\left(\frac{1}{\sigma'(x_1)^{\frac{1}{2}}} -  \frac{\sigma^{(2)}(x_1)}{2\sigma'(x_1)^{\frac{3}{2}}} (z-x_1)\right)^2} \le \sigma'(z).
\end{align*}
\end{proof}

We lastly present \cref{lem:squashable-limit} which implies that if strictly monotone $\sigma$ has a limit, then $\sigma$ satisfies \cref{cond:step}.

\begin{lemma}\label{lem:squashable-limit}
A continuous function $\sigma:\bbR\to\bbR$ satisfies \cref{cond:step} if $\sigma$ has strictly monotonicity and there exists $\lim_{x\to\infty}\sigma(x)$ or $\lim_{x\to-\infty}\sigma(x)$.
\end{lemma}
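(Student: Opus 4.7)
The plan is to verify \cref{cond:step} directly by constructing a width-$1$ $\sigma$-network that is strictly increasing and bounded on both tails, and then rescaling its input so that it approximates $\step$. By negating $\sigma$ and/or reflecting its input through the affine layers of a width-$1$ network, I may assume without loss of generality that $\sigma$ is strictly increasing and that $L \defeq \lim_{x\to\infty}\sigma(x) \in \bbR$ exists.

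The key construction is
\[
\tau(x) \defeq \sigma\bigl(-\sigma(-x)\bigr),
\]
a two-hidden-layer width-$1$ $\sigma$-network built by alternating $\sigma$ with the affine maps $x\mapsto-x$ and $y\mapsto-y$. Continuity and strict monotonicity of $\tau$ are immediate from those of $\sigma$ (two sign flips cancel). The critical claim is that $\tau$ is bounded on both tails with $\tau(-\infty) < \tau(+\infty)$. Writing $g(x) \defeq -\sigma(-x)$, we have $g(-\infty) = -L$, a finite number regardless of the behavior of $\sigma$ near $-\infty$, so $\tau(-\infty) = \sigma(-L)$ is finite. Setting $l^- \defeq \lim_{x\to-\infty}\sigma(x) \in [-\infty, L)$, either $g(+\infty) = +\infty$ (when $l^- = -\infty$), giving $\tau(+\infty) = L$, or $g(+\infty) = -l^-$ (when $l^-$ is finite), giving $\tau(+\infty) = \sigma(-l^-)$. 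In either case $\tau(+\infty)$ is finite, and the strict monotonicity of $\sigma$ together with $-L < +\infty$ and $-L < -l^-$ respectively forces $\tau(-\infty) = \sigma(-L) < \tau(+\infty)$.

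Given this $\tau$, the rest is routine rescaling. Fix a compact $\mcK\subset\bbR$ and $\varepsilon,\zeta>0$. Let $l^\pm \defeq \tau(\pm\infty)$ and define $\hat\tau(y) \defeq (\tau(y)-l^-)/(l^+-l^-)$, which is continuous, strictly increasing, maps $\bbR$ into $(0,1)$, and satisfies $\hat\tau(y)\to 0$ as $y\to-\infty$ and $\hat\tau(y)\to 1$ as $y\to+\infty$. Choose $N\in\bbN$ large enough that $\hat\tau(N\zeta)\ge 1-\varepsilon$ and $\hat\tau(-N\zeta)\le\varepsilon$. Setting $\rho_{\varepsilon,\zeta}(x) \defeq \hat\tau(Nx)$ and absorbing $N$ together with the affine normalization into the layers of $\tau$, $\rho_{\varepsilon,\zeta}$ is a width-$1$ $\sigma$-network; it is strictly increasing on $\bbR\supset\mcK$, has range in $(0,1)\subset[0,1]$, and by monotonicity satisfies $|\rho_{\varepsilon,\zeta}(x)-\step(x)|\le\varepsilon$ for every $x\in\mcK$ with $|x|\ge\zeta$, exactly as \cref{cond:step} requires.

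The main obstacle is producing a bounded-on-both-tails strictly monotone width-$1$ network from only a one-sided limit hypothesis on $\sigma$. The construction $\tau = \sigma(-\sigma(-\cdot))$ resolves this by exploiting the single finite limit $L$ twice: at $-\infty$, the inner $-\sigma(-x)$ tends to the finite value $-L$, which the outer $\sigma$ evaluates to a finite $\tau(-\infty)$; at $+\infty$, the outer $\sigma$ is itself capped by its own limit $L$, so $\tau(+\infty)$ is automatically finite irrespective of whether $\sigma(-\infty)$ is finite. I do not attempt to verify \cref{lem:squashable-passingline} here, since $\tau$ need not be globally sigmoidal in the secant-crossing sense; working directly with the two tail limits of $\tau$ is the more economical route.
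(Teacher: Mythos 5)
Your argument is correct and reaches \cref{cond:step} by the same two-step strategy the paper uses: first manufacture a width-$1$ $\sigma$-network that is strictly increasing with finite, distinct limits at $\pm\infty$, then absorb a large input dilation and an output affine normalization to squeeze it toward $\step$ outside $(-\zeta,\zeta)$. Where you differ is in the first step. After the same WLOG reduction (though the paper normalizes the \emph{left} tail, assuming $\lim_{x\to-\infty}\sigma(x)=0$, while you fix the right tail), the paper splits into two cases: if $\lim_{x\to\infty}\sigma(x)=\alpha<\infty$ then $\sigma$ itself is already bounded on both tails and $\rho(x)=\sigma(Mx)/\alpha$ works directly; if $\lim_{x\to\infty}\sigma(x)=\infty$ it builds $\psi(x)=\tfrac{1}{\sigma(1)}\bigl(\sigma(1)-\sigma(1-\sigma(x))\bigr)$, checks that $\psi$ has limits $0$ and $1$, and then invokes the first case. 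Your construction $\tau(x)=\sigma(-\sigma(-x))$ collapses the case distinction: the inner composition drives the $-\infty$ tail to the finite value $-L$, and the outer $\sigma$'s own cap $L$ automatically bounds the $+\infty$ tail whether or not $\sigma(-\infty)$ is finite. Your verification that $\tau(-\infty)=\sigma(-L)<\tau(+\infty)$ via $\sigma(-L)<L$ or $\sigma(-L)<\sigma(-l^-)$ is correct, as is the final rescaling showing $\rho_{\varepsilon,\zeta}$ is a width-$1$ $\sigma$-network, strictly increasing, with range in $[0,1]$ and $|\rho_{\varepsilon,\zeta}-\step|\le\varepsilon$ off $(-\zeta,\zeta)$. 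So this is a valid proof that trades the paper's case analysis for a single unified composition; the only cost is the slightly longer bookkeeping needed to confirm the two tail limits of $\tau$.
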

\begin{proof}
Without loss of generality, we assume that $\sigma(x)$ is strictly increasing and $\lim_{x\to-\infty}\sigma(x)=0$. We consider the two cases: (1) $\lim_{x\to\infty}\sigma(x) = \alpha<\infty$, and (2) $\lim_{x\to\infty}\sigma(x) = \infty$. 

For the first case, we can easily verify that $\sigma$ satisfies \cref{cond:step} by composing affine functions before and after $\sigma$:
\begin{align*}
\rho(x) = \frac{1}{\alpha}\sigma(Mx)
\end{align*}
where $M>0$ is sufficiently large.

We now consider the second case. Suppose that $\lim_{x\to\infty}\sigma(x) = \infty$.
We construct a $\sigma$ network $\psi$ of width $1$ such that 
\begin{align*}
\psi(x) = \frac{1}{\sigma(1)}\times \left(\sigma(1) - \sigma(1 - \sigma(x))\right).
\end{align*}
Then, it is easy to observe that $\psi$ is strictly increasing, $\lim_{x\to\infty}\psi(x) = 1$ and $\lim_{x\to-\infty}\psi(x) = 0$. Then, we can consider $\phi$ as in the first case. Hence, $\sigma$ is squashable and this completes the proof.
\end{proof}

\vfill

\section{Our choice of $\delta, \gamma,N$}\label{sec:error-analysis}
We first choose a sufficiently small $\delta>0$ so that $\delta \le \varepsilon/(d_y^{1/p} \times 3^{1+1/p})$.
And then, choose a small enough $\gamma > 0$ so that $\gamma \le \varepsilon^p/(3d_x d_y)$ and $\omega_{f_{\mathrm{dec}}}(\gamma) \le \varepsilon/3^{1+1/p}$.
Lastly, we choose large enough $N \in \bbN$ satisfying $\diam(f^*(\mcT_\nu)) = \omega_{f^*}((1-2\gamma)/N) \le \varepsilon/(d_y^{1/p} \times 3^{1+1/p})$ for each $\nu \in [N]^{d_x}$.
Here, $\omega_{f_{\mathrm{dec}}}$ and $\omega_{f^*}$ denote the modulus of continuity of given function $f$ in the $p$-norm: $\|f(x) - f(x')\|_p \le \omega_f (\|x-x'\|_p)$ for all $x , x' \in [0,1]^{d_x}$.
Then,
\begin{align*}
\|f_\mathrm{dec}\circ f_\mathrm{enc} - f^*\|_{L^p}^p &= \int_{[0,1]^{d_x}}\|f_\mathrm{dec}\circ f_\mathrm{enc}(x)-f^*(x)\|_p^p d\mu_{d_x}\\
&\le \int_{[0,1]^{d_x}\setminus\bigcup_{\nu\in[N]^{d_x}}\mcT_\nu}\|f_\mathrm{dec}\circ f_\mathrm{enc}(x)-f^*(x)\|_p^p d\mu_{d_x} \\
& \qquad \qquad \qquad \qquad \qquad \qquad + \int_{\bigcup_{\nu\in[N]^{d_x}}\mcT_\nu}\|f_\mathrm{dec}\circ f_\mathrm{enc}(x)-f^*(x)\|_p^p d\mu_{d_x}\\
&\le d_y \times  \mu_{d_x}\left([0,1]^{d_x}\setminus\bigcup_{\nu\in[N]^{d_x}} \mcT_\nu \right) + \sum_{\nu\in[N]^{d_x}}\int_{\mcT_\nu}\|f_\mathrm{dec}\circ f_\mathrm{enc}(x)-f^*(x)\|_p^p d\mu_{d_x}\\
&\le d_y \times (1-(1-2\gamma)^{d_x}) \\
& \qquad \qquad  + \sum_{\nu\in[N]^{d_x}}\int_{\mcT_\nu}(\|f_\mathrm{dec}\circ f_\mathrm{enc}(x)-f_\mathrm{dec}(c_\nu)\|_p + \|f_\mathrm{dec}(c_\nu)-f^*(x)\|_p )^pd\mu_{d_x}\\
&\le  2 d_x d_y \gamma + \sum_{\nu\in[N]^{d_x}}\int_{\mcT_\nu}(\omega_{f_{\mathrm{dec}}}(\gamma) + d_y^{1/p} \times (\diam(f^*(\mcT_\nu)) +\delta ) )^pd\mu_{d_x}\\
&\le 2 d_x d_y \gamma + (\omega_{f_{\mathrm{dec}}}(\gamma) + d_y^{1/p} \times (\diam(f^*(\mcT_\nu)) +\delta ) )^p \le \varepsilon^p
\end{align*}
where $c_\nu$ is chosen so that $\dist{f_\mathrm{dec}(c_\nu)}{f^\ast(\mcT_\nu)}\le\delta$ for each $\nu\in[N]^{d_x}$.
This leads us to the statement of \cref{lem:ub}.

\vfill

\newpage
\section{Proof of \cref{lem:simplification}}\label{sec:pflem:simplification}
In this section, we prove \cref{lem:simplification}.
Since $f:\mcK\to\bbR^{d_y}$ is a $(\sigma,\iota)$ network of width $w$, we can express $f:\mcK\to\bbR^{d}$ as follows:
\begin{align*}
f = t_L\circ \phi_{L-1}\circ t_{L-1} \circ \cdots\circ \phi_{1} \circ t_1
\end{align*}
where $t_\ell:\bbR^{d_{\ell-1}}\to \bbR^{d_\ell}$ is an affine transformation, and $\phi_\ell(x) = (\rho_{\ell, 1}(x), \cdots, \rho_{\ell, d_\ell}(x))$ for $\rho_{\ell, 1}, \cdots, \rho_{\ell, d_\ell}\in\{\sigma, \iota\}$ for all $\ell\in[L]$. Since $\sigma$ satisfies \cref{cond:id}, by \cref{lem:kidger}, for arbitrary compact set $\mcC$ and for any $\delta>0$, there exist affine transformations $h_1:\bbR\to\bbR$ and $h_2:\bbR\to\bbR$ such that 
\begin{align*}
|h_1\circ \sigma \circ h_2 (x) - \iota(x)|<\delta
\end{align*}

for all $x\in\mcC$; we will assign explicit value to $\delta$ later. We denote $h_1\circ \sigma\circ h_2$ as $\tilde\sigma$. We note that this lemma can be applied for any given compact set. Since we are considering a compact domain and a continuous activation function, the error arising from replacing $\iota$ with $\tilde{\sigma}$ can be reduced.

To this end, we choose a $\sigma$ network $g$ by applying same affine transformation $t_1, \cdots, t_L$ and $\tilde\sigma$:
\begin{align*}
g = t_L\circ \psi_{L-1}\circ t_{L-1} \circ \cdots\circ \psi_{1} \circ t_1
\end{align*}
where $\psi(x) = (\tilde\rho_{\ell,1}(x),\cdots,\tilde\rho_{\ell, d_\ell}(x))$ with $\tilde\rho_{\ell,i} = \sigma$ if $\rho_{\ell,i}=\sigma$ and $\tilde\rho_{\ell,i}=\tilde\sigma$ if $\rho_{\ell,i}=\iota$ for $\ell\in[L]$ and $i\in[d_\ell]$.

We denote $f_\ell$ and $g_\ell$ by the first $\ell$ layers of $f$ and $g$ with the subsequent affine transformation $t_\ell$, respectively. i.e.,
\begin{align*}
    f_\ell =  t_\ell\circ \phi_{\ell-1}\circ t_{\ell-1} \circ \cdots\circ \phi_{1} \circ t_1\quad\text{and}\quad g_\ell = t_\ell\circ \psi_{\ell-1}\circ t_{\ell-1} \circ \cdots\circ \psi_{1} \circ t_1.
\end{align*}
Then, for each $\ell\in[L]\setminus\{1\}$ and for any $x\in\mcK$, it holds that
\begin{align*}
\|f_\ell(x)-g_\ell(x)\|_\infty &= \|t_\ell\circ \phi_{\ell-1}\circ f_{\ell-1}(x) - t_\ell\circ \psi_{\ell-1}\circ g_{\ell-1}(x)\|_\infty\\
&\le \omega_{t_\ell}(\|\phi_{\ell-1}\circ f_{\ell-1}(x) - \psi_{\ell-1}\circ g_{\ell-1}(x)\|_\infty)\\
&\le \omega_{t_\ell}(\|\phi_{\ell-1}\circ f_{\ell-1}(x) - \phi_{\ell-1}\circ g_{\ell-1}(x)\|_\infty+\|\phi_{\ell-1}\circ g_{\ell-1}(x) - \psi_{\ell-1}\circ g_{\ell-1}(x)\|_\infty).
\end{align*}
Here, we note that for any $\ell\in[L]$, $\omega_{t_\ell}$ is well-defined since $t_\ell$ is uniformly continuous on $\bbR^{d_{\ell-1}}$. Then, by the definition of $\psi_{\ell-1}$ and $\tilde\sigma$, it holds that
\begin{align*}
\|\phi_{\ell-1}\circ g_{\ell-1}(x) - \psi_{\ell-1}\circ g_{\ell-1}(x)\|_\infty \le \max_{i\in[d_{\ell-1}]}|\tilde\sigma(g_{\ell-1}(x)_i) - \iota(g_{\ell-1}(x)_i)|<\delta.
\end{align*}
Furthermore, since we are considering the compact domain and $\phi_{\ell-1}$ is continuous, $\omega_{\phi_{\ell-1}}$ is well-defined and 
\begin{align*}
\|\phi_{\ell-1}\circ f_{\ell-1}(x) - \phi_{\ell-1}\circ g_{\ell-1}(x)\|_\infty \le \omega_{\phi_{\ell-1}}(\|f_{\ell-1}(x)-g_{\ell-1}(x)\|_\infty)
\end{align*}
Hence, we have
\begin{align}
\|f_\ell(x)-g_\ell(x)\|_\infty &=\omega_{t_\ell}(\|\phi_{\ell-1}\circ f_{\ell-1}(x) - \phi_{\ell-1}\circ g_{\ell-1}(x)\|_\infty+\|\phi_{\ell-1}\circ g_{\ell-1}(x) - \psi_{\ell-1}\circ g_{\ell-1}(x)\|_\infty)\nonumber\\
&\le \omega_{t_\ell}(\omega_{\phi_{\ell-1}}(\|f_{\ell-1}(x)-g_{\ell-1}(x)\|_\infty)+\delta)\label{eq:pflem:simplification}
\end{align}
for all $\ell\in[L]\setminus\{1\}$. By iteratively applying \cref{eq:pflem:simplification}, we have
\begin{align*}
\|f(x)-g(x)\|_\infty&\le \omega_{t_L}(\omega_{\phi_{L-1}}(\|f_{L-1}(x) - g_{L-1}(x)\|_\infty)+\delta)\\
&\vdots\\
&\le \omega_{t_L}(\omega_{\phi_{L-1}}(\cdots (\omega_{t_3}(\omega_{\phi_2}(\omega_{t_2}(\delta)+\delta)+\delta)+\delta)\cdots)+\delta).
\end{align*}
Consequently, by choosing sufficiently small $\delta>0$, we can reduce this within arbitrary error $\varepsilon>0$ and this completes the proof.

\newpage

\section{Proof of \cref{lem:decoder-inductive}}\label{sec:pflem:decoder-inductive}
In this section, we prove \cref{lem:decoder-inductive}. To show \cref{lem:decoder-inductive}, we construct $(\sigma, \iota)$ network $\tilde f$ of width $d+1$ as follows: for each $i\in[d]$ and $\nu\in[N]^d$, 
\begin{align}
\tilde f(x)_i = f(x)_i\quad \text{and}\quad \left[\frac{1}{2N}, 1-\frac{1}{2N}\right]\subset \tilde f(\mcI_\nu)_{d+1}.\label{eq:pflem:decoder-inductive}
\end{align}
Then, since $\tilde f$ is continuous, for each $\nu\in[N]^d$ and $j\in[N]$, there exists $\mcJ_{(\nu, j)}\subset\mcI_\nu$ such that 
\begin{align*}
\tilde f(\mcJ_{(\nu, j)})_{d+1}\subset\left[\frac{j-1}{N},\frac{j}{N} \right].
\end{align*}
Furthermore, since $\mcJ_{(\nu, j)}\subset\mcI_\nu$ for each $\nu = (\nu_1, \cdots, \nu_d)\in[N]^d$ and $j\in[N]$, it can be easily observed that
\begin{align*}
\tilde f(\mcJ_{(\nu, j)})_{i}\subset\left[\frac{\nu_i-1}{N},\frac{\nu_i}{N} \right]
\end{align*}
for all $i\in[d]$. It implies that $\tilde f(\mcJ_{\nu'})\subset\mcC_{N, d+1, \nu'}$ and this completes the proof.

We now construct a $(\sigma, \iota)$ network of width $d+1$ satisfying \cref{eq:pflem:decoder-inductive}. To this end, we first present the following lemma.
\begin{lemma}\label{lem:indicator}
Let $\sigma\in\fkS$ and $z_{1},z_{2}, \cdots, z_{k} \in [0,1]$ such that $z_i\neq z_j$ for all $i\neq j$. Let $\gamma>0$ such that $\gamma<\min_{i\ne j}|z_i-z_j|/2$. Then, there exists a $(\sigma, \iota)$ network $f:[0,1] \to \mathbb{R}^{2}$ of width 2 satisfying the following:
   \begin{itemize}
    \item $f(x)_1=x$ on $[0,1]$,
    \item $\left[\frac{1}{2N}, 1-\frac{1}{2N}\right]\subset f(\mcB_\gamma(z_i))_2$ for all $i \in [k]$,
    \item $f([0,1]) \subset [0,1]^2$.
   \end{itemize}
\end{lemma}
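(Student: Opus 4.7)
The plan is to take $f(x)_2$ to be a ``smoothed alternating step'' that transitions between near-$0$ and near-$1$ at each $z_i$, and then conclude the middle bullet via the intermediate value theorem. Reorder so that $z_1<\cdots<z_k$ and, if $k$ is even, append a dummy $z_{k+1}=2$ outside $[0,1]$; hence the total $K=2m-1$ is odd. Fix a compact interval $\mcK\subset\bbR$ large enough to contain every argument appearing below, and let $\rho\defeq\rho_{\varepsilon,\zeta}$ be the width-$1$ $\sigma$-network from \cref{cond:step} for $\zeta<\gamma$ and $\varepsilon>0$ to be chosen small. Define
\begin{align*}
\psi_1(x)&=\rho(x-z_m),\\
\psi_j(x)&=\rho\bigl(x-z_{m-j+1}+(z_{m-j+1}-z_{m+j-1})\,\psi_{j-1}(x)\bigr), \quad 2\le j\le m.
\end{align*}
An induction on $j$, patterned on the text's verification for $k=3$, shows that substituting the ideal $\step$ for $\rho$ makes $\psi_m$ alternate between $0$ and $1$ on the $K+1$ intervals separated by $z_1,\dots,z_K$.

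For the network realization, note that $\psi_j(x)$ depends only on $x$ and $\psi_{j-1}(x)$, so a width-$2$ $(\sigma,\iota)$ network can carry the state $(x,\psi_{j-1}(x))$: in every layer the first neuron applies $\iota$ to the affine projection onto $x$, and the second neuron runs one step of the width-$1$ $\sigma$-subnetwork that realizes $\rho$ on an affine combination of the current pair. Concatenating $m$ such blocks followed by a final affine readout yields $f$ with $f(x)_1=x$ and $f(x)_2=\psi_m(x)$. Since \cref{cond:step} guarantees $\rho(\mcK)\subset[0,1]$, we also get $f([0,1])\subset[0,1]^2$.

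For the middle bullet, it suffices to show that for small enough $\varepsilon$ we have $\psi_m(z_i-\gamma)\le\tfrac{1}{2N}$ and $\psi_m(z_i+\gamma)\ge 1-\tfrac{1}{2N}$ (or the reverse, depending on the parity of $i$) for every $i\in[k]$. Each $\psi_j$ is continuous on $[0,1]$ because each $\rho$ is continuous on $\mcK$, so the intermediate value theorem applied to $\psi_m$ on the connected set $\mcB_\gamma(z_i)$ then yields $[\tfrac{1}{2N},1-\tfrac{1}{2N}]\subset \psi_m(\mcB_\gamma(z_i))$.

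The main obstacle is to control how the error from substituting $\rho$ for $\step$ propagates through the $m$ levels of nesting. Each level amplifies the previous error by at most $|z_{m-j+1}-z_{m+j-1}|\le 1$, so a straightforward induction shows the cumulative shift at the outermost argument is $O(m\varepsilon)$. Choosing $\varepsilon$ small enough that this shift is much less than $\min_{i\ne j}|z_i-z_j|-2\gamma$, together with $\zeta<\gamma$, guarantees that the sign of every nested argument is correct throughout $[0,1]\setminus\bigcup_i\mcB_\gamma(z_i)$, which pins down $\psi_m(z_i\pm\gamma)$ to within $\tfrac{1}{2N}$ of the ideal values $0$ or $1$ and closes the proof.
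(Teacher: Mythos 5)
Your overall approach coincides with the paper's: build the recursive family $\psi_j$ obtained by replacing $\step$ with the width-$1$ network $\rho$ in the paper's alternating-step recursion, realize the pair $(x,\psi_j(x))$ by a width-$2$ $(\sigma,\iota)$ network, and invoke the intermediate value theorem on each ball $\mcB_\gamma(z_i)$. The parity convention (odd $K$ vs.\ the paper's even $k=2m$) is a harmless re-indexing.

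The gap is in the error-propagation step. You write that each level ``amplifies the previous error by at most $|z_{m-j+1}-z_{m+j-1}|\le 1$,'' and conclude a cumulative shift of $O(m\varepsilon)$. That bound is not derived correctly. The factor $\le 1$ correctly relates the \emph{output} error of $\psi_{j-1}$ to the \emph{argument} error of $\rho$ at level $j$; but you then implicitly treat ``argument error $\to$ output error'' as another factor-$\le 1$ (or factor $+\,\varepsilon$) step. That would require $\rho$ to be $1$-Lipschitz, which it is not: $\rho$ approximates $\step$ and therefore has an arbitrarily steep transition. If the perturbed argument $b_j$ landed inside the steep region, the output error would not be controlled by the argument error, and the ``straightforward induction'' would fail.

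To close this you must first argue that the perturbed argument never enters the dangerous region. Concretely: for $x$ at distance $\ge\gamma$ from every $z_i$, the \emph{ideal} argument $a_j$ (with $\step$) equals $x-z_{m-j+1}$ or $x-z_{m+j-1}$, so $|a_j|\ge\gamma$. If $\varepsilon<\gamma-\zeta$, an induction shows $|\psi_{j-1}(x)-\phi_{j-1}(x)|\le\varepsilon$ (so the argument perturbation is $\le\varepsilon$), hence $|b_j|>\zeta$ with the same sign as $a_j$, and therefore $|\psi_j(x)-\phi_j(x)|=|\rho(b_j)-\step(b_j)|\le\varepsilon$. The error does not accumulate at all; it is bounded by $\varepsilon$ at every level. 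Alternatively, the paper's route is to nest the quantifiers: first fix $\rho$ at level $\ell$, then choose the level-$(\ell-1)$ tolerance $\delta$ so small that $\omega_\rho\bigl((z_{m-\ell+2}-z_{m-\ell})\delta\bigr)<\varepsilon/2$; this handles the lack of Lipschitzness via the modulus of continuity $\omega_\rho$. Either fix is fine, but as written your proof skips the step that keeps $b_j$ out of $(-\zeta,\zeta)$ and instead leans on a Lipschitz-type accumulation bound that is not available.

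Two minor remarks. First, your dummy point $z_{k+1}=2$ lies outside $[0,1]$; this is harmless as long as $\mcK$ is chosen to contain all arguments $b_j$ (which lie in $[-2,2]$), but you should say so, since \cref{cond:step} only guarantees control on a compact set you must fix in advance. Second, the threshold you state, $\min_{i\ne j}|z_i-z_j|-2\gamma$, is stricter than needed (the relevant quantity is $\gamma-\zeta$), so while it does not invalidate the conclusion, it does not match the mechanism by which the sign is preserved.
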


One can observe that \cref{lem:indicator} allows us to prove \cref{lem:decoder-inductive} directly. We choose $\gamma>0$ and $z_\nu\in\mcI_\nu$ for each $\nu$ such that $\mcB_\gamma(z_\nu)\subset\mcI_\nu$. Applying \cref{lem:indicator} with our choices of $z_\nu$'s and $\gamma$, we construct a $(\sigma, \iota)$ network $\phi:[0,1]\to\bbR^2$ of width $2$ satisfying the conditions listed in \cref{lem:indicator}. Then, we complete the proof by constructing $\tilde f$ in \cref{eq:pflem:decoder-inductive} as follows:
\begin{align*}
&\tilde f(x)_1 = \phi(f(x)_1)_1, \quad \tilde f(x)_{d+1} = \phi(f(x)_1)_2, \text{ and}\\
&\tilde f(x)_i = \iota\circ\cdots\circ\iota(f(x)_i)\text{ for all }i\in\{2,\cdots,d\}.
\end{align*}

\subsection{Proof of \cref{lem:indicator}}\label{sec:pflem:indicator}
Without loss of generality, we assume $k=2m$ for some $m \in \bbN$ and $0=z_0<z_1<z_2<\cdots<z_{2m}<z_{2m+1}=1$; otherwise, we can add an auxiliary $z_{k+1}\in\bbR$ such that $z_k<z_{k+1}<1$. Let $\mcX = \{z_1, z_2, \cdots, z_{2m}\}$, $\mcD_{\mcX, \gamma} = [0, 1] \setminus \bigcup_{i=1}^{2m}(z_i-\gamma, z_i+\gamma)$, and $\mathcal{A}_\mcX=\bigcup_{i=1}^{m}(z_{2i-1},z_{2i}]$.

To construct $f$ in \cref{lem:indicator} using $(\sigma, \iota)$ network, we use the \cref{cond:step} that for any compact set $\mcC$, $\sigma$ can approximate $\step$ except for the neighborhood of a breakpoint.
We first construct $(\step, \iota)$ network $h:[0,1]\to\{0, 1\}$ of width $2$ such that 
\begin{align}
h(x) = 
\begin{cases}
1&\text{if } x\in\mcA_{\mcX}\\
0&\text{otherwise}
\end{cases},\label{eq:pflem:indicator}
\end{align}
and then we construct a $(\sigma, \iota)$ network $f:[0,1]\to\bbR^2$ of width $2$ such that $f(x)_1 = x$ and $|f(x)_2-h(x)|<1/2N$ except for the neighborhood of each $z_i\in\mcX$. Since $f$ is a continuous function, one can observe that such $f$ satisfies the conditions listed in \cref{lem:indicator}.

We first construct $h$ in \cref{eq:pflem:indicator} as follows: $h = h_{m+1}$ where $h_{m+1}(x)$ is recursively defined as 
\begin{align}
h_{1}(x) = \step(x-z_{m+1}) && h_{\ell}(x) = \step(x-z_{m-\ell+2} +(z_{m-\ell+2}-z_{m+\ell})h_{\ell-1}(x)).\label{eq:pflem:indicator1}
\end{align}

From \cref{eq:pflem:indicator1},
\begin{align*}
h_{\ell}(x)= \begin{cases}
\step(x-z_{m-\ell+2})&h_{\ell-1}(x)=0\\
\step(x-z_{m+\ell})&h_{\ell-1}(x)=1
\end{cases}
\end{align*}
for any $\ell\in\{2, \cdots, m+1\}$.
One can observe that $h_{\ell}$ forms additional breakpoints $z_{m-\ell+2}$ and $z_{m+\ell}$, and for any $x\in[z_i, z_{i+1})$ where $i\in\{m-\ell+2, \cdots, m+\ell\}$, the values of $h_{\ell}(x)$ alternates with $0$ and $1$ as $\ell$ increases. Hence, $h_{\ell}(x)$ in \cref{eq:pflem:indicator1} can be rewritten by
\begin{align*}
h_{\ell}(x)= \begin{cases}
1& x\in[z_{m-\ell+2k}, z_{m-\ell+2k+1}), ~\forall k\in[\ell-1]~~\text{or}~~x\ge z_{m+\ell}\\
0&\text{otherwise}
\end{cases}
\end{align*}
for any $\ell\in\{2, \cdots, m+1\}$, which implies that $h_{m+1}$ is equal to $h$ in \cref{eq:pflem:indicator}.

We now construct a $(\sigma, \iota)$ network $f$ of width $2$ based on $h$. It suffices to show that for any $\varepsilon>0$ and $\ell\in[m+1]$ there exists a $(\sigma, \iota)$ network $f_{\ell}:[0,1]\to\bbR^2$ 
of width $2$ such that 
\begin{itemize}[leftmargin = 25pt]
\item[C1.] $f_{\ell}(x)_1 = x$ on $[0,1]$,
\item[C2.] $|f_{\ell}(x)_2-h_{\ell}(x)|<\varepsilon$ for $x\in\mcD_{\mcX, \gamma}$,
\item[C3.] $f_{\ell}([0,1]) \subset[0,1]^2$.
\end{itemize}

Then, choosing $f = f_{m+1}$ with $\varepsilon<1/(2N)$ completes the proof: C1 and C3 directly imply the first and third conditions of \cref{lem:indicator}, respectively, and C2 guarantees that $f_{m+1}$ satisfies the second condition of \cref{lem:indicator} from the definition of $\mcD_{\mcX, \gamma}$ and $h$. We prove this via mathematical induction on $\ell$. We first consider the base case, $\ell = 1$. Since $\sigma$ satisfies \cref{cond:step}, there exists a $\sigma$ network $\rho$ of width $1$ such that 
\begin{align*}
|\rho(x)-\step(x)|<\varepsilon
\end{align*}
for all $x\in[0,1]\setminus(-\gamma, \gamma)$ and $\rho([0,1])\subset[0,1]$. Then, we construct a $(\sigma, \iota)$ network $f^{(1)}:[0,1]\to\bbR^2$ of width $2$ as
$$f_{1}(x)_1 = x, \quad f_{1}(x)_2 = \rho(x-z_{m+1}).$$
Then, one can easily observe that $f^{(1)}$ satisfies C1--3. We now consider the general case, $\ell \ge 2$. From the induction hypothesis, for any $\delta>0$, there exists a $(\sigma, \iota)$ network $f_{\ell-1}:[0,1]\to\bbR^2$ of width $2$ such that $f_{\ell-1}(x)_1=x$, $|f_{\ell-1}(x)_2-h_{\ell-1}(x))|<\delta$ and $f_{\ell-1}([0,1])\subset[0,1]^2$. Since $\sigma$ satisfies \cref{cond:step}, for any compact set $\mcC$, there exists a $\sigma$ network $\rho$ of width $1$ such that 
\begin{align*}
|\rho(x)-\step(x)|<\varepsilon/2
\end{align*}
for all $\mcC\setminus(-\gamma, \gamma)$. We now construct $f_{\ell}:[0,1]\to\bbR^2$ as
\begin{align*}
f_{\ell}(x)_1 = f_{\ell-1}(x)_1, \quad f_{\ell}(x)_2 = \rho(f_{\ell-1}(x)_1-z_{m-\ell+2} +(z_{m-\ell+2}-z_{m+\ell})f_{\ell-1}(x)_2)
\end{align*}
Here, by the induction hypothesis, $f_{\ell-1}(x)_1 = x$. Thus, we can simplify this to
\begin{align*}
f_{\ell}(x)_1 = x, \quad f_{\ell}(x)_2 = \rho(x-z_{m-\ell+2} +(z_{m-\ell+2}-z_{m+\ell})f_{\ell-1}(x)_2)
\end{align*}
which is just the substitution of $\step$ in \cref{eq:pflem:indicator1} by $\rho$. Here, one can observe that $f_{\ell}$ satisfies 
C1.
Then, for any $x\in\mcD_{\mcX, \gamma}$
\begin{align}
|f_{\ell}&(x)_2-h_{\ell}(x)|\nonumber\\
\le&|\rho(x-z_{m-\ell+2} +(z_{m-\ell+2}-z_{m+\ell})f_{\ell-1}(x)) -\step(x-z_{m-\ell+2} +(z_{m-\ell+2}-z_{m+\ell})h_{\ell-1}(x))|\nonumber\\
\le&|\rho(x-z_{m-\ell+2} +(z_{m-\ell+2}-z_{m+\ell})f_{\ell-1}(x))-\rho(x-z_{m-\ell+2} +(z_{m-\ell+2}-z_{m+\ell})h_{\ell-1}(x))|\nonumber\\&+
|\rho(x-z_{m-\ell+2} +(z_{m-\ell+2}-z_{m+\ell})h_{\ell-1}(x)) - \step(x-z_{m-\ell+2} +(z_{m-\ell+2}-z_{m+\ell})h_{\ell-1}(x))|.\label{eq:pflem:indicator2}
\end{align}
Here, we note that the second term of \cref{eq:pflem:indicator2} is bounded by $\varepsilon/2$ since 
\begin{align*}
x-z_{m-\ell+2} +(z_{m-\ell+2}-z_{m+\ell})h_{\ell-1}(x)\notin(-\gamma, \gamma)
\end{align*}
for all $x\in\mcD_{\mcX, \gamma}$; since $h_{\ell-1}(x)=0$ or $1$, then $x-z_{m-\ell+2} +(z_{m-\ell+2}-z_{m+\ell})h_{\ell-1}(x) = x-z_{m-\ell+2}$ or $x-z_{m-\ell}$. Hence, we have
\begin{align*}
|f_{\ell}(x)_2
-h_{\ell}(x)|\le \omega_\rho(|(z_{m-\ell+2} - z_{m-\ell})(f_{\ell-1}(x)_2-h_{\ell-1}(x))|)+\varepsilon/2<\omega_\rho(|(z_{m-\ell+2} - z_{m-\ell})\delta|)+\varepsilon/2 < \varepsilon
\end{align*}
by choosing sufficiently small $\delta>0$. It implies that $f_{\ell}$ follows C2. Lastly, we can easily observe that $f_{\ell}(x)_1 = x\in[0,1]$ and $f_{\ell}(x)_2\in[0,1]$ since $\rho(x)\in[0,1]$ for all $x\in\mcC$. It implies that $f_{\ell}$ satisfies C3 and this completes the proof.

\section{Proof of \cref{lem:piecewise-constant}}\label{sec:pflem:piecewise-constant}

In this section, we prove \cref{lem:piecewise-constant}.
To this end, without loss of generality, assume that $\mcK\subset[0,\infty)$ and $c_1, \cdots, c_N\in(0,1)$; if there exists $c_i$ such that $c_i = 0$ or $c_i = 1$, then we can substitute $c_i^* = \varepsilon/2$ or $1-\varepsilon/2$ respectively and approximate them within error $\varepsilon/2$. Let $\xi = \dist{\{c_1, \cdots, c_k\}}{\{0,1\}}$. Then, one can observe that $\xi>0$. In addition, we assume that for any $i\in[N-1]$, $x<y$ for all $x\in \mcI_i$ and $y\in \mcI_{i+1}$. Since $\mcI_i$'s are disjoint, for any $i\in[N-1]$, there exists $x^{(i)}\in\bbR$ such that $\sup\mcI_i<x^{(i)}<\inf\mcI_{i+1}$. Let $x^{(0)} = \min\mcK$, $x^{(N)} = \max\mcK$ and
\begin{align}
    \gamma = \min_{i\in[N-1]}\left\{ \dist{x^{(i)}}{\mcI_i}, \dist{x^{(i)}}{\mcI_{i+1}}\right\}.\label{eq:pflem:piecewise-constant0}
\end{align}
In this proof, we construct a $(\sigma,\iota)$ network $f:\mcK\to[0,1]$ of width $2$ such that for any $k\in[N]$,
\begin{align*}
\sup_{x\in\mcI_k}|f(x)-c_k|\le\eta
\end{align*}
where $\eta:=\min\{\xi, \varepsilon\}$.

To this end, we construct two $(\sigma, \iota)$ networks $h_1:\mcK\to\bbR$ and $h_2:\bbR\to\bbR$ of width $2$ such that 
\begin{itemize}
\item[C1.] for each $k\in[N]$, $\sup_{x\in\mcI_k}|h_1(x)-c_k|\le \eta/2$,
\item[C2.] for any $x\in\bigcup_{k=1}^{N}\mcI_k$, $|h_2\circ h_1(x)-h_1(x)|\le \eta/2$ and $h_2\circ h_1(\mcK)\subset[0,1]$.
\end{itemize}
Then, one can observe that $h_1$ maps input $x$ to near the corresponding $c_k$ if $x\in\mcI_k$, and $h_2$ bounds the codomain of $h_1$ while the approximation for piecewise constant is preserved. If we choose $f = h_2\circ h_1$, then such $f$ satisfies the desired conditions.

We first construct $h_1$ satisfying C1 using the property of $\sigma$ that can approximate $\step$. To this end, we consider a $(\step, \iota)$ network $g:\mcK\to\bbR$ of width $2$ approximating the given piecewise constant function, and then we construct a $(\sigma, \iota)$ network $h_1$ of width $2$ approximating $g$ in $\bigcup_{k=1}^N\mcI_k$. 

We now construct a $(\step, \iota)$ network $g$ approximating piecewise constant function. To construct such $g$, we compose $(\step, \iota)$ networks $g_1, \cdots, g_N:\bbR\to\bbR$ of width $2$ such that each $g_i$ shifts $x$ by a sufficiently large length $L_i>0$ if $x\in[x^{(i-1)}, x^{(i)})$.  Here, for each $i\in[N]$, $L_i$ is defined as $a\times (c_i+b)$ where $a >\max\{1,4x^{(N)}/\eta\}$ and $b = x^{(N)}-\min_{i\in[N]}c_i$ which implies that each $g_i(x) = x+L_i>x^{(N)}$ for $x\in[x^{(i-1)}, x^{(i)})$. i.e., we construct each $g_i$ such that 
\begin{align}
g_i\circ\cdots\circ g_1(x) = 
\begin{cases}
x+a\times(c_1+b)&x\in[x^{(0)}, x^{(1)})\\
x+a\times(c_2+b)&x\in[x^{(1)}, x^{(2)})\\
\vdots\\
x+a\times(c_i+b)&x\in[x^{(i-1)}, x^{(i)})\\
x&\text{otherwise}\label{eq:pflem:piecewise-constant}
\end{cases}
\end{align}
for all $i\in[N]$.
Then, we define $g$ as follows:
$g = g_\mathrm{cut}\circ g_N\circ g_{N-1}\circ\cdots \circ g_1$ where $g_\text{cut}:\bbR\to\bbR$ is defined as 
\begin{align*}
g_\mathrm{cut}(x) = \frac{1}{a}x-b.
\end{align*}
Then, one can easily observe that 
\begin{align*}
g(x) = 
\begin{cases}
c_1+\frac{x}{a}&x\in[x^{(0)}, x^{(1)})\\
c_2+\frac{x}{a}&x\in[x^{(1)}, x^{(2)})\\
\vdots\\
c_N+\frac{x}{a}&x\in[x^{(N-1)}, x^{(N)}].
\end{cases}
\end{align*}
Since $a>4x^{(N)}/\eta$, it holds that $|x/a|<\eta/4$ for all $x\in\mcK$. Thus, $g$ approximates the piecewise constant function within an error $\eta/4$.

We now construct $(\step, \iota)$ networks $g_1, \cdots, g_N$ satisfying \cref{eq:pflem:piecewise-constant}.
For each $i\in[N]$, we define $g_i:\bbR\to\bbR$ as
\begin{align*}
g_i(x) = x+a\times(c_i+b)\step(-(x-x^{(i)})).
\end{align*}
One can observe that $g_i$ shifts $x$ by $a\times(c_i+b)$ if $x<x^{(i)}$. Here, we note that since $a\times(c_i+b)>x^{(N)}$, the values shifted by $g_i$ for some $i\in[N]$ are not shifted again, resulting that $g_i$ shifts only $x\in[x^{(i-1)}, x^{(i)})$. Thus, our $g$ can approximate a given piecewise function within an error $\eta/2$.

We now construct a $(\sigma, \iota)$ network $h_1$ of width $2$ approximating $g$ on $\bigcup_{i\in[N]}\mcI_i$. Since $\sigma$ is squashable, then for any compact set $\mcC$ and $\alpha>0$, there exists a $\sigma$ network $\rho:\bbR\to\bbR$ such that $\rho$ is increasing on $\mcC$, $\rho(\mcC)\subset[0,1]$, and
\begin{align*}
|\rho(x)-\step(x)|<\alpha
\end{align*}
for all $x\in\mcC\setminus(-\beta, \beta)$ where $0<\beta<\min\{\gamma, \xi\}$ (\cref{eq:pflem:piecewise-constant0}). We will give an explicit value to $\alpha$ later. We now construct a $(\sigma, \iota)$ network $h_1$ of width $2$ as follows:
\begin{align*}
h_1 &= g_\mathrm{cut}\circ f_N\circ\cdots\circ f_1 \text{ where}\\
f_i(x)&= x+a\times(c_i+b)\rho(-(x-x^{(i)}))\quad\forall i\in[N].
\end{align*}
Then, one can observe that $|f_i(x)-g_i(x)| = |c_i+b||\rho(-(x-x^{(i)})) - \step(-(x-x^{(i)}))|<|c_i+b|\alpha$ for all $x\in\mcC\setminus\mcB_\beta(x^{(i)})$ and $i\in[N]$. For the notational simplicity, we denote $\delta_i = |c_i+b|\alpha$. We note that for any $i,j\in[N]$ and $x\in\mcI_i$, 
\begin{align}
g_{j}\circ\cdots\circ g_1(x)\notin\mcB_\beta(x^{(i)}).\label{eq:pflem:piecewise-constant2}
\end{align}
\cref{eq:pflem:piecewise-constant2} holds since $g_{j}\circ\cdots\circ g_1$ maps $x$ to $x\in\mcI_i$, or a value out of $\mcK$ ($x+a\times(c_i+b)>x+x^{(N)}$ from the definition of $a$ and $b$) and $\beta<\gamma$. Then, for any $i\in[N]$ and $x\in\mcI_i$, it holds that 
\begin{align*}
|h_1(x) - g(x)|&=|g_\mathrm{cut}\circ f_N\circ\cdots\circ f_1(x) -  g_\mathrm{cut}\circ g_N\circ\cdots\circ g_1(x)|\\
&\le\omega_{g_\mathrm{cut}}(|f_N\circ\cdots\circ f_1(x) - g_N\circ\cdots\circ g_1(x)|)\\
&\le\omega_{g_\mathrm{cut}}(|f_N\circ\cdots\circ f_1(x) - f_N\circ g_{N-1}\circ\cdots\circ g_1(x)+f_N\circ g_{N-1}\circ\cdots\circ g_1(x) - g_N\circ\cdots\circ g_1(x)|)\\
&\le \omega_{g_\mathrm{cut}}(\omega_{f_N}(|f_{N-1}\circ\cdots\circ f_1(x) - g_{N-1}\circ\cdots\circ g_1(x)|)+|f_N\circ g_{N-1}\circ\cdots\circ g_1(x) - g_N\circ\cdots\circ g_1(x)|)
\end{align*}
Here, $|f_N\circ g_{N-1}\circ\cdots\circ g_1(x) - g_N\circ\cdots\circ g_1(x)|<\delta_N$ from \cref{eq:pflem:piecewise-constant2}. Thus, by conducting this procedure iteratively, we have
\begin{align*}
|h_1(x)-g(x)|&\le|\omega_{g_\mathrm{cut}}(\omega_{f_N}(f_{N-1}\circ\cdots\circ f_1(x) - g_{N-1}\circ\cdots\circ g_1(x))+\delta_N)|\\
&\le|\omega_{g_\mathrm{cut}}(\omega_{f_N}(\omega_{f_{N-1}}(f_{N-2}\circ\cdots\circ f_1(x) - g_{N-2}\circ\cdots\circ g_1(x))+\delta_{N-1})+\delta_N)|\\
&\vdots\\
&\le|\omega_{g_\mathrm{cut}}(\omega_{f_N}(\cdots(\omega_{f_2}(\delta_1)+\delta_2)\cdots)+\delta_N)|<\eta/4
\end{align*}
by choosing sufficiently small $\alpha>0$, which leads us to have sufficiently small $\delta_i$ for all $i\in[N]$. Consequently, for any $i\in[N]$ and $x\in\mcI_i$, we have
\begin{align}
|h_1(x)-c_i| \le |h_1(x)-g(x)|+|g(x)-c_i|<\eta/4+\eta/4 = \eta/2.
\end{align}
Hence, our $h_1$ satisfies C1.

We now construct $h_2$ satisfying C2. We suppose that there exists $u\in\mcK$ such that $h_1(u)<0$; we will discuss the case that there exists $v\in\mcK$ such that $h_1(v)>1$ later.
To this end, we consider a $(\sigma, \iota)$ network of width 2 that iteratively adds some constant to the region such that $h_1(x)<0$. 
Namely, it suffices to show that for any $\varepsilon'>0$, there exists a $(\sigma, \iota)$ network $\psi:\bbR\to\bbR$ of width $2$ such that 
\begin{itemize}
\item if $x\ge \eta/4$, then $|\psi(x) - x|\le\varepsilon'$
\item if $x\in(0,\eta/4)$, then $\psi(x)\in[0,1]$, and
\item if $x\le0$, then $\psi(x)-x\ge 1/2$.
\end{itemize}
Then, let $h_2 = \psi^{N_1}$ for some $N_1\in\bbN$ such that $N_1/2>|\min_{x\in\mcK} h_1(x)|$, then we obtain 
\begin{align*}
h_2\circ h_1(x) = \psi^{N_1}\circ h_1(x)\in[0,1]
\end{align*}
for all $x\in\mcK$ such that $h_1(x)<0$.

Furthermore, since $\eta\le\xi$ and $h_1$ satisfies C1, $h_1(x)\ge \eta/2$ for any $x\in\bigcup_{i=1}^N\mcI_i$. Thus, if we choose sufficiently small $\varepsilon'>0$ such that $\varepsilon'<\eta/(4N_1)$, then
\begin{align*}
|h_2\circ h_1(x)-h_1(x)| &= |\psi^{N_1}\circ h_1(x)-h_1(x)|
\\&\le|\psi^{N_1}\circ h_1(x)-\psi^{N_1-1}\circ h_1(x)|+\cdots+|\psi\circ h_1(x)-h_1(x)|\\
&\le N_1\varepsilon'\le \eta/4\le\eta/2.
\end{align*}
Here, for each $i\in[N_1-1]$, $\psi^{i}\circ h_1(x)\ge \eta/4$ since $\psi^{i}\circ h_1(x)\subset\mcB_{i\varepsilon'}(h_1(x))$ and $h_1(x)\ge \eta/2$. It guarantees that $|\psi^{i}\circ h_1(x)-\psi^{i-1}\circ h_1(x)|\le \varepsilon'$ for each $i\in[N_1]$. Hence, $h_2$ satisfies C2.  If there exists $v\in\mcK$ such that $h_1(v)>1$, then the same argument can be applied with the choice of $\psi_1(x)  = 1-\psi(1-x)$.

We now construct such $\psi$ using the property that $\sigma$ network can approximate $\step$.
Since $\sigma$ is squashable, for any $\delta'>0$ and a compact set $\mcD$, there exists a $\sigma$ network $\rho^\ast:\mcD\to\bbR$ such that 
\begin{align*}
|\rho^\ast(x)-\step(x)|<\delta'
\end{align*}
for all $x\in\mcD\setminus(-\eta/8, \eta/8)$.
We choose $\delta'>0$ such that $\delta'\le \min\{3\varepsilon'/2, 1/4\}$. Consider a $(\sigma, \iota)$ network $\psi$ of width 2 defined as 
\begin{align*}
\psi(x) = x+\frac{2}{3}\rho^\ast(-(x-\eta/8)).
\end{align*}
Then, one can easily observe that $|\psi(x)-x|<2\delta'/3\le\varepsilon'$ if $x\ge \eta/4$, $\psi(x)\in(0,\eta/4+2/3)\subset[0,1]$ if $x\in(0, \eta/4)$, and $|\psi(x)-(x+2/3)|<2\delta'/3\le 1/6$ if $x\le 0$ which implies $\psi(x)-x\ge 1/2$. It completes the proof.

\newpage

\section{Proof of \cref{lem:inductive-encoder}}\label{sec:pflem:inductive-encoder}
In this section, we prove \cref{lem:inductive-encoder}. To this end, we construct a $(\sigma, \iota)$ network $f$ of width $2$ that maps for each $\mcS\in\mcG$ to a disjoint interval. Then, since $f$ is continuous, $\{f(\mcS):\mcS\in\mcG\}$ is a $1$-grid of size $n_1n_2$ and this completes the proof.
Before we illustrate our proof, we define the additional notation used in this proof. Since $\mcG$ is a $2$-grid of size $(n_1,n_2)$, there exist compact intervals $[a_1, b_1], \cdots, [a_{n_1}, b_{n_1}], [a_1', b_1'], \cdots, [a_{n_2}', b_{n_2}']$ satisfying the following:
\begin{itemize}
\item $b_i<a_{i+1}$ and $b_j'<a_{j+1}'$ for each $i\in[n_1-1]$ and $j\in[n_2-1]$, respectively,
\item for any $\mcS\in\mcG$, there uniquely exist $i\in[n_1]$ and $j\in[n_2]$ such that $\mcS = [a_i, b_i]\times[a_j', b_j']$.
\end{itemize}
For each $i\in[n_1]$ and $j\in[n_2]$, let $\mcU_{ij} = [a_i, b_i]\times[a_j', b_j']$,  $\mcV_{i} = \bigcup_j\mcU_{ij}$,
\begin{align*}
\eta = \min_{j\in[m-1]}\left\{a_{j+1}' - b_{j}'\right\}
\end{align*}
and $L = b_m' - a_1'$. We write $e_1 = (1,0)$ and $e_2 = (0,1)\in\bbR^2$. For $i\in\{1,2\}$ and $b\in\bbR$, we use $\mcH(e_i, b) \defeq \{x\in\bbR^2|x_i+b=0\}$. 
We first consider a $(\sigma, \iota)$ network $h_1:\mcK\to\bbR^2$ of width $2$ defined as
\begin{align*}
h_1(x)_1 = \rho(x_1 - c_1), \quad h_1(x)_2 = \iota(x_2)
\end{align*}
where $c_1\in(b_{n-1}, a_n)$ and $\rho$ is a $\sigma$ network of width $1$ such that $|\rho(x)-\step(x)|<\zeta$ on $[a_1, b_n]$. We will assign an explicit value to $\zeta$. Then, one can observe that 
\begin{align}
h_1(\mcV_{n})\subset\mcB_\zeta(\mcH(e_1, -1)) ,\quad h_1(\mcV_{i})\subset\mcB_\zeta(\mcH(e_1, 0)) \text{ for all }i\in[n-1].\label{eq:pflem:inductive-encoder}
\end{align}
Furthermore, since $h_1(x)_1$ is strictly increasing on $\mcK$, the ordering of $\mcV_i$'s with respect to the first coordinate is preserved: if $i<j$, then $x_1<y_1$ for all $x\in\mcV_i$, $y\in\mcV_j$. We then iteratively apply some $(\sigma, \iota)$ networks $h_2, \cdots, h_{n_1}$ so that for each $i\in[n_1]$, $h_i$ maps $\mcV_{{n_1}-i+1}$ to $\mcB_\zeta(\mcH(e_1, -1))$ and shifts $\mcV_{n_1-i+1}$ by sufficiently large length such that the images of $\mcV_i$ are disjoint for the second coordinate. 

We now formally construct such $(\sigma, \iota)$ networks $h_2, \cdots, h_{n_1}$. See the following lemma where the proof is deferred to \cref{sec:pflem:separating}.

\begin{lemma}\label{lem:separating}
Let $\xi>0$ and $r>0$.
Let $\mcX_0\subset\mcB_\xi(\mcH(e_1,0))$, $\mcX_1\subset\mcB_\xi(\mcH(e_1,-1))$, and $\mcY\subset\mcB_\xi(\mcH(e_1,0))$ be compact sets in $\bbR^2$ such that $y_1>x_1$ for all $x\in\mcX_0$ and $y\in\mcY$.
Then, there exists a $(\sigma,\iota)$ network $f:\bbR^2\to\bbR^2$ of width $2$ satisfying the following properties:
\begin{itemize}[leftmargin=10pt]
\item for any $x\in\mcX_0\cup\mcX_1$, $|f(x)_2 - x_2|<2r\xi$,
\item for any $y\in\mcY$, $|f(y)_2-(y_2+r)|<2r\xi$,
\item $f(\mcX_0)\subset\mcB_\xi(\mcH(e_1,0))$ and $f(\mcY),f(\mcX_1)\subset\mcB_\xi(\mcH(e_1,-1))$,
\item there exists strictly increasing $\phi:\bbR\to\bbR$ such that $f(x)_1=\phi(x_1)$ for all $x\in\mcX_0$.
\end{itemize}
\end{lemma}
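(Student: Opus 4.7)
The plan is to construct $f$ explicitly as
\[
f(x)=\bigl(\rho(x_1-c),\ x_2 - r x_1 + r\,\rho(x_1-c)\bigr),
\]
realizing the conjugation trick $\kappa_r^{-1}\circ\psi_c\circ\kappa_r$ sketched for $\phi_{c,e}$ in the main text. Here $\rho$ is a width-$1$ $\sigma$-network approximating $\step$ provided by \cref{cond:step}, and $c$ is a threshold chosen strictly between the first coordinates of $\mcX_0$ and $\mcY$. The first output coordinate $\rho(x_1-c)$ then acts as a soft indicator, near $0$ on $\mcX_0$ and near $1$ on $\mcY\cup\mcX_1$, while the correction $-rx_1+r\rho(x_1-c)$ in the second coordinate vanishes whenever $x_1$ and $\rho(x_1-c)$ agree (both near $0$ on $\mcX_0$, both near $1$ on $\mcX_1$) but equals approximately $r$ on $\mcY$, where $x_1\approx 0$ yet $\rho(x_1-c)\approx 1$. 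This is precisely the selective shift demanded by the lemma.

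Concretely, I would first use compactness of $\mcX_0,\mcY$ and the strict inequality $y_1>x_1$ to pick $c$ and a gap $\delta>0$ such that $x_1\le c-\delta$ on $\mcX_0$, $y_1\ge c+\delta$ on $\mcY$, and $x_1\ge c+\delta$ on $\mcX_1$ (the last holding once $\xi$ is small enough that the $x_1$-ranges of $\mcX_0$ and $\mcX_1$ are separated). Then I would apply \cref{cond:step} with a compact $\mcK\subset\bbR$ containing $\{x_1-c:x\in\mcX_0\cup\mcY\cup\mcX_1\}$ and parameters $\zeta<\delta$, $\varepsilon<\xi$, obtaining $\rho$ strictly increasing on $\mcK$, with $\rho(\mcK)\subset[0,1]$ and $|\rho-\step|\le\varepsilon$ off $(-\zeta,\zeta)$. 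Writing $f=A_2\circ\Psi\circ A_1$ with $A_1(x)=(x_1,x_2-rx_1)$, $A_2(z)=(z_1,z_2+rz_1)$, and $\Psi$ applying $\rho$ to the first coordinate while $\iota$ carries the second through every intermediate layer of $\rho$, realizes $f$ as a $(\sigma,\iota)$ network of width $2$, since $\rho$ itself has width $1$.

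The rest is a routine case analysis. On $\mcX_0$: $|f(x)_2-x_2|=r|\rho(x_1-c)-x_1|\le r\varepsilon+r\xi<2r\xi$ and $|f(x)_1|\le\varepsilon\le\xi$. On $\mcX_1$: the decomposition $\rho(x_1-c)-x_1=(\rho(x_1-c)-1)+(1-x_1)$ gives the same $r\varepsilon+r\xi$ bound, and $|f(x)_1-1|\le\varepsilon\le\xi$. On $\mcY$: $f(y)_2-(y_2+r)=r(\rho(y_1-c)-1)-ry_1$ is bounded by $r\varepsilon+r\xi<2r\xi$, with $|f(y)_1-1|\le\varepsilon$. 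For the final property, set $\phi(t):=\rho(t-c)$ on a bounded interval containing the $x_1$-coordinates of $\mcX_0$ and extend monotonically to all of $\bbR$; strict monotonicity of $\rho$ on $\mcK$ makes $\phi$ strictly increasing with $f(x)_1=\phi(x_1)$ on $\mcX_0$. I expect the main obstacle to be simultaneous parameter balancing — choosing a single $c$ and $\zeta$ that separates $\mcX_0$ from $\mcY$, keeps every relevant $x_1-c$ outside $(-\zeta,\zeta)$, and keeps $\mcX_1$ on the $\rho\approx 1$ side — which is feasible once $\xi$ is small relative to $1$, an assumption implicitly secured by the calling proof of \cref{lem:inductive-encoder}.
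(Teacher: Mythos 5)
Your proof is correct and follows essentially the same route as the paper: the same conjugation $f=A^{-1}\circ(\rho(\cdot-c),\iota)\circ A$ with $A=\begin{bmatrix}1&0\\-r&1\end{bmatrix}$, the same choice of a threshold $c$ (the paper's $b$) and gap (the paper's $\eta$) between the $x_1$-ranges of $\mcX_0$ and $\mcY$, and the same case analysis with the bound $r(\varepsilon+\xi)<2r\xi$. Your observation that one implicitly needs $\xi$ small enough so that $\mcX_1$ lies strictly to the right of $c$ is a valid point the paper also relies on tacitly (and which is satisfied in the calling proof, where $\zeta$ is a free small parameter).
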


\cref{lem:separating} implies that there exists a $(\sigma, \iota)$ network of width $2$ that maps $\mcY$ to in $\mcH(e_1, -1)$ with approximately shift for the second coordinate by $r$. From \cref{eq:pflem:inductive-encoder}, we can apply \cref{lem:separating} with
\begin{align*}
\mcX_0 = \bigcup_{i\in[n_1-2]}h_1(\mcV_{i}), \quad\mcX_1 = h_1(\mcV_{n_1}), \quad\mcY = h_1(\mcV_{n_1-1}),
\end{align*}
$r = L+1$ and $\xi = \zeta$. Then, there exists a $(\sigma, \iota)$ network $h_2$ of width $2$ that maps the points of $\mcX_0$ and $\mcX_1$ approximately identically while shifting the second coordinate of $\mcY$ by $L+1$. Here, one can observe that if we choose a sufficiently small $\zeta>0$, then $h_2(h_1(\mcV_{n_1}))$ and $h_2(h_1(\mcV_{n_1-1}))$ are disjoint for the second coordinate by our choice of $r$.
Furthermore, from the third and fourth lines of the properties listed in \cref{lem:separating}, \cref{lem:separating} can be applied iteratively with the recursive choice of $\mcX_0, \mcX_1, \mcY,r$ and $\xi$ in \cref{lem:separating}. In particular, by the fourth line of the properties from \cref{lem:separating}, the ordering of $\mcV_i$'s with respect to the first coordinate is preserved while \cref{lem:separating} is applied. Thus, among the sets contained in $\mcX_0$, we can choose $\mcY$ as the set that is the highest with respect to the first coordinate.

We now construct such $(\sigma, \iota)$ networks $h_2, \cdots, h_{n_1}:\bbR^2\to\bbR^2$ as follows: for each $k\in[n_1]\setminus\{1\}$, $h_k$ is from \cref{lem:separating} with the choices of 
\begin{itemize}
\item $\mcX_0 = \bigcup_{i\in[n_1-k]}h_{k-1}\circ\cdots\circ h_1(\mcV_{i})$,
\item $\mcX_1 = \bigcup_{i\in[k-1]}h_{k-1}\circ\cdots\circ h_1(\mcV_{n_1-i+1})$,
\item $\mcY = h_{k-1}\circ\cdots\circ h_1(\mcV_{n_1-k+1})$,
\item $r = r_k$ where $r_k = (k-1)(L+1)$ and $\xi = \zeta$.
\end{itemize}
Then, we construct a $(\sigma, \iota)$ network $f:\mcK\to\bbR$ of width $2$ as
\begin{align}
f(x) = p\circ h_{n_1}\circ\cdots\circ h_1(x)\label{eq:pflem:inductive-encoder1}
\end{align}
where $p:\bbR^2\to\bbR$ is a projection onto the second coordinate: $p(x,y) = y$. 
We now prove that if we choose sufficiently small $\zeta>0$ such that 
\begin{align*}
\sum_{k=2}^{n_1} 2r_k\zeta<\min\left\{\frac{\eta}{2}, \frac{1}{2}\right\},
\end{align*}
then for each $i\in[n_1]$ and $j\in[n_2]$, $f(\mcU_{ij})$ is disjoint.

We first show that if $i,j\in[n_1]$ such that $i< j$, then $f(x)>f(y)$ for all $x\in\mcV_i$ and $y\in\mcV_j$, and then we prove that for each $i\in[n_1]$, if $j,j'\in[n_2]$ such that $j<j'$, then $f(x)<f(y)$ for all $x\in\mcU_{ij}$ and $y\in\mcU_{ij'}$.

We first consider $x\in\mcV_i$ and $y\in\mcV_j$. From our definition of $f$ (\cref{eq:pflem:inductive-encoder1}) and \cref{lem:separating}, one can observe that 
\begin{align*}
|f(x) - (x_2+r_{n_1-i+1})|<\sum_{k=2}^{n_1} 2r_k\zeta\le\frac{1}{2}, \quad |f(y) - (y_2+r_{n_1-j+1})|<\sum_{k=2}^{n_1} 2r_k\zeta\le\frac{1}{2}.
\end{align*}
Since $r_{n_1-i+1}-r_{n_1-j+1}\ge L+1$, 
the above equation implies that 
\begin{align*}
f(x) - f(y)>r_{n_1-i+1}-r_{n_1-j+1}- (y_2-x_2)-1\ge L+1 - L -1=0
\end{align*}

We now consider $x\in\mcU_{ij}$ and $y\in\mcU_{ij'}$. As in above, we have
\begin{align*}
|f(x) - (x_2+r_{n_1-i+1})|<\sum_{k=2}^{n_1} 2r_k\zeta\le\frac{\eta}{2},\quad
|f(y) - (y_2+r_{n_1-i+1})|<\sum_{k=2}^{n_1} 2r_k\zeta\le\frac{\eta}{2}.
\end{align*}
Since $y_2-x_2>\eta$ by the definition of $\eta$, we have 
\begin{align*}
f(y)-f(x) > y_2-x_2-\eta>0
\end{align*}
and this completes the proof.

\subsection{Proof of \cref{lem:separating}}\label{sec:pflem:separating}

In this section, we prove \cref{lem:separating}. Let $b\in\bbR$ such that $x_1<b<y_1$ for all $x=(x_1, x_2)\in\mcX_0$ and $y = (y_1,y_2)\in\mcY$ and
\begin{align*}
\eta = \min\{y_1-b, b-x_1|y\in\mcY, x\in\mcX_0\}.
\end{align*}
We note that such $b$ is well-defined and $\eta>0$ because $x_1<y_1$ for all $x\in\mcX_0$, $y\in\mcY$ and $\mcX_0, \mcY$ are compact.
Since $\sigma$ is squashable, for any compact set $\mcK$, there exists a $\sigma$ network $\rho:\bbR\to\bbR$ of width $1$ such that 
\begin{align*}
|\rho(x) - \step(x)|<\xi
\end{align*}
for all $x\in\mcK\setminus(-\eta, \eta)$.
Let 
$A = \begin{bmatrix}
1&0\\-r&1
\end{bmatrix}$. Then, one can easily observe that
$A^{-1} = 
\begin{bmatrix}
1&0\\r&1
\end{bmatrix}$.

We now define functions $f_1,f_2,f_3:\bbR^2\to\bbR^2$ as
\begin{align*}
f_1(x) = Ax, \quad f_2(x) = (\rho(x_1-b), \iota(x_2)),\quad f_3(x) = A^{-1}x    
\end{align*}
for all $x = (x_1, \cdots, x_n)\in\bbR^n$, respectively. We now define a function $f:\bbR^n\to\bbR^n$ as
\begin{align*}
    f(x) = (f_3\circ f_2\circ f_1)(x)
\end{align*}
for all $x\in\bbR^n$. Then, $f$ is a $(\sigma, \iota)$ network of width $2$ and
\begin{align}
f(x)_1 = \rho(x_1-b),\quad f(x)_2 = x_2+r(\rho(x_1-b)-x_1).\label{eq:pflem:separating}
\end{align}

We now show that our $f$ satisfies the properties listed in \cref{lem:separating}. One can easily observe that $f$ satisfies the fourth property of \cref{lem:separating}. Thus, we consider the first--third properties. From \cref{eq:pflem:separating}, we can classify the image regions corresponding to each input region.

We first consider $x\in\mcX_0$. Since $\mcX_0\subset\mcB_\xi(\mcH(e_1, 0))$ and $x_1<b-\eta$, we have $x_1\in(-\xi, \xi)$ and $\rho(x_1 - b)\in(0,\xi)$. Thus, it holds that $f(x)_1\in\mcB_\xi(\mcH(e_1, -1))$ and $|f(x)_2-x_2|<2r\xi$. We now consider $x\in\mcX_1$. Since $\mcX_1\subset\mcB_\xi(\mcH(e_1, -1))$ and $x_1>b+\eta$, we have $x_1\in(1-\xi, 1+\xi)$ and $\rho(x_1 - b)\in(1-\xi,1)$. Thus, $f(x)_1\in\mcB_\xi(\mcH(e_1, 0))$ and $|f(x)_2-x_2|<2r\xi$. Lastly, let $y\in\mcY$. Since $\mcY_1\subset\mcB_\xi(\mcH(e_1, 0))$ and $y_1>b+\eta$, we have $y_1\subset(-\xi, \xi)$ and $\rho(y_1-b)\in(1-\xi, 1)$. Thus, $f(y)_1\subset\mcB_\xi(\mcH(e_1,-1))$ and $|f(y)_2-(y_2+r)|<2r\xi$. Conclusively, $f$ satisfies all properties listed in \cref{lem:separating} and this completes the proof.

\end{document}